\definecolor{darkblue}{rgb}{0.0, 0.0, 0.5}
\DeclareMathOperator*{\argmax}{arg\,max}
\DeclareMathOperator*{\argmin}{arg\,min}
\def\floor#1{\lfloor #1 \rfloor}
\newcommand{\bu}{{\boldsymbol u}}
\newcommand{\bx}{{\boldsymbol x}}
\newcommand{\by}{{\boldsymbol y}}
\newcommand{\bX}{{\mathrm{X}}}
\newcommand{\bY}{{\mathrm{Y}}}
\newcommand{\bn}{\mathbf{n}}
\newcommand{\bbf}{{\boldsymbol f}}
\newcommand{\dd}{\mathcal{\dagger}}
\newcommand{\ea}{\end{array}}
\newcommand{\ee}{\end{equation}}
\newcommand{\bea}{\begin{eqnarray}}
\newcommand{\eea}{\end{eqnarray}}
\newcommand{\beaa}{\begin{eqnarray*}}
\newcommand{\eeaa}{\end{eqnarray*}}
\def\E{\mathbb{E}}
\def\bx{{\bf x}}
\def\by{{\bf y}}
\def\bz{{\bf z}}
\def\qed{ \hfill\qedsymbol}
\newcommand{\basa}{\begin{assumption}}
\newcommand{\easa}{\end{assumption}}
\newcommand{\bas}{\begin{assum}}
\newcommand{\eas}{\end{assum}}
\def\dd{\mathrm{d}}
\def\limP2{\,\mathop{\buildrel \Pi_2\over\longrightarrow\,}}
\def\1{{\bf 1}}
\def\by{{\bf y}}
\def\:{\!:\!}
\newtheorem{assump}{Assumption}
\newtheorem{remark}{Remark}
\newtheorem{theorem}{Theorem}
\newtheorem{lemma}{Lemma}
\newtheorem{proposition}{Proposition}
\newtheorem{assumption}{Assumption}
\title{Reflected Schr\"odinger Bridge \\
for Constrained Generative Modeling}
\author{{Wei Deng}\thanks{{Deng, Chen, Yang, and Du contributed equally to this work. Correspondence: \texttt{weideng056@gmail.com}}} ,\ {Yu Chen}$^*$ \\
{Machine Learning Research} \\
{Morgan Stanley} \\
\And
{Nicole Tianjiao Yang$^*$, Hengrong Du$^*$, Qi Feng} \\
{Department of Mathematics} \\
{\{Emory, Vanderbilt, Florida State\} University} \\
\And
{Ricky T. Q. Chen} \\
{Meta AI (FAIR)} \\
}
\begin{document}
\maketitle

\begin{abstract}
Diffusion models have become the go-to method for large-scale generative models in real-world applications. These applications often involve data distributions confined within bounded domains, typically requiring ad-hoc thresholding techniques for boundary enforcement. Reflected diffusion models \citep{reflected_diffusion_model} aim to enhance generalizability by generating the data distribution through a backward process governed by reflected Brownian motion. However, reflected diffusion models {may not easily} adapt to diverse domains {without the derivation of proper diffeomorphic mappings} and do not guarantee optimal transport properties. To overcome these limitations, we introduce the Reflected Schrödinger Bridge algorithm—an entropy-regularized optimal transport approach tailored for generating data within diverse bounded domains. We derive elegant reflected forward-backward stochastic differential equations with Neumann and Robin boundary conditions, extend divergence-based likelihood training to bounded domains, and explore natural connections to entropic optimal transport for the study of approximate linear convergence—a valuable insight for practical training. Our algorithm yields robust generative modeling in diverse domains, and its scalability is demonstrated in real-world constrained generative modeling through standard image benchmarks. 
\end{abstract}

\section{Introduction}

Iterative refinement is key to the unprecedented success of diffusion models. They exhibit statistical efficiency \citep{Koehler_Heckett_Risteski} and reduced dimensionality dependence \citep{dim_free_doucet}, driving innovation in image, audio, video, and molecule synthesis \citep{SGMS_beat_GAN, 
%DiffWave, 
imagen_video, 3d_generation, Gaussian_SB}. However, diffusion models do not inherently guarantee optimal transport properties \citep{Lavenant_Santambrogio_22} and often result in slow inference \citep{DDPM, Progressive_distillation, DPMsolver}. Furthermore, the consistent reliance on Gaussian priors imposes limitations on the application potential and sacrifices the efficiency when the data distribution significantly deviates from the Gaussian prior.

The predominant method for fast inference originates from the field of optimal transport (OT). Notably, the (static) iterative proportional fitting (IPF) algorithm \citep{Kullback_68, IPF_95} addresses this challenge by employing alternating projections onto each marginal distribution. This algorithm has showcased impressive performance in low-dimensional contexts \citep{Chen16, Pavon_CPAM_21, Caluya21}. In contrast, the Schr\"odinger bridge (SB) problem \citep{leonard_14} introduces a principled framework for the dynamic treatment of entropy-regularized optimal transport (EOT) \citep{Villani03, 
% OT_applied_math, 
Compute_OT}. Recent advances \citep{DSB, forward_backward_SDE} have pushed the frontier of IPFs to  {(ultra-)}high-dimensional generative models {using deep neural networks (DNNs)} and have generated straighter trajectories; {Additionally, SBs based on Gaussian process \citep{SBP_max_llk} demonstrates great promise in robustness and scalability; Bridge matching methods \citep{SB_matching, Peluchetti23} also offers promising alternatives for solving complex dynamic SB problems.}

Real-world data, such as pixel values in images, often exhibits bounded support. To address this challenge, a common practice involves the use of thresholding techniques \citep{DDPM} to guide the sampling process towards the intended domain of simple structures. \citet{reflected_diffusion_model} introduced reflected diffusion models that employ reflected Brownian motion on constrained domains such as hypercubes and simplex. However, constrained domains on general Euclidean space with optimal transport guarantee are still not well developed. Moreover, \citet{reflected_diffusion_model} relies on a uniform prior based on variance-exploding (VE) SDE to derive closed-form scores, and the popular variance-preserving (VP) SDE is not fully exploited.

\begin{figure}[H]
\centering
\label{checkboard}
% \vspace{-0.1in}
  {\includegraphics[scale=0.5]{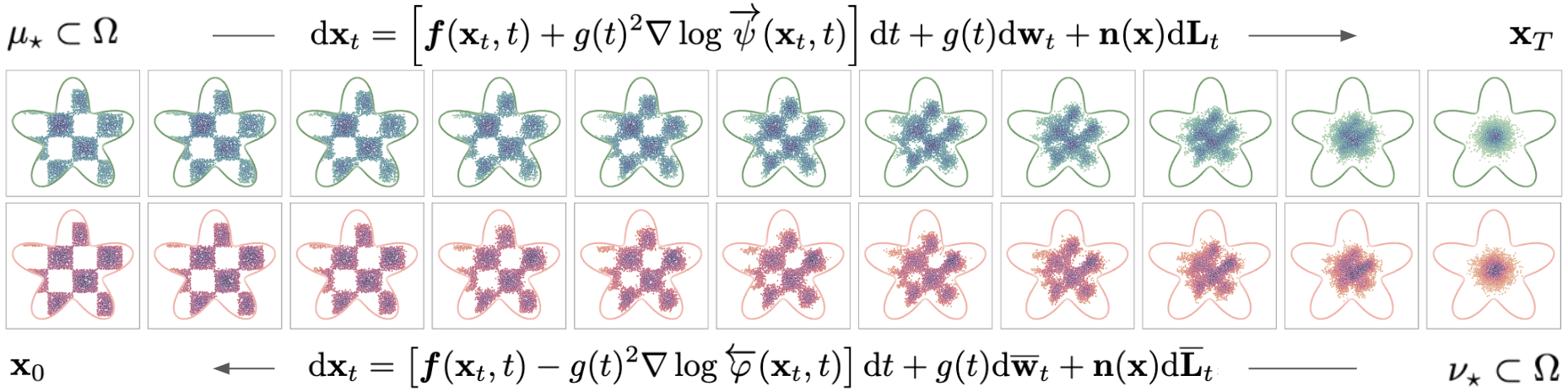}}
% \vspace{-0.15in}
\caption{Constrained generative modeling via reflected forward-backward SDEs.}
\end{figure}

To bridge this gap, we propose the \emph{Reflected Schr\"odinger Bridge} (SB) to model the transport between any smooth distributions with bounded support. We derive novel reflected forward-backward stochastic differential equations (reflected FB-SDEs) with Neumann and Robin boundary conditions and extend the divergence-based likelihood training to ensure its confinement within any smooth boundaries. We further establish connections between reflected FB-SDEs and EOT on bounded domains, where the latter facilitates the theoretical understanding by analyzing the convergence of the dual, potentials, and couplings on bounded domains. Notably, our analysis provides the first non-geometric approach to study the uniform-in-time stability w.r.t. the marginals and is noteworthy in its own right. We empirically validate our algorithm on 2D examples and standard image benchmarks, showcasing its promising performance in generative modeling over constrained domains. The flexible choices on the priors allow us to choose freely between VP-SDE and VE-SDE.

% Regarding \emph{related works} on constrained sampling and generation, we refer readers to Appendix \ref{related_works}.

% \vspace{-0.1in}
\section{Related Works}
\label{related_works}
% \paragraph{Constrained Sampling} \citet{sebastien_bubeck} studied the convergence of Langevin Monte Carlo on bounded domains and showed a polynomial sample time from a log-concave distribution, which is later extended to non-convex settings \citep{Andrew_Lamperski_21_COLT}. \citet{Sampling_Hamiltonian} demonstrated the constrained sampling with ill-conditioned and non-smooth distributions using Hamiltonian Monte Carlo. 

\paragraph{Constrained Sampling} \citet{sebastien_bubeck} studied the convergence of Langevin Monte Carlo within bounded domains. His work revealed a polynomial sample time for log-concave distributions, which is later extended to non-convex settings by \citet{Andrew_Lamperski_21_COLT}. %reflected_non_convex 
Furthermore, the exploration of constrained sampling in challenging scenarios with ill-conditioned and non-smooth distributions was explored by \citet{Sampling_Hamiltonian}, who leveraged Hamiltonian Monte Carlo techniques. Other constrained sampling works include proximal Langevin dynamics \citep{Proximal_LMC} and mirrored Langevin dynamics \citep{mirror_LMC}. 

\paragraph{Constrained Generation} \citet{Riemannian_score_generative, Riemannian_Diffusion} studied the extension of diffusion models on Riemannian manifolds, and the convergence is further analyzed by \citet{diffusion_manifold}. This groundwork subsequently motivated follow-up research, including implicit score-matching loss via log-barrier methods and reflected Brownian motion \citep{Diffusion_constrained} and Schr\"odinger bridge \citep{Riemannian_diffusion_SB} on the Riemannian manifold. Alternatively, drawing inspiration from the popular thresholding technique in real-world diffusion applications, \citet{reflected_diffusion_model} proposed to train explicit score-matching loss based on reflected Brownian motion, which demonstrated compelling empirical performance. Mirror diffusion models \citep{Mirror_Diffusion} studied constrained generation on convex sets and found interesting applications in watermarked generations. \citet{diffusion_bridge_qiang} employed Doob's h-transform to learn diffusion bridges on various constrained domains. The study of reflected Schr\"odinger bridge was initiated by \citet{Caluya_reflected_SB} in the control community and has shown remarkable performance in low-dimensional problems.

% \vspace{-0.1in}
\section{Preliminaries} \label{sec:likelihood_training}
 
Diffusion models \citep{score_sde} have achieved tremendous progress in real-world applications, such as image and text-to-image generation. However, real-world data (such as the bounded pixel space in images) often comes with bounded support. As an illustration within the computer vision field, practitioners often employ ad-hoc thresholding techniques to project the data to the desired space, which inevitably affects the theoretical understanding and hinders future updates. 

To generalize these techniques in a principled framework, \citet{reflected_diffusion_model} utilized reflected Brownian motion to train explicit score-matching loss in bounded domains. They first perturb the data with a sequence of noise and then propose to generate the constrained data distribution through the corresponding reflected backward process \citep{reversal_reflected_BM, Cattiaux_1988}. 
\begin{subequations}\label{vanilla_diffusion_model}
\begin{align}
\dd \bx_t &= \bbf(\bx_t, t) \dd t+g(t) \dd \bm{\mathrm{w}}_t + \dd \mathbf{L}_t, \qquad\qquad\qquad\qquad\quad \ \ \ \bx_0\sim p_{\text {data}}\subset \Omega \label{SGM-SDE-f}\\
\dd \bx_t&={\left[\bbf(\bx_t, t)-g(t)^2 \nabla \log p_t\left(\bx_t\right)\right] \dd t+g(t) \dd \overline{\bm{\mathrm{w}}}_t} + \dd \overline{\mathbf{L}}_t, \ \bx_T\sim p_{\text {prior}}\subset \Omega\label{SGM-SDE-b} 
\end{align}
\end{subequations}
where $\Omega$ is the state space in $\mathbb{R}^d$; $\bbf\left(\bx_t, t\right)$ and $g(t)$ are the vector field and the diffusion term, respectively; $\bm{\mathrm{w}}_t$ is the Brownian motion; $\overline{\bm{\mathrm{w}}}_t$ is another independent Brownian motion from time $T$ to $0$; $\mathbf{L}_t$ and $\overline{\mathbf{L}}_t$ are the local time to confine the particle within the domain and are defined in Eq.\eqref{local-time}; the marginal density at time $t$ for the forward process \eqref{SGM-SDE-f} is denoted by $p_t$. $\nabla \log p_t\left(\cdot\right)$ is the score function at time $t$, which is often approximated by a neural network model $s_{\theta}(\cdot, t)$. Given proper score approximations, the data distribution $p_{\text{data}}$ can be generated from the backward process \eqref{SGM-SDE-b}. 

% \begin{figure}[!ht]
%   \centering
%   \vskip -0.3in
%   \subfigure[Flower]{\includegraphics[scale=0.3]{}} 
%   \subfigure[Heart]{\includegraphics[scale=0.3]{}}
%   \subfigure[Star]{\includegraphics[scale=0.3]{}}
%   \subfigure[Cross]{\includegraphics[scale=0.3]{}}
%   \subfigure[Octagon]{\includegraphics[scale=0.3]{}}
%     \vskip -0.15in
%   \caption{Reflected Ornstein-Uhlenbeck (OU) process (\textbf{\textcolor{red}{reflected}} v.s. \textcolor{teal}{unconstrained}), driven by the same Brownian motion, excluding the reflections. All boundaries are defined to be differentiable.}\label{reflected_Langevin}
%   \vspace{-1em}
% \end{figure}

% \nicole{shall we also compare the reflected with the projection on boundary menthod } Wei: will add more demos in the future when we have time.

% \vspace{-0.1in}
\section{Reflected Schr\"{o}dinger bridge}
\label{rSB_}
Although reflected diffusion models have demonstrated empirical success in image applications on hypercubes, extensions to general domains with optimal-transport guarantee remain limited \citep{Lavenant_Santambrogio_22}. Notably, the forward process \eqref{SGM-SDE-f} requires a long time $T$ to approach the prior distribution, which inevitably leads to a slow inference \citep{DSB}. To solve that problem, the dynamic SB problem on a bounded domain $\Omega$ proposes to solve 
\begin{align}\label{dynamic_SBP}
    &\inf_{\mathbb{P}\in \mathcal{D}(\mu_{\star}, \nu_{\star})}\text{KL}(\mathbb{P}\|\mathbb{Q}), 
\end{align}
where the coupling $\mathbb{P}$ belongs to the path space $\mathcal{D}(\mu_{\star}, \nu_{\star})\subset C(\Omega, [0, T])$ with marginal measures $\mu_{\star}$ at time $t=0$ and $\nu_{\star}$ at $t=T$; $\mathbb{Q}$ is the prior path measure, such as the measure induced by the path of the reflected Brownian motion or Ornstein-Uhlenbeck (OU) process. From the perspective of stochastic control, the dynamical SBP aims to minimize the cost along the reflected process 
\begin{align}
    &\qquad\quad\quad\inf_{\bu\in \mathcal{U}} \E\bigg\{\int_0^T \frac{1}{2}\|\bu(\bx_t, t)\|^2_2 \dd t \bigg\} \notag\\
    \text{s.t.} \  &{\dd \bx_t=\left[\bbf(\bx_t, t)+g(t)\bu(\bx_t, t)\right]\dd t+ \sqrt{2\varepsilon}g(t) \dd \bm{\mathrm{w}}_t}+\bn(\bx_t)\dd \mathbf{L}_t, \label{control_diffusion_main}\\
    & \quad \bx_0\sim \mu_{\star},\ \  \bx_T\sim \nu_{\star}, \ \ \bx_t\in \Omega, \ \ \text{ for any } t\in [0, T] \notag %\\
    % &\ \ \ \ \bx_0\sim \mu_{\star}(\cdot),\ \  \bx_T\sim \nu_{\star}(\cdot)\notag
    % ,
\end{align}
where $\mathcal{U}$ is a set of control functions; $\varepsilon$ is the entropic regularizer for EOT; $\bn(\bx)$ is an inner unit normal vector at $\bx\in \partial \Omega$ and  $\bm{0}$ for $\bx \in {\Omega}$;  the expectation follows from the density $\rho(\bx, t)$. \textcolor{black}{Simulation demos of the reflected SDEs are shown in Figure \ref{reflected_Langevin}.}

To derive the reflected FB-SDEs and training scheme, we first present standard assumptions on the regularity properties \citep{oksendal2003stochastic}, as well as the smoothness of measure \citep{Sitan_22_sampling_is_easy, forward_backward_SDE} and boundary \citep{Andrew_Lamperski_21_COLT}:

\begin{assump}[Regularity on drift and diffusion]\label{ass:regularity}
    The drift $\bbf$ and diffusion term $g>0$ satisfy the Lipschitz and linear growth condition.
\end{assump}

\begin{assump}[Smooth boundary]\label{ass:smooth_boundary}
    The domain $\Omega$ is bounded and has a smooth boundary.
\end{assump}
% Regarding domains with countable corners, the probability of hitting these corners is 0. 
Extensions to general convex domains (with corners) are also studied in \citet{Andrew_Lamperski_21_COLT}. %For non-convex domains with corners, we didn't observe any empirical challenges given properly-defined unit vectors.

\begin{assump}[Smooth measure] \label{ass:smooth_measure}
The probability measures $\mu_{\star}$ and $\nu_{\star}$ are smooth in the sense that the energy functions $U_{\star}=-\nabla \log \frac{\dd\mu_{\star}}{\dd \bx}$ and $V_{\star}=-\nabla \log \frac{\dd\nu_{\star}}{\dd \bx}$ are differentiable. 
\end{assump}

\subsection{Reflected forward-backward stochastic differential equations}

Following the tradition in mechanics \citep{Pavliotis14}, we rewrite the reflected SBP as follows
\begin{align}
    &\inf_{\bu\in \mathcal{U}} \int_0^T \int_{\Omega} \frac{1}{2} \rho \|\bu\|^2_2\dd \bx \dd t \notag \\
     \text{s.t.} &\ \ \frac{\partial \rho}{\partial t}+\nabla \cdot  \mathbf{J}|_{\bx\in\Omega}=0, \ \ \big\langle \mathbf{J},  \bn \big\rangle|_{\bx \in\partial \Omega}=0,  \label{FKP_eqn_main}
\end{align}
where $\mathbf{J}$ is the probability flux of continuity equation  $\mathbf{J}\equiv \rho (\bbf+g \bu)-\varepsilon g^2 \nabla \rho$ \citep{Pavliotis14}.

% , \rho \nicole{\rho \in ?} \Wei{not needed}

We next solve the objectives with a Lagrangian multiplier: $\phi(\bx, t)$. Applying the Stokes theorem with details presented in appendix \ref{rFB-SDE_derive}, we have
\begin{align}
\footnotesize
    \mathcal{L}(\rho, \bu, \phi)&=\underbrace{\int_0^T\int_{\Omega} \bigg(\frac{1}{2}\rho\|\bu\|^2_2\ - \rho\frac{\partial \phi}{\partial t}-\langle \nabla \phi, \mathbf{J}\rangle \bigg)\dd \bx \dd t}_{\overline{\mathcal{L}}(\rho, \bu, \phi)} + \underbrace{\int_{\Omega} \phi \rho |_{t=0}^T \dd \bx}_{\text{constant term w.r.t. $\bu$}} + \underbrace{\int_0^T \int_{\partial \Omega} \big\langle \mathbf{J}, \bn\big\rangle     \dd \sigma(\bx) \dd t}_{:=0 \text{ by Eq.} \eqref{FKP_eqn_main}}.\notag
\end{align}

Minimizing $\mathcal{L}$ with respect to $\bu$, we can obtain $\bu^{\star}=g \nabla\phi$.  Further applying the Cole-Hopf transform $\overrightarrow\psi(\bx, t)=\exp\big(\frac{\phi(\bx, t)}{2\varepsilon}\big)$ and setting $\overline{\mathcal{L}}(\rho, \bu^{\star}, \phi)=0$, we derive the \emph{backward Kolmogorov equation} with \emph{Neumann boundary} conditions
\begin{align*}
\begin{cases}
&\frac{\partial \overrightarrow\psi}{\partial t}+\varepsilon g^2\Delta\overrightarrow\psi + \langle \nabla \overrightarrow\psi, \bbf \rangle=0 \qquad \text{   in }\Omega\\
&\langle \nabla\overrightarrow\psi, \bn\rangle=0  \qquad\qquad\qquad\qquad\quad \text{   on }\partial\Omega. 
\end{cases}
\end{align*}
% \nicole{$\nabla \overrightarrow\psi$} \Wei{great catch, thanks!}
Next we define $\overleftarrow\varphi=\rho^{\star}/\overrightarrow\psi$, where $\rho^{\star}$ is the optimal density of Eq.\eqref{control_diffusion_main} given $\bu^{\star}$. We arrive at the \emph{forward Kolmogorov equation} with the \emph{Robin boundary} condition
\begin{align*}
\begin{cases}
&\partial_t \overleftarrow \varphi +\nabla\cdot \big(\overleftarrow \varphi \bbf - \varepsilon g^2 \nabla \overleftarrow \varphi \big)=0 \qquad \text{ in }\Omega\\
&\langle \overleftarrow \varphi \bbf - \varepsilon g^2 \nabla \overleftarrow \varphi, \bn\rangle =0 \ \qquad\qquad\quad\ \text{ on } \partial \Omega.
\end{cases}
\end{align*}

Despite the elegance, solving PDEs in high dimensions often poses significant challenges due to the curse of dimensionality \citep{Han19}. To overcome these challenges, we resort to presenting a set of reflected FB-SDEs:

\begin{theorem}
Consider a \emph{Schr\"{o}dinger (PDE) system} with Neumann and Robin boundary conditions
\begin{align}\label{PDE_optimal}
\footnotesize
\begin{cases}
\frac{\partial \overrightarrow\psi}{\partial t}+\langle \nabla \overrightarrow\psi, \bbf \rangle +\varepsilon g^2\Delta\overrightarrow\psi=0 \\[3pt]
\frac{\partial \overleftarrow\varphi}{\partial t}+\nabla\cdot (\overleftarrow\varphi \bbf)-\varepsilon g^2 \Delta \overleftarrow\varphi=0
\end{cases}
\text{s.t. } \ \big\langle \nabla\overrightarrow \psi,\bn \big\rangle|_{\bx\in\partial \Omega} =0,\big\langle  \bbf \overleftarrow\varphi- \varepsilon g^2 \nabla\overleftarrow\varphi,  \bn \big\rangle|_{\bx \in\partial \Omega}=0.
\end{align}
    
Solving the PDE system gives rise to the reflected FB-SDEs with $\bx_t\in\Omega$
\begin{subequations}
\begin{align}
\dd  \bx_t&=\left[\bbf(\bx_t, t) + 2\varepsilon g(t)^2\nabla\log\overrightarrow\psi(\bx_t, t)\right]\dd t+ \sqrt{2\varepsilon} g(t) \dd  \mathbf{w}_t+\bn(\bx)\dd {\mathbf{L}}_t, \ \  \bx_0\sim \mu_{\star}, \label{FB-SDE-f}\\
\dd  \bx_t&=\left[\bbf(\bx_t, t) - 2\varepsilon g(t)^2 \nabla\log\overleftarrow\varphi(\bx_t, t)\right]\dd t+ \sqrt{2\varepsilon} g(t) \dd  \overline{\mathbf{w}}_t+\bn(\bx)\dd \overline{\mathbf{L}}_t,\ \ \  \bx_T \sim \nu_{\star}.\label{FB-SDE-b}
\end{align}\label{FB-SDE}
\end{subequations}
The connection to the probability flow ODE is also studied and presented in section \ref{prob-flow-ode}.
\end{theorem}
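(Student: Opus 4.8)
The plan is to establish the equivalence between the Schr\"odinger PDE system \eqref{PDE_optimal} and the reflected FB-SDEs \eqref{FB-SDE} by verifying that the marginal densities of \eqref{FB-SDE-f}–\eqref{FB-SDE-b} satisfy the appropriate Fokker--Planck equations with the stated boundary conditions, and conversely that any solution of the PDE system yields a consistent pair of forward/backward SDEs. Concretely, I would first recall the factorization $\rho(\bx,t)=\overrightarrow\psi(\bx,t)\,\overleftarrow\varphi(\bx,t)$ coming from the Cole--Hopf/Lagrangian analysis already carried out above, where $\overrightarrow\psi$ solves the backward Kolmogorov equation with Neumann boundary condition and $\overleftarrow\varphi$ solves the forward Kolmogorov equation with Robin boundary condition. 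The optimal control is $\bu^\star=g\nabla\phi=2\varepsilon g\nabla\log\overrightarrow\psi$, so substituting into the controlled dynamics \eqref{control_diffusion_main} immediately produces the forward reflected SDE \eqref{FB-SDE-f}.

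Next I would verify that $\rho$ is indeed the marginal law of \eqref{FB-SDE-f}: plugging $\rho=\overrightarrow\psi\overleftarrow\varphi$ into the Fokker--Planck equation associated to \eqref{FB-SDE-f} and expanding $\partial_t(\overrightarrow\psi\overleftarrow\varphi)$, the cross terms cancel precisely because $\overrightarrow\psi$ and $\overleftarrow\varphi$ solve the two PDEs in \eqref{PDE_optimal}; a short computation with the product rule and the identity $\Delta(\overrightarrow\psi\overleftarrow\varphi)=\overrightarrow\psi\Delta\overleftarrow\varphi+2\langle\nabla\overrightarrow\psi,\nabla\overleftarrow\varphi\rangle+\overleftarrow\varphi\Delta\overrightarrow\psi$ does the job in the interior. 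For the boundary, I would check that the reflecting condition $\langle \mathbf J,\bn\rangle=0$ on $\partial\Omega$ for $\mathbf J=\rho(\bbf+2\varepsilon g^2\nabla\log\overrightarrow\psi)-\varepsilon g^2\nabla\rho$ follows by combining the Neumann condition $\langle\nabla\overrightarrow\psi,\bn\rangle=0$ with the Robin condition $\langle\bbf\overleftarrow\varphi-\varepsilon g^2\nabla\overleftarrow\varphi,\bn\rangle=0$: writing the flux as $\overrightarrow\psi(\bbf\overleftarrow\varphi-\varepsilon g^2\nabla\overleftarrow\varphi)+\varepsilon g^2\overleftarrow\varphi\nabla\overrightarrow\psi$ (after using $\nabla\rho=\overrightarrow\psi\nabla\overleftarrow\varphi+\overleftarrow\varphi\nabla\overrightarrow\psi$ and $2\varepsilon g^2\rho\nabla\log\overrightarrow\psi=2\varepsilon g^2\overleftarrow\varphi\nabla\overrightarrow\psi$), both summands vanish on $\partial\Omega$. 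The time-reversal part \eqref{FB-SDE-b} then follows from the reflected analogue of Nelson's / Haussmann--Pardoux reversal identity \citep{reversal_reflected_BM, Cattiaux_1988}: the backward drift is $\bbf+2\varepsilon g^2\nabla\log\overrightarrow\psi-2\varepsilon g^2\nabla\log\rho=\bbf-2\varepsilon g^2\nabla\log\overleftarrow\varphi$, and the reflected boundary term is preserved under reversal, while the endpoint law at $t=T$ is $\nu_\star$ by construction of the SB problem.

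I would organize the write-up as: (i) state the factorization and the two decoupled PDEs with their boundary conditions; (ii) derive $\bu^\star$ and hence \eqref{FB-SDE-f}; (iii) interior Fokker--Planck verification via the product-rule cancellation; (iv) boundary verification via the splitting of the flux into a Neumann-vanishing and a Robin-vanishing piece; (v) apply the reflected time-reversal formula to obtain \eqref{FB-SDE-b}, checking that the local-time term $\overline{\mathbf L}_t$ again enforces $\bx_t\in\Omega$. Existence/uniqueness of the reflected SDEs under Assumptions \ref{ass:regularity}–\ref{ass:smooth_measure} follows from Skorokhod-problem theory on smooth bounded domains.

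The main obstacle I anticipate is the rigorous justification of the reflected time-reversal in step (v): the classical Haussmann--Pardoux reversal is stated for diffusions on $\mathbb{R}^d$, and extending it to processes with a boundary local-time term requires care that the reflection term is ``self-dual'' under reversal and that the reversed process again solves a Skorokhod problem with the same inward normal $\bn$. I would handle this by invoking the reversal results for reflected diffusions of \citet{reversal_reflected_BM, Cattiaux_1988} and verifying their hypotheses (smoothness of $\partial\Omega$ from Assumption \ref{ass:smooth_boundary}, smoothness of the marginals from Assumption \ref{ass:smooth_measure}, and strict positivity of $\overrightarrow\psi,\overleftarrow\varphi$ away from $t=0,T$, which keeps $\nabla\log\overrightarrow\psi$ and $\nabla\log\overleftarrow\varphi$ well-defined on the support of $\rho$). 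A secondary subtlety is ensuring the boundary integral in the Lagrangian argument genuinely vanishes — this is where the Stokes-theorem computation in Appendix \ref{rFB-SDE_derive} is needed and should be cited.
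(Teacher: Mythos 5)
Your proposal is correct and follows essentially the same route as the paper's Appendix derivation: the Cole--Hopf/Lagrangian identification of $\bu^{\star}=2\varepsilon g\nabla\log\overrightarrow\psi$, the factorization $\rho^{\star}=\overrightarrow\psi\,\overleftarrow\varphi$ with the product-rule cancellation in the Fokker--Planck equation, and the time reversal of reflected diffusions via \citet{reversal_reflected_BM, Cattiaux_1988} using $\log\overrightarrow\psi+\log\overleftarrow\varphi=\log\rho^{\star}$. Your explicit splitting of the boundary flux $\mathbf J=\overrightarrow\psi\bigl(\overleftarrow\varphi\bbf-\varepsilon g^{2}\nabla\overleftarrow\varphi\bigr)+\varepsilon g^{2}\overleftarrow\varphi\nabla\overrightarrow\psi$ into a Robin-vanishing and a Neumann-vanishing piece is a slightly more detailed rendering of the step the paper dispatches by invoking Stokes' theorem, but it is the same argument.
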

% \vspace{-0.1in}

\begin{figure}[!ht]
  \centering
  % \vskip -0.2in
    {\includegraphics[scale=0.3]{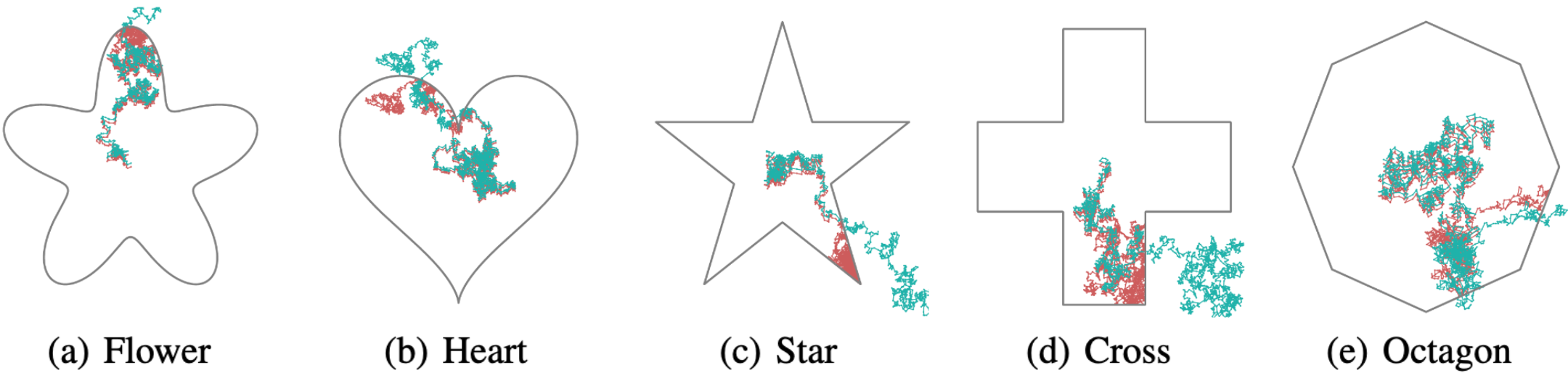}} 
    % \vskip -0.1in
  \caption{Reflected OU processes (\textbf{\textcolor{red}{reflected}} v.s. \textcolor{teal}{unconstrained}), driven by the same Brownian motion, excluding the reflections. All boundary curves have properly defined unit vectors.
  % \textcolor{blue}{this figure is not referred. }
  }\label{reflected_Langevin}
  % \vskip -0.05in
  % \vspace{-1.5em}
\end{figure}

% \vspace{-0.1in}

\subsection{Likelihood training}

It is worth mentioning that the reflected FB-SDE \eqref{FB-SDE} is not directly accessible due to the unknown control variables $(\nabla\log\overrightarrow\psi, \nabla\log\overleftarrow\varphi)$. To tackle this issue, a standard tool is the (nonlinear) Feynman-Kac formula \citep{Ma_FB_SDE, Karatzas_Shreve}, which leads to a stochastic representation.

\begin{proposition}[Feynman-Kac representation] Assume assumptions \ref{ass:regularity}-\ref{ass:smooth_boundary} hold. $\overleftarrow\varphi$ satisfies a PDE \eqref{PDE_optimal} and $\bx_t$ follows from a diffusion \eqref{FB-SDE-f}. Define $\overrightarrow y_t \equiv \overrightarrow y(\bx_t, t)= \log \overrightarrow\psi(\bx_t, t)$ and $\overleftarrow y_t\equiv \overleftarrow y(\bx_t, t)=\log \overleftarrow\varphi(\bx_t, t)$.
% Then the diffusion for $\overleftarrow y_t$ follows that
% \begin{align}\label{fb-sde-solver}
%     \dd \overleftarrow y_t=\bigg(\underbrace{\frac{1}{2} \|  \overleftarrow \bz_t\|_2^2  + \nabla \cdot \big( \overleftarrow g\bz_t - \bbf \big) + \langle  \overleftarrow \bz_t,  \overrightarrow \bz_t\rangle}_{\zeta(\bx_t, t)}\bigg)\dd t + \langle \overleftarrow \bz_t, \dd  \overline{\mathbf{w}}_t \rangle +\frac{1}{g}\big\langle  \overleftarrow \bz_t, \bn_t \rangle \dd \overline{\mathbf{L}}_t,
% \end{align}
% where $\overrightarrow\bz_t \equiv \overrightarrow\bz(\bx_t, t)=g \nabla \overrightarrow y_t$, $\overleftarrow \bz_t \equiv \overleftarrow \bz(\bx_t, t)=g \nabla \overleftarrow y_t$. 
Then $\overleftarrow y_s$ admits a stochastic representation 
\begin{align*}
    \overleftarrow y_s=\E\bigg[\overleftarrow y_T -\int_s^T \bigg(\underbrace{\frac{1}{2} \|  \overleftarrow \bz_t\|_2^2  + \nabla \cdot \big( \overleftarrow g\bz_t - \bbf \big) + \langle  \overleftarrow \bz_t,  \overrightarrow \bz_t\rangle\bigg)\dd t-\dd \overleftarrow{\mathbf{L}}_t}_{\zeta(\bx_t, t)} \bigg|\bx_s=\textbf{x}_s\bigg],
\end{align*}
on $\Omega\times[0, T]$; $\overrightarrow\bz_t \equiv \overrightarrow\bz(\bx_t, t)=g \nabla \overrightarrow y_t$, $\overleftarrow \bz_t \equiv \overleftarrow \bz(\bx_t, t)=g \nabla \overleftarrow y_t$, $\dd \overleftarrow{\mathbf{L}}_t=\frac{1}{g}\langle \overleftarrow\bz_t, \bn_t\rangle \dd {\mathbf{L}}_t.$
\end{proposition}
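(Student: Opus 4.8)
The plan is a nonlinear Feynman–Kac argument: apply the Itô formula for reflected diffusions to $\overleftarrow y_t=\log\overleftarrow\varphi(\bx_t,t)$ along the forward reflected SDE \eqref{FB-SDE-f}, use the parabolic equation for $\overleftarrow\varphi$ from \eqref{PDE_optimal} to eliminate the drift, recognize the surviving $\dd t$-integrand as $-\zeta(\bx_t,t)$ and the boundary correction as $\dd\overleftarrow{\mathbf{L}}_t$, and then take conditional expectations so that the stochastic-integral part drops out.

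First I would fix the regularity needed to apply Itô. Under Assumptions~\ref{ass:regularity}--\ref{ass:smooth_boundary} the Skorokhod problem on the bounded $C^\infty$ domain $\Omega$ is well posed, so \eqref{FB-SDE-f} has a unique strong solution with a continuous boundary local time $\mathbf{L}_t$; moreover $\overleftarrow\varphi$ solves a uniformly parabolic equation with smooth coefficients, hence $\overleftarrow\varphi\in C^{2,1}$ and, by the strong maximum principle / Hopf lemma, $\overleftarrow\varphi>0$ on $\Omega\times(0,T)$, so $\overleftarrow y=\log\overleftarrow\varphi$ is $C^{2,1}$ there (if needed one works on $[s+\delta,T-\delta]$ and lets $\delta\downarrow 0$, using continuity of $t\mapsto\overleftarrow y_t$). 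The reflected Itô formula applied to $\overleftarrow y(\bx_t,t)$, with drift $\bbf+2\varepsilon g^2\nabla\log\overrightarrow\psi$, diffusion $\sqrt{2\varepsilon}\,g$ and reflection $\bn(\bx)\dd\mathbf{L}_t$, then gives
\begin{align*}
\dd\overleftarrow y_t=\Big(\partial_t\overleftarrow y+\big\langle\nabla\overleftarrow y,\ \bbf+2\varepsilon g^2\nabla\log\overrightarrow\psi\big\rangle+\varepsilon g^2\Delta\overleftarrow y\Big)\,\dd t+\langle\nabla\overleftarrow y,\bn_t\rangle\,\dd\mathbf{L}_t+\sqrt{2\varepsilon}\,g\,\langle\nabla\overleftarrow y,\dd\mathbf{w}_t\rangle .
\end{align*}

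Next I would convert the second equation of \eqref{PDE_optimal} into an equation for $\overleftarrow y$: dividing $\partial_t\overleftarrow\varphi+\nabla\cdot(\overleftarrow\varphi\bbf)-\varepsilon g^2\Delta\overleftarrow\varphi=0$ by $\overleftarrow\varphi>0$ and using $\nabla\overleftarrow\varphi=\overleftarrow\varphi\,\nabla\overleftarrow y$ and $\Delta\overleftarrow\varphi=\overleftarrow\varphi(\Delta\overleftarrow y+\|\nabla\overleftarrow y\|_2^2)$ yields
\begin{align*}
\partial_t\overleftarrow y+\langle\nabla\overleftarrow y,\bbf\rangle+\nabla\cdot\bbf-\varepsilon g^2\Delta\overleftarrow y-\varepsilon g^2\|\nabla\overleftarrow y\|_2^2=0 .
\end{align*}
Substituting $\partial_t\overleftarrow y$ from this identity into the Itô expansion cancels $\langle\nabla\overleftarrow y,\bbf\rangle$ and one copy of $\varepsilon g^2\Delta\overleftarrow y$, leaving a $\dd t$-term built only from $\nabla\cdot\bbf$, $\Delta\overleftarrow y$, $\|\nabla\overleftarrow y\|_2^2$ and $\langle\nabla\overleftarrow y,\nabla\overrightarrow y\rangle$; rewriting these through $\overrightarrow\bz_t=g\nabla\overrightarrow y_t$ and $\overleftarrow\bz_t=g\nabla\overleftarrow y_t$ and keeping track of the $\varepsilon,g$ constants reproduces exactly $-\zeta(\bx_t,t)$. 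For the boundary term, since $\overleftarrow\bz_t=g\nabla\overleftarrow y_t$ one has $\langle\nabla\overleftarrow y_t,\bn_t\rangle\,\dd\mathbf{L}_t=\tfrac1g\langle\overleftarrow\bz_t,\bn_t\rangle\,\dd\mathbf{L}_t=\dd\overleftarrow{\mathbf{L}}_t$; note this does \emph{not} vanish, because in \eqref{PDE_optimal} only $\overrightarrow\psi$ obeys a Neumann condition whereas $\overleftarrow\varphi$ obeys the Robin (flux) condition, so $\langle\nabla\overleftarrow y,\bn\rangle$ is generally nonzero on $\partial\Omega$. Hence $\dd\overleftarrow y_t=-\zeta(\bx_t,t)\,\dd t+\dd\overleftarrow{\mathbf{L}}_t+\sqrt{2\varepsilon}\,g\,\langle\nabla\overleftarrow y_t,\dd\mathbf{w}_t\rangle$.

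Finally I would integrate from $s$ to $T$ with $\bx_s=\textbf{x}_s$ and take $\E[\,\cdot\mid\bx_s=\textbf{x}_s]$. On the bounded domain the integrand $\sqrt{2\varepsilon}\,g\,\nabla\overleftarrow y_t$ is bounded on every $[s+\delta,T-\delta]$, so $\int\langle\sqrt{2\varepsilon}\,g\,\nabla\overleftarrow y_t,\dd\mathbf{w}_t\rangle$ is a martingale there and, by a standard uniform-integrability/limiting argument under Assumptions~\ref{ass:regularity}--\ref{ass:smooth_boundary}, on $[s,T]$; its conditional expectation therefore vanishes, giving the stated representation of $\overleftarrow y_s$. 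I expect the main obstacle to be precisely this regularity and limiting bookkeeping — establishing $C^{2,1}$ smoothness and strict positivity of $\overleftarrow\varphi$ up to $t\in\{0,T\}$ and up to $\partial\Omega$, justifying that the reflected Itô formula with its local-time term is applicable, and checking that $\langle\nabla\overleftarrow y,\bn\rangle\,\dd\mathbf{L}_t$ is correctly retained as $\dd\overleftarrow{\mathbf{L}}_t$ rather than discarded; once these are in place the Cole–Hopf/logarithmic cancellation is routine algebra.
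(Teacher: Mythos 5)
Your strategy coincides with the paper's own (sketched) proof: apply the generalized It\^o formula for reflected diffusions to $\overleftarrow y_t=\log\overleftarrow\varphi(\bx_t,t)$ along \eqref{FB-SDE-f}, substitute the Fokker--Planck equation for $\overleftarrow\varphi$ from \eqref{PDE_optimal}, retain the local-time contribution, and remove the stochastic integral by conditioning. Your remark that $\langle\nabla\overleftarrow y,\bn\rangle$ does \emph{not} vanish on $\partial\Omega$ (because $\overleftarrow\varphi$ obeys a Robin, not Neumann, condition) is exactly why the $\dd\overleftarrow{\mathbf{L}}_t$ correction survives, and your regularity discussion is appropriate.

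The one genuine problem is a sign slip in the key identification. Substituting $\partial_t\overleftarrow y=-\nabla\cdot\bbf-\langle\nabla\overleftarrow y,\bbf\rangle+\varepsilon g^2\Delta\overleftarrow y+\varepsilon g^2\|\nabla\overleftarrow y\|_2^2$ into your It\^o expansion, the two Laplacian terms \emph{add} rather than cancel, and the surviving drift is
\begin{align*}
-\nabla\cdot\bbf+2\varepsilon g^2\Delta\overleftarrow y+\varepsilon g^2\|\nabla\overleftarrow y\|_2^2+2\varepsilon g^2\langle\nabla\overleftarrow y,\nabla\overrightarrow y\rangle,
\end{align*}
which, under the normalization $2\varepsilon=1$ that the proposition implicitly uses so that $\overleftarrow\bz_t=g\nabla\overleftarrow y_t$, equals $+\big(\tfrac12\|\overleftarrow\bz_t\|_2^2+\nabla\cdot(g\overleftarrow\bz_t-\bbf)+\langle\overleftarrow\bz_t,\overrightarrow\bz_t\rangle\big)$ --- \emph{plus} the bracketed integrand, not minus it as you assert. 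Consequently $\dd\overleftarrow y_t=\zeta\,\dd t+\dd\overleftarrow{\mathbf{L}}_t+(\text{martingale})$ with $\zeta$ denoting that integrand, and conditioning yields $\overleftarrow y_s=\E\big[\overleftarrow y_T-\int_s^T\zeta\,\dd t-\int_s^T\dd\overleftarrow{\mathbf{L}}_t\,\big|\,\bx_s=\textbf{x}_s\big]$: the local time carries the \emph{same} sign as the $\dd t$ term, consistent with the loss ${\mathcal{L}}(\textbf{x}_0;\omega)$ in Algorithm \ref{primal_dyanmic_IPF} (and indicating that the minus in front of $\dd\overleftarrow{\mathbf{L}}_t$ in the proposition's display is itself a typo). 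Your version, $\dd\overleftarrow y_t=-\zeta\,\dd t+\dd\overleftarrow{\mathbf{L}}_t+\cdots$, would integrate to $\overleftarrow y_s=\E[\overleftarrow y_T+\int_s^T\zeta\,\dd t-\int_s^T\dd\overleftarrow{\mathbf{L}}_t]$, which matches neither the correct identity nor the stated one. Redo this bookkeeping (and also drop the claim that one copy of $\varepsilon g^2\Delta\overleftarrow y$ cancels); the remainder of the argument is sound.
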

\begin{rawproof}  The proof primarily relies on Theorem 3 from \citet{forward_backward_SDE} and applies (generalized) It\^o's lemma to $\overleftarrow y_t$ using \eqref{PDE_optimal} and \eqref{FB-SDE-f}. The difference is to incorporate the generalized It\^o's lemma \citep{sebastien_bubeck, Andrew_Lamperski_21_COLT} to address the local time of $\bx_t$ at the boundary $\partial \Omega$. Subsequently, our analysis establishes that $\overleftarrow y_{s}-\int_{s_1}^{s} \zeta(\bx_t, t)$, where $s\in[s_1, T]$, is a martingale within the domain $\Omega$, which concludes our proposition.
    \qed
\end{rawproof}

A direct application of the proposition is to obtain the log-likelihood $\overleftarrow y_0$ given data points $\textbf{x}_0$. With parametrized models $(\overrightarrow\bz_t^{\theta}, \overleftarrow\bz_t^{\omega})$ to approximate $(\overrightarrow\bz_t, \overleftarrow\bz_t)$, we can optimize the backward score function $\overleftarrow\bz_t^{\omega}$ through the forward loss function ${\mathcal{L}}(\textbf{x}_0;\omega)$ in Algorithm \ref{primal_dyanmic_IPF}. Regarding the forward-score estimation, similar to Theorem 11 \citep{forward_backward_SDE}, the symmetric property of the reflected SB also enables to optimize $\overrightarrow\bz_t$ via the backward loss function ${\mathcal{L}}(\textbf{x}_T;\theta)$.

\begin{algorithm}[ht]
\caption{One iteration of the backward-forward score function solver to optimize $(\overrightarrow\bz_t^{\theta}, \overleftarrow\bz_t^{\omega})$ with the reflection implemented in Algorithm \ref{reflection_alg}. We cache the trajectories following \citet{DSB} to avoid expensive computational graphs. \textcolor{black}{In practice, $\E[\log \overleftarrow y_T]$ and $\E[\log \overrightarrow y_0]$ are often omitted to facilitate training \citep{forward_backward_SDE}.}}\label{primal_dyanmic_IPF}
    \begin{align*}
    % \footnotesize
    {\mathcal{L}}(\textbf{x}_0;\omega)&=-\int_0^T \E_{{\bx_t\sim \eqref{FB-SDE-f}}}\bigg[\bigg(\frac{1}{2}\|  \overleftarrow\bz^{\omega}_t \|_2^2 + g\nabla \cdot \overleftarrow \bz^{\omega}_t+ \langle \overrightarrow\bz^{\theta}_t, \overleftarrow\bz^{\omega}_t \rangle  \bigg)\dd t + \dd \overleftarrow{\mathbf{L}}^{\omega}_t \bigg|\bx_0=\textbf{x}_0\bigg]\\
    {\mathcal{L}}(\textbf{x}_T;\theta)&=-\int_0^T \E_{\bx_t\sim \eqref{FB-SDE-b}}\bigg[\bigg(\frac{1}{2}\|  \overrightarrow\bz^{\theta}_t \|_2^2 + g\nabla \cdot \overrightarrow \bz_t^{\theta}+ \langle \overleftarrow\bz_t^{\omega}, \overrightarrow\bz_t^{\theta} \rangle  \bigg)\dd t +\dd \overrightarrow{\mathbf{L}}_t^{\theta} \bigg|\bx_T=\textbf{x}_T\bigg],
\end{align*}
where $\dd \overleftarrow{\mathbf{L}}_t^{\omega}=\frac{1}{g}\langle \overleftarrow\bz_t^{\omega}, \bn_t\rangle \dd {\mathbf{L}}_t$ and $\dd \overrightarrow{\mathbf{L}}_t^{\theta}=\frac{1}{g}\langle \overrightarrow\bz_t^{\theta}, \bn_t\rangle \dd \overline{\mathbf{L}}_t$. \eqref{FB-SDE-f} (respectively, \eqref{FB-SDE-b}) is approximated via $\overrightarrow\bz_t^{\theta}$ (respectively, $\overleftarrow\bz_t^{\omega}$).
\end{algorithm}

By the data processing inequality, our loss function provides a lower bound of the log-likelihood, which resembles the evidence lower bound (ELBO) in variational inference  \citep{song_likelihood_training}. We can expect a smaller variational gap given more accurate parametrized models.

When the domain is taken to be $\Omega=\mathbb{R}^d$, the aforementioned solvers become equivalent to the loss function (18-19) presented in \citet{forward_backward_SDE}. 

\subsection{Connections to the IPF algorithm}\label{connections_to_IPF}

Similar in spirit to Theorem 3 of \citet{song_likelihood_training}, Algorithm \ref{primal_dyanmic_IPF} results in an elegant half-bridge solver ($\mu_{\star}\rightarrow\nu_{\star}$ v.s. $\mu_{\star}\leftarrow\nu_{\star}$) to approximate the primal formulation \citep{Nutz22_note} of the dynamic Schr\"odinger bridge \eqref{dynamic_SBP} \citep{DSB, SBP_max_llk}:

\begin{align}\label{dynamic_IPF_projection}
    {\textbf{Dynamic Primal IPF}}\quad &\mathbb{P}_{2k}=\argmin_{\mathbb{P}\in \mathcal{D}(\cdot, \nu_{\star})} \text{KL}(\mathbb{P}\|\mathbb{P}_{2k-1}),\quad \mathbb{P}_{2k+1}=\argmin_{\mathbb{P}\in \mathcal{D}(\mu_{\star}, \cdot)} \text{KL}(\mathbb{P}\|\mathbb{P}_{2k}),
\end{align}

which is also known as the dynamic IPF algorithm (also known as Sinkhorn algorithm) \citep{IPF_95, DSB}. Consider the disintegration of the path measure $\mathbb{P}=\pi\otimes \mathbb{P}^{\mu_{\star}, \nu_{\star}}$ %such that
\begin{align}
    \mathbb{P}(\cdot)=\iint_{\Omega^2} \mathbb{P}^{\textbf{x}_0, \textbf{x}_T}(\cdot)\pi(\dd\textbf{x}_0, \dd\textbf{x}_T)\label{bridge_representation},
\end{align}
where $\mathbb{P}^{\textbf{x}_0,\textbf{x}_T}\in \mathbb{P}^{\mu_{\star}, \nu_{\star}}$ is a diffusion bridge from ${\bx_0}=\textbf{x}_0$ to ${\bx_T}=\textbf{x}_T$, $\pi \in \Pi(\mu_{\star}, \nu_{\star})$ and the product space $\Pi(\mu_{\star}, \nu_{\star})\subset \Omega^2$ denotes the space of couplings with the first and second marginals following from $\mu_{\star}$ and $\nu_{\star}$, respectively. Now project the path space $\mathcal{D}$ to the product space $\Pi$. We have the static IPF algorithm in the primal formulation:
\begin{align}\label{staic_IPF_projections_}
    {\textbf{Static Primal IPF}}\quad \pi_{2k}=\argmin_{\pi\in \Pi(\cdot, \nu_{\star})} \text{KL}(\pi\|\pi_{2k-1}),\quad\pi_{2k+1}=\argmin_{\pi\in \Pi(\mu_{\star}, \cdot)} \text{KL}(\pi\|\pi_{2k}).
\end{align}

 \vspace{-0.1in}
\section{Convergence analysis via entropic optimal transport}

The dynamic IPF algorithm offers an efficient training scheme to fit marginals in high-dimensional problems. However, the understanding of the convergence remains unclear to the machine learning community. To get around this issue, we leverage the progress from the static optimal transport on bounded domains and costs \citep{Carlier_multi, Chen16_interpolation2, Deligiannidis_21}. 

Our analysis is illustrated as follows: We first draw connections between dynamic and static (primal) IPFs by projecting the path space $\mathcal{D}$ to the product space $\Pi$ and then show the equivalence between the dual and primal formulations. Next, we perturb the marginals (in terms of energy functions) and show the approximate linear convergence of the dual, potential, and then static couplings. The convergence of dynamic couplings can be expected given a reasonable estimate of diffusion bridge.
\begin{align*}
    \text{Dynamic Primal IPF}~\eqref{dynamic_IPF_projection} \xleftrightarrow[\text{Projection}]{\text{Disintegration}} \text{Static Primal IPF \eqref{staic_IPF_projections_}}  \xleftrightarrow[\text{Lemma } \ref{duality}]{\text{Equivalence} \eqref{equiv_primal_dual}} \text{Static Dual IPF \eqref{SB_eqn_IPF}}
\end{align*}

% The success of diffusion models has inspired many promising theoretical interpretations \citep{lee2022convergence, Sitan_22_sampling_is_easy, DSB, stat_efficiency_SGM}. However, sub-optimal transport \citep{Lavenant_Santambrogio_22} has limited the efficiency of diffusion models. By contrast, Schr\"{o}dinger bridge leverages optimal transport \citep{DSB, SBP_max_llk, gefei_21, forward_backward_SDE}, which accelerates the inference and also facilitates score estimations. %The connections of Schr\"{o}dinger bridge and entropic optimal transport are detailed hereinafter.

\subsection{Equivalence between Dynamic SBP and Static SBP}

Assuming the solutions exist, the disintegration of measures implies that the equivalence of solutions between the dynamic and static SBPs \citep{leonard_14}:
\begin{equation*}
\begin{split}\label{dynamic_static}
% \small
    \textbf{Dynamic SBP}\qquad \mathbb{P}_{\star}=\argmin_{\mathbb{P}\in \mathcal{D}(\mu_{\star}, \nu_{\star})} \text{KL}(\mathbb{P}\|\mathbb{Q}) \Longleftrightarrow \pi_{\star}=\argmin_{\pi\in \Pi(\mu_{\star}, \nu_{\star})} \text{KL}(\pi\|\mathcal{G}), \qquad \textbf{Static} 
\end{split}
\end{equation*}
% where $\pi\in \Pi(\mu_{\star}, \nu_{\star})$ and $\Pi$ is the space of couplings with marginals $\mu_{\star}$ and $\nu_{\star}$; $\mathcal{G}$ denotes a Gibbs measure $\dd\mathcal{G} \propto e^{-c_{\varepsilon}}\dd (\mu_{\star} \otimes \nu_{\star})$; $\otimes$ is the measure product and $c_{\varepsilon}(\bx, \by)$ is a loss function that models the transport cost between particles $\bx$ and $\by$; $\Omega$ are the parameter spaces; 
% the conditional probability of $\mathbb{P}$ (or $\mathbb{Q}$) given conditional information $\pi$ (or $\mathcal{G}$) is denoted by $\mathbb{P}_{\pi}$ (or $\mathbb{Q}_{\mathcal{G}}$) \citep{Valentin_slides}. Forcing $\mathbb{P}_{\pi}=\mathbb{Q}_{\mathcal{G}}$ yields the \emph{static} SBP with the optimal coupling $\pi_{\star}$: 
% \begin{equation}\label{static_SB}
%     \pi_{\star}=\argmin_{\pi\in \Pi(\mu_{\star}, \nu_{\star})} \text{KL}(\pi|\mathcal{G}),
% \end{equation}
where $\pi$ (respectively, $\mathcal{G}$) is the projection of the path measure $\mathbb{P}$ (respectively, $\mathbb{Q}$) on the product space at $t=0$ and $T$; $\mathrm{d}\mathcal{G} \propto e^{-c_{\varepsilon}}\mathrm{d} (\mu_{\star} \otimes \nu_{\star})$; $c_{\varepsilon}$ is a cost function. Both the dynamic and static SBP formulations yield structure properties (see the Born's formula in \citet{leonard_14}) and enables to represent Schr\"{o}dinger bridges $\mathbb{P}_{\star}$ and $\pi_{\star}$ using Schr\"{o}dinger potentials ${\varphi_{\star}}$ and $\psi_{\star}$:
\begin{align}
\label{main_solution_property}
    \textbf{Dynamic Struture} \quad \dd \mathbb{P}_{\star}=e^{{\varphi_{\star}}(\bx) +  \psi_{\star}(\by)} \dd \mathbb{Q} \Longleftrightarrow   \dd\pi_{\star}(\bx, \by)&=e^{{\varphi_{\star}}(\bx) +  \psi_{\star}(\by)}\dd \mathcal{G}.  \quad \textbf{Static}
\end{align} 
Moreover, the summation ${\varphi_{\star}}\oplus\psi_{\star}$ is unique such that $({\varphi_{\star}}+a)\oplus(\psi_{\star}-a)$ is also viable for any $a$.

This static structural representation establishes a connection between the static SBP and entropic optimal transport (EOT) with a unit entropy regularizer \citep{provably_schrodinger_bridge}, and the latter results in an efficient scheme to compute the optimal coupling: 
\begin{equation*}\label{EOT_problem_supp}
    \inf_{\pi\in \Pi(\mu_{\star}, \nu_{\star})} \iint_{\Omega^2}  c_{\varepsilon}(\bx, \by)\pi(\dd\bx, \dd\by) + \text{KL}(\pi\|\mu_{\star} \otimes \nu_{\star}).
\end{equation*}

\subsection{Duality for Schr\"{o}dinger bridges and Approximations}

The Schr\"{o}dinger bridge is a constrained optimization problem and possesses a computation-friendly dual formulation. Moreover, the duality gap is zero under probability measures \citep{leonard_01}.

\begin{lemma}[Duality \citep{Nutz22_note}] 
\label{duality} Given assumptions \ref{ass:regularity}-\ref{ass:smooth_measure}, 
% Assume the solutions exist. \nicole{what solution? Assume $\pi \in \Pi(..)$.. also in the entropic form if c is bounded, the solution exists, no need to assume then} \Wei{maybe we change it to given assumptions 1,2,3?} t
the dual via potentials $(\varphi, \psi)$ follows
\begin{equation}\label{convave_max}
    \min_{\pi\in \Pi(\mu_{\star}, \nu_{\star})} \text{\rm KL}(\pi|\mathcal{G})=\max_{\varphi, \psi} G(\varphi, \psi), \quad G(\varphi, \psi):=\mu_{\star}(\varphi) + \nu_{\star}(\psi) - \iint_{\Omega^2} e^{\varphi\oplus \psi} \dd \mathcal{G} + 1,
\end{equation}
where $\mu_{\star}(\varphi)=\int_{\Omega} \varphi \dd\mu_{\star}$, $\nu_{\star}(\psi)=\int_{\Omega} \psi \dd\nu_{\star}$, $\varphi\in L^1(\mu_{\star})$, and $\psi\in L^1(\nu_{\star})$. 
\end{lemma}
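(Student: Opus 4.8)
\textbf{Proof proposal for Lemma \ref{duality} (Duality).}

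The plan is to recognize \eqref{convave_max} as an instance of Fenchel--Rockafellar duality applied to the entropic optimal transport / static Schr\"odinger bridge problem $\min_{\pi\in\Pi(\mu_\star,\nu_\star)}\mathrm{KL}(\pi|\mathcal{G})$, and to verify the hypotheses that guarantee strong duality (zero duality gap). Concretely, I would write the marginal constraints $\pi\mapsto(P_1\pi,P_2\pi)=(\mu_\star,\nu_\star)$ as a linear map into $L^1(\mu_\star)^*\times L^1(\nu_\star)^*$-pairings, so that the primal objective is $F(\pi)+G_0(A\pi)$ with $F(\pi)=\mathrm{KL}(\pi|\mathcal{G})$ convex and lower semicontinuous and $G_0$ the indicator of the singleton $\{(\mu_\star,\nu_\star)\}$. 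The Legendre transform of relative entropy is the log-Laplace functional: for any bounded measurable $h$ on $\Omega^2$, $\sup_\pi\{\int h\,\dd\pi-\mathrm{KL}(\pi|\mathcal{G})\}=\log\iint e^{h}\dd\mathcal{G}$ (over probability measures $\pi$; here one uses that $\mathcal{G}$ is a probability measure, which follows from $\dd\mathcal{G}\propto e^{-c_\varepsilon}\dd(\mu_\star\otimes\nu_\star)$ being normalized, or one tracks the normalizing constant). Taking $h=\varphi\oplus\psi$ and carrying out the conjugation produces exactly $G(\varphi,\psi)=\mu_\star(\varphi)+\nu_\star(\psi)-\iint e^{\varphi\oplus\psi}\dd\mathcal{G}+1$, where the ``$+1$'' and the absence of an explicit ``$\log$'' come from linearizing $\log z\le z-1$ at the optimum (equivalently, from optimizing out an additive scalar in $\varphi$, the very degree of freedom noted after \eqref{main_solution_property}).

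The key steps, in order: (i) set up the primal problem over $\Pi(\mu_\star,\nu_\star)$ and record that under Assumption \ref{ass:smooth_measure} (and boundedness of $\Omega$, Assumption \ref{ass:smooth_boundary}) the cost $c_\varepsilon$ is bounded on $\Omega^2$, so $\mathcal{G}$ is equivalent to $\mu_\star\otimes\nu_\star$ with bounded density and $\mathrm{KL}(\mu_\star\otimes\nu_\star|\mathcal{G})<\infty$; (ii) exhibit a feasible point of finite cost (e.g. $\pi=\mu_\star\otimes\nu_\star$), giving the primal Slater-type qualification needed for strong duality; (iii) apply Fenchel--Rockafellar / the minimax theorem to interchange $\inf_\pi\sup_{\varphi,\psi}$ and conclude the values agree with no gap, citing \citet{leonard_01,Nutz22_note}; (iv) compute the inner supremum over $\pi$ explicitly via the entropy--log-Laplace duality to obtain the stated form of $G$; (v) argue the outer sup may be restricted to $\varphi\in L^1(\mu_\star)$, $\psi\in L^1(\nu_\star)$ rather than bounded functions, using that the Schr\"odinger potentials of \eqref{main_solution_property} are themselves in these spaces and attain the max, so enlarging the class does not change the supremum. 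Existence/attainment of the optimal $(\varphi_\star,\psi_\star)$ can be imported from the structural representation \eqref{main_solution_property} already asserted in the excerpt.

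The main obstacle I anticipate is the functional-analytic care in step (iii)--(v): relative entropy on an infinite-dimensional space of measures is only lsc for the right topology, the natural pairing is between $L^\infty$ (or $C(\Omega)$, using Assumption \ref{ass:smooth_boundary}) and measures, and one must justify that the supremum over bounded potentials equals the supremum over the larger $L^1$ classes and that it is attained — this is exactly where integrability of $c_\varepsilon$ and compactness of $\Omega$ are used, and where one leans on the cited results of \citet{leonard_01,Nutz22_note} rather than reproving them. The entropy conjugation in step (iv) and the Slater point in step (ii) are routine once the setup is fixed; the bookkeeping of the additive constant (the ``$+1$'' and the gauge freedom $a$) is a minor but easy-to-bungle detail that I would handle by fixing the normalization $\iint e^{\varphi\oplus\psi}\dd\mathcal{G}=1$ at the optimum and then relaxing it.
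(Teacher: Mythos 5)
The paper does not actually prove Lemma \ref{duality}: it is imported wholesale from \citet{Nutz22_note} (with the zero-duality-gap statement attributed to \citet{leonard_01}), so there is no in-paper argument to match your sketch against. Your Fenchel--Rockafellar route is a correct and standard way to establish \eqref{convave_max}, and your handling of the additive gauge (the ``$+1$'' versus the ``$\log$'' form of the dual, reconciled by optimizing over the shift $a$ in $(\varphi+a,\psi-a)$) is right. Two comparative remarks. First, the proof in the cited source is more elementary than yours and sidesteps exactly the constraint-qualification issues you flag in step (iii): weak duality follows by applying the pointwise Young--Fenchel inequality $ab\le e^a+b\log b-b$ with $a=\varphi\oplus\psi$ and $b=\dd\pi/\dd\mathcal{G}$, integrating against $\mathcal{G}$, and using the marginal constraints to identify $\int(\varphi\oplus\psi)\,\dd\pi=\mu_{\star}(\varphi)+\nu_{\star}(\psi)$ --- this produces the ``$+1$'' form of $G$ directly, with no log to linearize; strong duality then follows not from a minimax interchange but from exhibiting the Schr\"odinger potentials $(\varphi_{\star},\psi_{\star})$ solving the Schr\"odinger system (classical under bounded $c_{\varepsilon}$ on a compact $\Omega^2$, which Assumptions \ref{ass:regularity}--\ref{ass:smooth_boundary} guarantee) and checking that equality holds in Young's inequality at $\dd\pi_{\star}=e^{\varphi_{\star}\oplus\psi_{\star}}\dd\mathcal{G}$. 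Second, if you do want the conjugation route of your step (iv), it is cleaner to take the Legendre transform of the \emph{generalized} relative entropy over all nonnegative finite measures, whose conjugate is $h\mapsto\iint(e^{h}-1)\,\dd\mathcal{G}$; this yields the ``$+1$'' in one step rather than via the Donsker--Varadhan formula over probability measures followed by $\log z\le z-1$. One caution on your step (v): importing attainment from \eqref{main_solution_property} is legitimate only because the paper asserts that structural representation independently (via L\'eonard's results); if \eqref{main_solution_property} were itself derived from the duality, your argument would be circular.
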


An effective solver is to maximize the dual $G$ via $\varphi_{k+1}=\argmax_{\varphi\in L^1(\mu_{\star})} G(\varphi, \psi_k)$ and $\psi_{k+1}=\argmax_{\psi\in L^1(\nu_{\star})} G(\varphi_{k+1}, \psi)$ alternatingly. From a geometric perspective, alternating maximization corresponds to alternating projections (detailed in Appendix \ref{dual_project_relation})
\begin{subequations}\label{sinkhorn_vs_marginal}
\begin{align}
\varphi_{k+1}=\argmax_{\varphi\in L^1(\mu_{\star})} G(\varphi, \psi_k)&\Longrightarrow \text{the first marginal of }\pi(\varphi_{k+1}, \psi_k) \text{\ is\ } \mu_{\star}, \label{sinkhorn_vs_marginal1}\\
\psi_{k+1}=\argmax_{\psi\in L^1(\nu_{\star})} G(\varphi_{k+1}, \psi)&\Longrightarrow \text{the second marginal of }\pi(\varphi_{k+1}, \psi_{k+1}) \text{\ is\ } \nu_{\star}. \label{sinkhorn_vs_marginal2}
\end{align}
\end{subequations}
% \begin{align}
%     \varphi_{k+1}=\argmax_{\varphi} G(\cdot, \psi_k)&\Longrightarrow \text{the first marginal of }\dd\pi(\varphi_{k+1}, \psi_k) \text{\ is\ } \mu_{\star}\label{sinkhorn_vs_marginal1}\\
%     \psi_{k+1}=\argmax_{\psi} G(\varphi_{k+1}, \cdot)&\Longrightarrow \text{the second marginal of }\dd\pi(\varphi_{k+1}, \psi_{k+1}) \text{\ is\ } \nu_{\star},\label{sinkhorn_vs_marginal2}
% \end{align}
% where fitting the marginals of the coupling $\pi$ alternatingly is the well-known iterative proportional fitting (IPF) algorithm  (also known as Sinkhorn algorithm) \citep{IPF_95}.

The marginal properties of the coupling implies the Schr\"{o}dinger equation \citep{Nutz_22_a} 
\begin{equation*}\label{SB_eqn}
       \varphi_{\star}(\bx)= -\log \int_{\Omega} e^{ \psi_{\star}(\by)-c_{\varepsilon}(\bx,\by)} \nu_{\star}(\dd\by),\quad \psi_{\star}(\by)=-\log \int_{\Omega} e^{ \varphi_{\star}(\bx)-c_{\varepsilon}(\bx,\by)} \mu_{\star}(\dd\bx).
\end{equation*}

Since the Schr\"{o}dinger potential functions $(\psi_{\star}, \varphi_{\star})$ are not known \emph{a priori}, the dual formulation of the static IPF algorithm was proposed to solve the alternating projections as follows:
\begin{equation}\label{SB_eqn_IPF}
     \small{\textbf{Static Dual IPF}}:\ \psi_{k}(\by)=-\log \int_{\Omega} e^{ \varphi_{k}(\bx)-c_{\varepsilon}(\bx,\by)} \mu_{\star}(\dd\bx),\  \varphi_{k+1}(\bx)= -\log \int_{\Omega} e^{ \psi_{k}(\by)-c_{\varepsilon}(\bx,\by)} \nu_{\star}(\dd\by).
\end{equation}
The equivalence between the primal IPF and dual IPF is further illustrated in Appendix \ref{equiv_primal_dual}.

However, given a limited computational budget, projecting to the ideal measure $\mu_{\star}$ (or $\nu_{\star}$) in Eq.\eqref{sinkhorn_vs_marginal} at each iteration may not be practical.  Instead, some close approximation $\mu_{\star, k+1}$ (or $\nu_{\star, k}$) is used at iteration $2k+1$ (or $2k$) via Gaussian processes \citep{SBP_max_llk} or neural networks \citep{DSB, forward_backward_SDE}. Therefore, one may resort to an approximate marginal that still achieves reasonable accuracy:
\begin{equation}
\begin{split}
\label{marginal_eqn}
    \mu_{2k+1}&=\mu_{\star, k+1}\approx \mu_{\star}, \quad \nu_{2k}=\nu_{\star, k}\approx \nu_{\star}.
\end{split}
\end{equation}
We refer to the IPF algorithm with approximate marginals as approximate IPF (aIPF) and present the static dual formulation of aIPF in Algorithm \ref{sinkhorn}. \begin{wrapfigure}{r}{0.4\textwidth}
   \begin{center}
   \vskip -0.2in
     \includegraphics[width=0.4\textwidth]{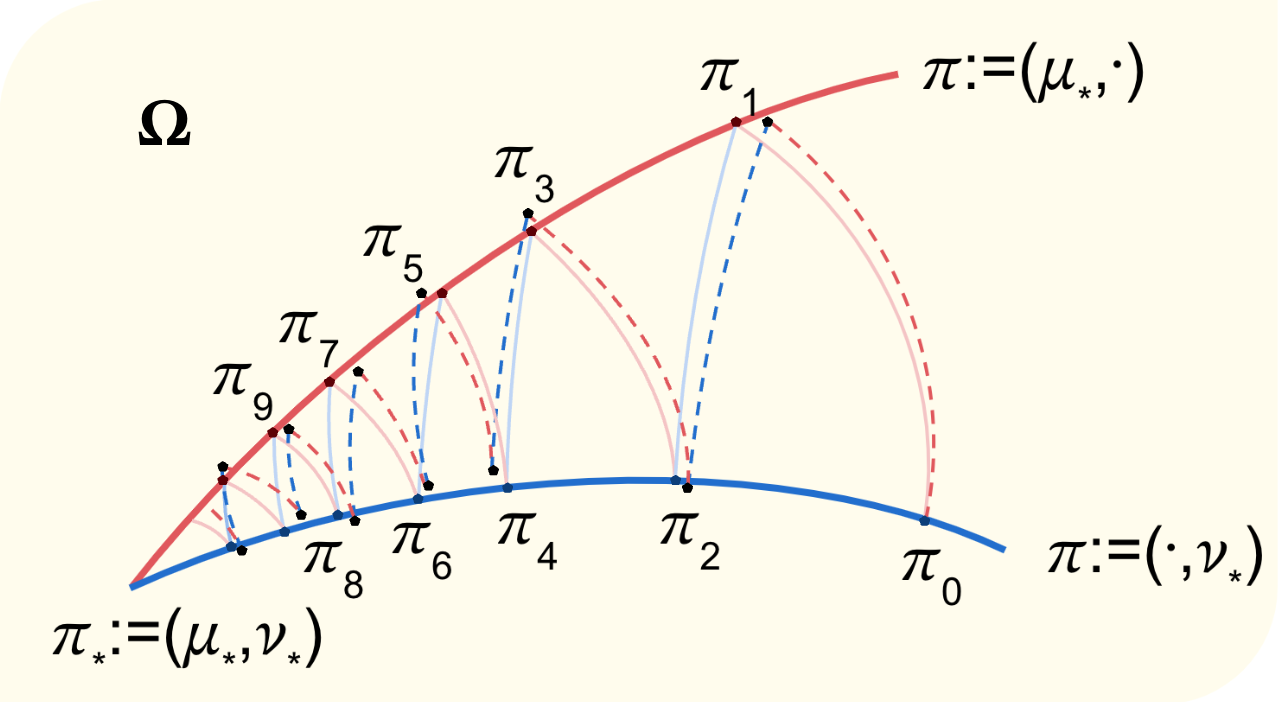}
   \end{center}
   \vskip -0.2in
   \caption{IPF v.s. aIPF. The approximate (or exact) projections are highlighted through the dotted (or solid) lines.}%The dynamic IPF also models the trajectories of the projections.}
   \vskip -0.25in
   \label{fig:sinkhorn}
\end{wrapfigure} The difference between IPF and aIPF is detailed in Figure \ref{fig:sinkhorn}.  
The structure representation \eqref{main_solution_property} can be naturally extended based on approximate marginals and is also studied by \citet{Deligiannidis_21}
\begin{equation}
\begin{split}
    \dd \pi_{2k}&=e^{ \varphi_k\oplus  \psi_k-c_{\varepsilon}}\dd(\mu_{\star, k}\otimes \nu_{\star, k}), \\
    \dd  \pi_{2k-1}&=e^{ \varphi_k\oplus  \psi_{k-1}-c_{\varepsilon}}\dd(\mu_{\star, k}\otimes \nu_{\star, k-1}),\label{approx_potential}
\end{split}
\end{equation}
% \begin{align}
% \end{align}
where $\pi_{k}$ is the approximate coupling at iteration $k$. By the structural properties in Eq.\eqref{main_solution_property}, the representation also applies to the dynamic settings, which involves the computation of the static IPF, followed by its integration with a diffusion bridge \citep{Nutz_quantization}.

$\newline$

\begin{algorithm}[ht]
\caption{One iteration of aIPF (static). The static coupling $\pi_k$ can be recovered by the structural representation in \eqref{approx_potential}; the dynamic coupling $\mathbb{P}_k=\iint_{\Omega^2} \mathbb{P}_{k}^{\textbf{x}_0,\textbf{x}_T}(\cdot)\pi_k(\textbf{x}_0, \textbf{x}_T)$ can be solved by further learning a diffusion bridge $\mathbb{P}_{k}^{\textbf{x}_0,\textbf{x}_T}$.
}\label{sinkhorn}
\begin{equation}\label{FB-SDE_alg}
    \psi_k(\by)=-\log \int_{\Omega} e^{ \varphi_k(\bx)-c_{\varepsilon}(\bx,\by)} \mu_{\star, k}(\dd\bx),\quad \varphi_{k+1}(\bx)= -\log \int_{\Omega} e^{ \psi_k(\by)-c_{\varepsilon}(\bx,\by)} \nu_{\star, k}(\dd\by).
\end{equation}
\end{algorithm}

\subsection{Convergence of Couplings with bounded domain}

% Our analysis relies on 

% The Schr\"{o}dinger bridge problem (SBP) has made significant progress both theoretically and empirically \citep{Nutz_22_a, Nutz_sinkhorn_order2, Nutz_SIAM_Math, DSB, gefei_21, forward_backward_SDE, SBP_max_llk}. In particular, % . To the best of our knowledge, the stability analysis of IPF is less explored in the literature except \citep{Deligiannidis_21} and \citep{Nutz_22_a}. \citep{Nutz_quantization}.

Despite the rich literature on the analysis of SBP within bounded domains \citep{Chen16_interpolation2}, most of them are not applicable to practical scenarios where exact marginals are not available. To fill this gap, we extend the linear convergence analysis with perturbed marginals. The key to our proof is the strong convexity of the dual \eqref{convave_max}. To quantify the convergence, similar to \citet{diffusion_manifold}, we introduce an additional assumption to control the perturbation of the marginals in the sense that:

\begin{assump}[Marginal perturbation]\label{ass:approx_score}
$U_k=\nabla \log \frac{\dd\mu_{\star,k}}{\dd \bx}$ and $V_k=\nabla \log \frac{\dd\nu_{\star, k}}{\dd \bx}$ are the approximate energy functions at the $k$-th iteration and are $\epsilon$-close to energy functions $U_{\star}$ and $V_{\star}$
\begin{equation*}
    \begin{split}
        \label{approximate_error_supp}
        \big\|U_k(\bx)-U_{\star}(\bx)\big\|_2 \leq \epsilon(1+\|\bx\|_2), \ \ \big\|V_k(\bx)-V_{\star}(\bx)\big\|_{2} &\leq \epsilon(1+\|\bx\|_2), \quad \forall \bx \in \Omega.
    \end{split}
    \end{equation*}
\end{assump}

Note that the Lipschitz cost function on $\Omega^2$ is also a standard assumption \citep{Deligiannidis_21}. It is not required here by Assumption \ref{ass:regularity}, which leads to a smooth transition kernel and cost function. 

Recall the connections between dynamic primal IPF and static dual IPF, we know $\epsilon$ mainly depends on the score-function $(\overrightarrow\bz_t^{\theta}, \overleftarrow\bz_t^{\omega})$ estimations \citep{song_likelihood_training} and numerical discretizations. More concrete connections between them will be left as future work. In addition, the errors in the two marginals don't have to be the same, and we use a unified $\epsilon$ mainly for analytical convenience. 

Moreover, we use the same domain $\Omega$ for both marginals to be consistent with our algorithm in Section \ref{rSB_}. The proof can be easily extended to different domains $\bX$ and $\bY$ for $\mu_{\star}$ and $\nu_{\star}$.

\paragraph{Approximately linear convergence and proof sketches} We first follow \citet{Carlier_multi, Nutz22_note, Marino_2020} to build a \emph{centered} aIPF algorithm in Algorithm \ref{center_sinkhorn} with scaled potential functions $\bar\varphi_k$ and $\bar\psi_k$ such that $\mu_{\star}(\bar\varphi_k)=0$. Since the summations of the potentials ${\varphi_{\star}}$ and $\psi_{\star}$ are unique by \eqref{main_solution_property}, the \emph{centering} operation doesn't change the dual objective but ensures that the aIPF iterates are uniformly bounded in Lemma \ref{upper_bound_of_varphi_psi} with the help of the decomposition
\begin{equation*}
    {\|\bar\varphi \oplus \bar\psi \|^2_{L^2(\mu_{\star}\otimes \nu_{\star})}=\|\bar\varphi \|^2_{L^2(\mu_{\star})}+\| \bar\psi \|^2_{L^2(\nu_{\star})} \quad \text{ if }\ \ \mu_{\star}(\bar\varphi)=0.}
\end{equation*}
How to ensure centering with perturbed marginals in Algorithm \ref{center_sinkhorn} is crucial and one major novelty in our proof. We next exploit the \emph{strong convexity} of the exponential function $e^x$ associated with the concave dual. 
%%% recover it if image data cannot reproduced.
% in a supporting Lemma \ref{derive_property} to show a key result $G(\bar \varphi_{\star}, \bar \psi_{\star})-G(\bar \varphi_k, \bar \psi_k)\lesssim \|\bar \varphi_{k+1}-\bar \varphi_{k}\|_{L^2(\mu_{\star})}^2+O(\epsilon)$ in Lemma \ref{main_theorem}, where $\lesssim$ denotes less than similar to. Together with the upper bound $\|\bar \varphi_{k+1}-\bar \varphi_{k}\|_{L^2(\mu_{\star})}^2 \lesssim G(\bar \varphi_{k+1}, \bar \psi_{k+1})-G(\bar \varphi_{k}, \bar \psi_{k})+ O(\epsilon)$ in Lemma \ref{iterate_bound}, we can derive the desired contraction properties for the dual objective with a contraction factor $\beta_{\varepsilon}\in (0, 1)$: $$G(\bar \varphi_{\star}, \bar \psi_{\star})-G(\bar \varphi_{k+1}, \bar \psi_{k+1})\leq \beta_{\varepsilon}\bigg(G(\bar \varphi_{\star}, \bar \psi_{\star})-G(\bar \varphi_{k}, \bar \psi_{k})\bigg) + O(\epsilon).$$
We obtain an auxiliary result regarding the convergence of the dual and the potentials.

\begin{lemma}[Convergence of the Dual and Potentials]\label{main_theorem}
Let $(\bar\varphi_k, \bar\psi_k)_{k\geq 0}$ be the iterates of a variant of Algorithm \ref{sinkhorn}. Given assumptions \ref{ass:regularity}-\ref{ass:approx_score} with small enough marginal perturbations $\epsilon$, we have
\begin{align*}
    G(\bar\varphi_{\star}, \bar\psi_{\star})-G(\bar\varphi_k, \bar\psi_k)&\lesssim (1-e^{-24 \|c_{\varepsilon}\|_{\infty}})^k  + 
 e^{24\|c_{\varepsilon}\|_{\infty}}\epsilon,\\
    \|\bar \varphi_{\star}-\bar \varphi_k\|_{L^2(\mu_{\star})}+\|\bar \psi_{\star}-\bar \psi_k\|_{L^2(\nu_{\star})} &\lesssim 
e^{3\|c_{\varepsilon}\|_{\infty}}(1-e^{-24 \|c_{\varepsilon}\|_{\infty}})^{k/2}+ e^{15\|c_{\varepsilon}\|_{\infty}}\epsilon^{1/2}.
\end{align*}
\end{lemma}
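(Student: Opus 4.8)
The plan is to establish the two bounds in Lemma~\ref{main_theorem} by combining a one-step contraction estimate for the exact Sinkhorn iteration with a perturbation control on the error introduced by replacing $\mu_\star,\nu_\star$ with their approximations $\mu_{\star,k},\nu_{\star,k}$. First I would work with the \emph{centered} iterates $(\bar\varphi_k,\bar\psi_k)$ produced by Algorithm~\ref{center_sinkhorn}, so that $\mu_\star(\bar\varphi_k)=0$; by Lemma~\ref{upper_bound_of_varphi_psi} these are uniformly bounded in $L^\infty$ (with bound controlled by $\|c_\varepsilon\|_\infty$), which I will use repeatedly to linearize exponentials. The key analytic input is the strong convexity of $x\mapsto e^x$ on the bounded range $[-M,M]$ where the relevant exponents live: this gives, for the concave dual $G$, an inequality of the form $G(\varphi_\star,\psi_\star)-G(\varphi,\psi)\gtrsim e^{-M}\bigl(\|\text{(first-marginal defect)}\|^2+\|\text{(second-marginal defect)}\|^2\bigr)$, relating the dual suboptimality to how far the current coupling's marginals are from the targets.

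Next I would derive the one-step decrease. For the \emph{exact} IPF, maximizing over $\varphi$ (resp. $\psi$) makes the corresponding marginal exact, so a Pythagoras-type identity (the $L^2$ decomposition $\|\bar\varphi\oplus\bar\psi\|^2=\|\bar\varphi\|^2+\|\bar\psi\|^2$ quoted before the lemma) plus the strong-convexity lower bound yields a geometric decay $G(\star)-G_{k+1}\le(1-e^{-cM})\,(G(\star)-G_k)$ with $c$ an absolute constant, accounting for the $(1-e^{-24\|c_\varepsilon\|_\infty})$ factor. Then I would treat the approximate step as a perturbation: the difference between the aIPF update using $\mu_{\star,k},\nu_{\star,k}$ and the exact update is controlled by Assumption~\ref{ass:approx_score} — since $U_k-U_\star$ and $V_k-V_\star$ are $\epsilon$-small (with linear growth, harmless on the bounded $\Omega$), the log-densities differ by $O(\epsilon)$ in a suitable norm, hence the transition kernels and the resulting potentials differ by $O(e^{O(M)}\epsilon)$. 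Iterating the recursion $a_{k+1}\le(1-e^{-cM})a_k+e^{O(M)}\epsilon$ with $a_k:=G(\star)-G_k$ gives the first displayed bound, the geometric term summing the contraction and the $\epsilon$-term saturating at the fixed-point level $e^{O(M)}\epsilon/(1-(1-e^{-cM}))=e^{O(M)}\epsilon$.

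Finally, for the potential bound I would feed the dual-suboptimality estimate back through strong convexity in the other direction: $\|\bar\varphi_\star-\bar\varphi_k\|_{L^2(\mu_\star)}^2+\|\bar\psi_\star-\bar\psi_k\|_{L^2(\nu_\star)}^2\lesssim e^{O(M)}\bigl(G(\star)-G_k\bigr)$, using that the Hessian of $G$ at the optimum is uniformly elliptic on the centered subspace (again because all exponents are bounded). Taking square roots and substituting the first bound produces the stated $(1-e^{-24\|c_\varepsilon\|_\infty})^{k/2}+e^{15\|c_\varepsilon\|_\infty}\epsilon^{1/2}$ form, with the explicit constants $3$ and $15$ coming from bookkeeping on the powers of $e^{\|c_\varepsilon\|_\infty}$ that appear in Lemma~\ref{upper_bound_of_varphi_psi} and in the strong-convexity constants. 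The main obstacle I anticipate is the perturbation analysis of the \emph{centering} step: shifting $\bar\varphi_k$ so that $\mu_\star(\bar\varphi_k)=0$ when one only has access to $\mu_{\star,k}$ means the natural normalization $\mu_{\star,k}(\varphi_k)=0$ must be transferred to the true marginal, introducing an extra $O(\epsilon)$ shift that has to be tracked so it does not destroy the contraction; making this compatible with the uniform $L^\infty$ bounds and with the $L^2$ Pythagoras identity (which requires centering against the \emph{exact} $\mu_\star$) is the delicate point, and is exactly the "one major novelty" flagged in the text.
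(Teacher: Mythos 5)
Your proposal is correct and follows essentially the same route as the paper: strong convexity of $e^x$ on the bounded range given by Lemma \ref{upper_bound_of_varphi_psi}, the centered iterates with the $L^2$ Pythagoras decomposition, a sufficient-decrease/gradient-domination argument yielding the recursion $\Delta_{k+1}\leq(1-\sigma^4)\Delta_k+O(\epsilon)$ with $\sigma=e^{-6\|c_{\varepsilon}\|_{\infty}}$, and the potential bound recovered from the same strong-convexity inequality. You also correctly single out the handling of the centering step under perturbed marginals (controlled in the paper via Lemma \ref{control_diff_measure}) as the delicate point.
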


Since the centering operation doesn't change the structure property \eqref{main_solution_property}, we are able to analyze the convergence of the static couplings. Motivated by Theorem 3 of \citet{Deligiannidis_21}, we exploit the structural property \eqref{main_solution_property} to estimate the $\mathbf{W}_1$ distance based on its dual formulation. 

\begin{theorem}[Convergence of Static Couplings]\label{coupling_convergence}
Given assumptions \ref{ass:regularity}-\ref{ass:approx_score} with small marginal perturbations $\epsilon$, the iterates of the couplings $(\pi_k)_{k\geq 0}$ in Algorithm \ref{sinkhorn} satisfy the following result
    \begin{align*}
        \mathbf{W}_1(\pi_k, \pi_{\star})\leq O(e^{9\|c_{\varepsilon}\|_{\infty}}(1-e^{-24 \|c_{\varepsilon}\|_{\infty}})^{k/2}+ e^{21\|c_{\varepsilon}\|_{\infty}}\epsilon^{1/2}).
    \end{align*}
\end{theorem}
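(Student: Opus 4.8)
The plan is to estimate $\mathbf{W}_1(\pi_k,\pi_\star)$ through its Kantorovich--Rubinstein dual representation $\mathbf{W}_1(\pi_k,\pi_\star)=\sup_{h}\int_{\Omega^2}h\,\dd(\pi_k-\pi_\star)$, the supremum running over $1$-Lipschitz $h$, which on the bounded set $\Omega^2$ may be normalized so that $\|h\|_\infty\le\diam(\Omega^2)$. Using the structural representations $\dd\pi_{2k}=e^{\bar\varphi_k\oplus\bar\psi_k-c_\varepsilon}\dd(\mu_{\star,k}\otimes\nu_{\star,k})$ (and its odd-iterate analogue) from \eqref{approx_potential}, together with $\dd\pi_\star=e^{\bar\varphi_\star\oplus\bar\psi_\star-c_\varepsilon}\dd(\mu_\star\otimes\nu_\star)$ --- all written with the \emph{centered} potentials, which by the uniqueness in \eqref{main_solution_property} leaves the couplings unchanged --- I introduce the hybrid measure $\tilde\pi_k$ carrying density $e^{\bar\varphi_k\oplus\bar\psi_k-c_\varepsilon}$ against the \emph{exact} product $\mu_\star\otimes\nu_\star$, and split
\[
\int h\,\dd(\pi_k-\pi_\star)=\underbrace{\int h\,\dd(\pi_k-\tilde\pi_k)}_{\text{base-measure error}}+\underbrace{\int h\,\dd(\tilde\pi_k-\pi_\star)}_{\text{potential error}}.
\]

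For the potential error the two densities share the reference measure $\mu_\star\otimes\nu_\star$, so $|e^{\bar\varphi_k\oplus\bar\psi_k-c_\varepsilon}-e^{\bar\varphi_\star\oplus\bar\psi_\star-c_\varepsilon}|\le e^{M}|\bar\varphi_k\oplus\bar\psi_k-\bar\varphi_\star\oplus\bar\psi_\star|$, where by the uniform bound on the centered iterates (Lemma \ref{upper_bound_of_varphi_psi}) and $0\le c_\varepsilon\le\|c_\varepsilon\|_\infty$ one gets $M=O(\|c_\varepsilon\|_\infty)$; integrating against $\mu_\star\otimes\nu_\star$ and applying Cauchy--Schwarz yields a bound of order $\|h\|_\infty e^{M}\big(\|\bar\varphi_\star-\bar\varphi_k\|_{L^2(\mu_\star)}+\|\bar\psi_\star-\bar\psi_k\|_{L^2(\nu_\star)}\big)$, which Lemma \ref{main_theorem} controls; absorbing the $e^{M}$ prefactor into the two terms $e^{3\|c_\varepsilon\|_\infty}(1-e^{-24\|c_\varepsilon\|_\infty})^{k/2}$ and $e^{15\|c_\varepsilon\|_\infty}\epsilon^{1/2}$ of that lemma reproduces the advertised $e^{9\|c_\varepsilon\|_\infty}$ and $e^{21\|c_\varepsilon\|_\infty}$ factors. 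For the base-measure error the integrand $h\,e^{\bar\varphi_k\oplus\bar\psi_k-c_\varepsilon}$ is bounded by $\|h\|_\infty e^{M}$, so this term is at most $\|h\|_\infty e^{M}\,\|\mu_{\star,k}\otimes\nu_{\star,k}-\mu_\star\otimes\nu_\star\|_{\mathrm{TV}}$; Assumption \ref{ass:approx_score} bounds $\nabla\log(\dd\mu_{\star,k}/\dd\bx)-\nabla\log(\dd\mu_\star/\dd\bx)$ by $\epsilon(1+\|\bx\|_2)$ on the bounded $\Omega$, and integrating this along segments, together with the resulting control on the normalizing constants, gives $\|\mu_{\star,k}-\mu_\star\|_{\mathrm{TV}}\lesssim\epsilon$ and hence (tensorization) $\|\mu_{\star,k}\otimes\nu_{\star,k}-\mu_\star\otimes\nu_\star\|_{\mathrm{TV}}\lesssim\epsilon$; since $\epsilon\le\epsilon^{1/2}$ and $M=O(\|c_\varepsilon\|_\infty)\le 21\|c_\varepsilon\|_\infty$, this contribution is absorbed into $e^{21\|c_\varepsilon\|_\infty}\epsilon^{1/2}$. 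Taking the supremum over $h$ then gives the claim.

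I expect the main obstacle to be twofold. First, one must establish --- with only approximately normalized, perturbed marginals --- that the centered iterates $\bar\varphi_k,\bar\psi_k$ (and the limit $\bar\varphi_\star,\bar\psi_\star$) remain uniformly bounded with a bound of order $\|c_\varepsilon\|_\infty$; this is exactly where the centering construction (Algorithm \ref{center_sinkhorn}) and Lemma \ref{upper_bound_of_varphi_psi} are indispensable, and it is what fixes the explicit exponential constants. Second, Assumption \ref{ass:approx_score} constrains \emph{scores} rather than densities or couplings, so passing from an $\epsilon$-bound on the difference of $\nabla\log$-densities over the bounded domain $\Omega$ to an $O(\epsilon)$ bound on $\|\mu_{\star,k}-\mu_\star\|_{\mathrm{TV}}$ --- carefully handling the difference of normalizing constants --- and then tensorizing to the product measure requires some care. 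Once both points are secured, the rest is the routine bookkeeping of multiplying the various $e^{c\|c_\varepsilon\|_\infty}$ prefactors and invoking Lemma \ref{main_theorem}.
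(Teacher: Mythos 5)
Your proposal is correct and follows essentially the same route as the paper: the same splitting into a ``potential error'' term (same reference measure $\mu_\star\otimes\nu_\star$, differing centered potentials, handled via the Lipschitz bound $e^{O(\|c_\varepsilon\|_\infty)}$ on the exponential together with the $L^2$ convergence of Lemma \ref{main_theorem}) and a ``base-measure error'' term (same potentials, perturbed versus exact product marginals, controlled to $O(\epsilon)$ via the score assumption exactly as in Lemma \ref{control_diff_measure}), followed by the Kantorovich--Rubinstein supremum. The constant bookkeeping ($e^{6\|c_\varepsilon\|_\infty}\cdot e^{3\|c_\varepsilon\|_\infty}=e^{9\|c_\varepsilon\|_\infty}$ and $e^{6\|c_\varepsilon\|_\infty}\cdot e^{15\|c_\varepsilon\|_\infty}=e^{21\|c_\varepsilon\|_\infty}$) also matches the paper's.
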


Such a result provides the worst-case guarantee on the convergence of the static couplings $\pi_k$. For example, to obtain a $\epsilon_{\star}$-$\mathbf{W}_1$ distance, we can run $\Omega(e^{24\|c_{\varepsilon}\|_{\infty}}(\|c_{\varepsilon}\|_{\infty}-\log(\epsilon_{\star}\wedge 1)))$ 
% \nicole{derivation?} \Wei{clarified the contraction rate, may not need to explain in details}
iterations to achieve the goal. Recall that $c_{\varepsilon}=c/\varepsilon$ \citep{provably_schrodinger_bridge}, a large entropic-regularizer $\varepsilon$ may be needed in practice to yield reasonable performance, which also leads to specific tuning guidance on $\varepsilon$.

Our proof employs a non-geometric method to show the uniform in time stability, w.r.t. the marginals. Unlike the elegant approach \citep{Deligiannidis_21} based on the Hilbert-Birkhoff projective metric \citep{Chen16_interpolation2}, ours does not require advanced tools and may be more friendly to readers.

Recall the bridge representation in Eq.\eqref{bridge_representation}, we have $\mathbf{W}_1(\pi_k\otimes \mathbb{P}_k^{\mu_{\star}, \nu_{\star}}, \pi_{\star}\otimes \mathbb{P}_{\star}^{\mu_{\star}, \nu_{\star}})\leq \mathbf{W}_1(\pi_k, \pi_{\star})+\mathbf{W}_1(\mathbb{P}_k^{\mu_{\star}, \nu_{\star}}, \mathbb{P}_{\star}^{\mu_{\star}, \nu_{\star}})$. Assume the same assumptions as in Theorem \ref{coupling_convergence}, we arrive at the final result:
\begin{proposition}[Convergence of Dynamic Couplings]\label{coupling_convergence_dynamic}
The iterates of the dynamic couplings $(\mathbb{P}_k)_{k\geq 0}$ in Algorithm \ref{primal_dyanmic_IPF} satisfy the following result
    \begin{align*}
        \mathbf{W}_1(\mathbb{P}_k, \mathbb{P}_{\star})\leq O(e^{9\|c_{\varepsilon}\|_{\infty}}(1-e^{-24 \|c_{\varepsilon}\|_{\infty}})^{k/2}+ e^{21\|c_{\varepsilon}\|_{\infty}}\epsilon^{1/2})+\mathbf{W}_1(\mathbb{P}_k^{\mu_{\star}, \nu_{\star}}, \mathbb{P}_{\star}^{\mu_{\star}, \nu_{\star}}).
    \end{align*}
\end{proposition}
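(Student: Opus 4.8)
The plan is to reduce the claim to the already-established convergence of the static couplings (Theorem~\ref{coupling_convergence}), combined with a gluing estimate that controls the path-space $\mathbf{W}_1$ distance by the endpoint $\mathbf{W}_1$ distance plus the diffusion-bridge estimation error. First I would invoke the disintegration \eqref{bridge_representation}: both dynamic couplings factor as $\mathbb{P}_k=\pi_k\otimes\mathbb{P}^{\mu_\star,\nu_\star}_k$ and $\mathbb{P}_{\star}=\pi_{\star}\otimes\mathbb{P}^{\mu_\star,\nu_\star}_{\star}$, where $\mathbb{P}^{\mu_\star,\nu_\star}_k$ and $\mathbb{P}^{\mu_\star,\nu_\star}_{\star}$ denote the (learned, resp. reference) families of endpoint-pinned diffusion bridges. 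Inserting the intermediate path measure $\pi_k\otimes\mathbb{P}^{\mu_\star,\nu_\star}_{\star}$ and using the triangle inequality for $\mathbf{W}_1$ on $\mathcal{P}(C(\Omega,[0,T]))$ gives
\[
\mathbf{W}_1(\mathbb{P}_k,\mathbb{P}_{\star})\le \mathbf{W}_1\big(\pi_k\otimes\mathbb{P}^{\mu_\star,\nu_\star}_{\star},\ \pi_{\star}\otimes\mathbb{P}^{\mu_\star,\nu_\star}_{\star}\big)+\mathbf{W}_1\big(\pi_k\otimes\mathbb{P}^{\mu_\star,\nu_\star}_k,\ \pi_k\otimes\mathbb{P}^{\mu_\star,\nu_\star}_{\star}\big).
\]

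Next I would bound the two terms separately. For the first, I would build a path coupling by drawing the two endpoint pairs $(\textbf{x}_0,\textbf{x}_T)$ and $(\textbf{x}_0',\textbf{x}_T')$ from the optimal $\mathbf{W}_1$-plan between $\pi_k$ and $\pi_{\star}$, and then running the \emph{same} bridge kernel $\mathbb{P}^{\mu_\star,\nu_\star}_{\star}$ on both endpoint pairs under a synchronous coupling (shared reflected Brownian motion). The comparison/non-expansiveness property of the reference OU (or reflected Brownian) bridge on the smooth bounded domain $\Omega$ — ultimately the $1$-Lipschitzness of the Skorokhod reflection map together with the affine dependence of the unreflected bridge on its endpoints — yields $\mathbf{W}_1\big(\pi_k\otimes\mathbb{P}^{\mu_\star,\nu_\star}_{\star},\pi_{\star}\otimes\mathbb{P}^{\mu_\star,\nu_\star}_{\star}\big)\lesssim\mathbf{W}_1(\pi_k,\pi_{\star})$. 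For the second term, conditioning on the shared endpoint law $\pi_k$ and taking the supremum over $(\textbf{x}_0,\textbf{x}_T)$ gives $\mathbf{W}_1\big(\pi_k\otimes\mathbb{P}^{\mu_\star,\nu_\star}_k,\pi_k\otimes\mathbb{P}^{\mu_\star,\nu_\star}_{\star}\big)\le \mathbf{W}_1\big(\mathbb{P}^{\mu_\star,\nu_\star}_k,\mathbb{P}^{\mu_\star,\nu_\star}_{\star}\big)$, which is precisely the diffusion-bridge estimation error retained as an input of the statement. Combining these two bounds reproduces the displayed inequality in the paragraph immediately preceding the proposition.

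Finally I would substitute the rate from Theorem~\ref{coupling_convergence}, namely $\mathbf{W}_1(\pi_k,\pi_{\star})\le O\big(e^{9\|c_{\varepsilon}\|_{\infty}}(1-e^{-24\|c_{\varepsilon}\|_{\infty}})^{k/2}+e^{21\|c_{\varepsilon}\|_{\infty}}\epsilon^{1/2}\big)$, into the first term and keep $\mathbf{W}_1(\mathbb{P}^{\mu_\star,\nu_\star}_k,\mathbb{P}^{\mu_\star,\nu_\star}_{\star})$ as a standalone additive term, obtaining the asserted bound. The main obstacle is the gluing step: showing that post-composition with the reference diffusion bridge on a bounded domain \emph{with reflection} is non-expansive (or at least Lipschitz) in the endpoints, since the reflecting boundary destroys the simple affine-in-endpoints structure of unconstrained OU bridges; this needs a reflection-coupling or Skorokhod-map stability argument under Assumptions~\ref{ass:regularity}--\ref{ass:smooth_boundary}. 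If instead one simply takes the triangle-type inequality stated just before the proposition as given, the proof collapses to the single substitution above.
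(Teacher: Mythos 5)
Your proposal matches the paper's argument: the paper simply asserts the triangle-type inequality $\mathbf{W}_1(\pi_k\otimes \mathbb{P}_k^{\mu_{\star}, \nu_{\star}}, \pi_{\star}\otimes \mathbb{P}_{\star}^{\mu_{\star}, \nu_{\star}})\leq \mathbf{W}_1(\pi_k, \pi_{\star})+\mathbf{W}_1(\mathbb{P}_k^{\mu_{\star}, \nu_{\star}}, \mathbb{P}_{\star}^{\mu_{\star}, \nu_{\star}})$ in the sentence immediately preceding the proposition and then substitutes the rate from Theorem~\ref{coupling_convergence}, which is exactly your final step. Your extra discussion of the gluing step --- in particular that bounding the first term by $\mathbf{W}_1(\pi_k,\pi_{\star})$ requires a non-expansiveness or Lipschitz-in-endpoints property of the reflected reference bridge, nontrivial because reflection destroys the affine endpoint dependence --- is in fact more careful than the paper, which leaves that inequality unjustified.
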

% \nicole{do we need the 'bridge' superscript? seems the same with $\mathbb{P}$ defined before} \Wei{modified the notation to be consistent}
The result paves the way for understanding the general convergence of the dynamic IPF algorithm by incorporating a proper approximation of the diffusion bridge \citep{Diffusion_bridge}.

% We acknowledge that our error is in the order of $O(e^{21\|c_{\varepsilon}\|_{\infty}}\epsilon^{1/2})$, which is slightly worse than the elegant geometric result $e^{17\|c_{\varepsilon}\|_{\infty}} \epsilon^{1/2}$ as provided in Theorem 3 \footnote[2]{Since Lemma \ref{control_diff_measure} implies that $\mathrm{W}_1(\mu_{\star}, \mu_{\star,k})\approx\mathrm{W}_1(\nu_{\star}, \nu_{\star,k})\leq O(\sqrt{\text{KL}(\mu_{\star}, \mu_{\star,k})}\leq O(\epsilon^{1/2})$.} 
% of \citet{Deligiannidis_21} based on the Hilbert-Birkhoff projective metric \citep{Chen16_interpolation2}. Nevertheless, our method still provides a user-friendly insight into a dual perspective without using advanced geometric techniques. 

% \Wei{We need a reguarlized plot to compare DDPM and SBP like this one:}

% \begin{wrapfigure}{r}{0.4\textwidth}
%    \begin{center}
%    \vskip -0.2in
%      \includegraphics[width=0.4\textwidth]{}
%    \end{center}
%    \vskip -0.1in
%    \caption{We need a plot like this one: regularized CNF plot.}
%    \label{fig:regularized_CNF}
% \end{wrapfigure}

% \begin{figure}
% \subfloat[fig 1]{\includegraphics[width = 3in]{}} 
% \subfloat[fig 2]{\includegraphics[width = 3in]{figures/snapshot4/moon-to-spiral_stage18-backward.pdf}}\\
% \caption{Add your own figures before compiling}
% \label{some example}
% \end{figure}

% \vspace{-0.1in}
\section{Empirical Simulations}

\subsection{Generation of 2D synthetic data}

We first employ the reflected SB algorithm to generate three synthetic examples: checkerboard and Gaussian mixtures from a Gaussian prior and spiral from a moon prior. The domains are defined to be flower, octagon, and heart, where all boundary points are defined to have proper unit-vectors. We follow \citet{forward_backward_SDE} and adopt a U-net to model $(\textcolor{red}{\overrightarrow\bz_t^{\theta}}, \textcolor{teal}{\overleftarrow\bz_t^{\omega}})$. %We include position encoding and train with a AdamW optimizer. 
We chose RVP-SDE as the base simulator from time $0$ to $T=1$, where the dynamics are discretized into 100 steps. % using the Euler-Maruyama scheme.

Our generated examples are presented in Figure \ref{checkboard} and \ref{rsb_3_domains}. We see that all the data are generated smoothly from the prior and the forward and backward process matches with each other elegantly. To the best of our knowledge, this is the first algorithm (with OT guarantees) that works on custom domains. Other related work, such as \citet{reflected_diffusion_model}, mainly focuses on hypercubes in computer vision. We also visualize the forward-backward policies $\textcolor{teal}{\overleftarrow \bz_t^{\omega}}$ 
 and $\textcolor{red}{\overrightarrow\bz_t^{\theta}}$ in Figure \ref{rsb_3_domains}. Our observations reveal that the forward vector fields $\textcolor{red}{\overrightarrow\bz_t^{\theta}}$ demonstrate substantial nonlinearity when compared to the linear forward policy in SGMs, and furthermore, the forward vector fields exhibit pronounced dissimilarity when compared to the backward vector fields $\textcolor{teal}{\overleftarrow \bz_t^{\omega}}$.

% \begin{figure}[!ht]
%   \centering
%     \subfigure{\includegraphics[scale=0.27]{}} 
%   \subfigure{\includegraphics[scale=0.27]{}}\\
%   \vspace{-0.19in}
%   \subfigure{\includegraphics[scale=0.27]{}} 
%   \subfigure{\includegraphics[scale=0.27]{}}\\
%   \vspace{-0.19in}
%   \subfigure{\includegraphics[scale=0.27]{}} 
%   \subfigure{\includegraphics[scale=0.27]{}}\\
%     \vskip -0.1in
%   \caption{Reflected Schr\"odinger bridge on three custom domains: flower, octagon, and heart.}\label{rsb_3_domains}
%   \vspace{-1em}
% \end{figure}

\begin{figure}[!ht]
  \centering
    \subfigure{\includegraphics[scale=0.4]{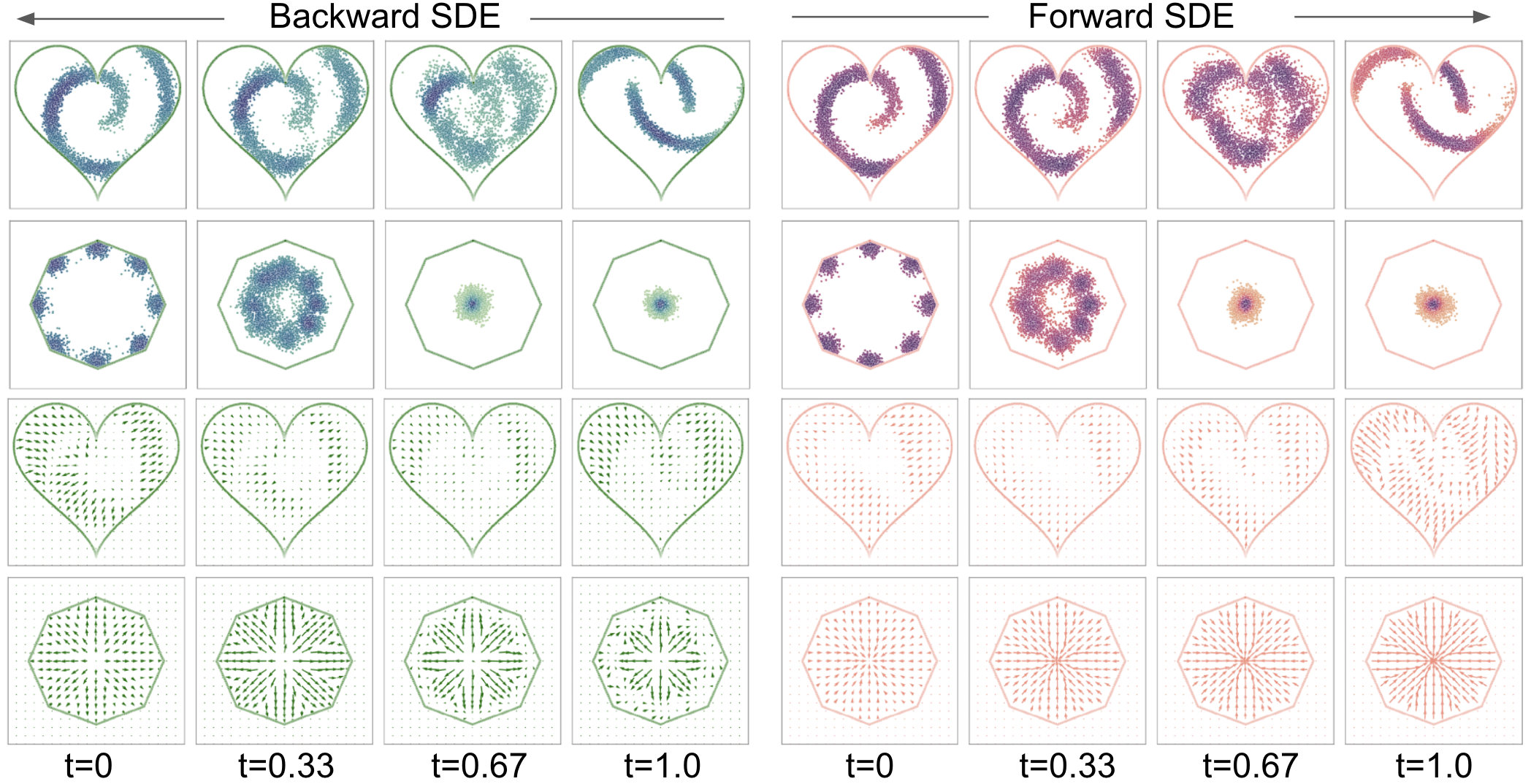}} 
    % \vskip -0.1in
  % \caption{Reflected Schr\"odinger bridge on three custom domains: flower, octagon, and heart.}\label{rsb_3_domains}
  \caption{Demo of generative samples (top) and vector fields (bottom) based on Reflected SB.}\label{rsb_3_domains}
  \vspace{-1em}
\end{figure}

% \begin{figure}[!ht]
%   \centering
%   \subfigure{\includegraphics[scale=0.27]{}} 
%   \subfigure{\includegraphics[scale=0.27]{}}\\
%   \vspace{-0.19in}
%   \subfigure{\includegraphics[scale=0.27]{}} 
%   \subfigure{\includegraphics[scale=0.27]{}}\\
%   \vspace{-0.19in}
%   \subfigure{\includegraphics[scale=0.27]{}} 
%   \subfigure{\includegraphics[scale=0.27]{}} \\
%     \vskip -0.1in
%   \caption{Vector field of reflected Schr\"odinger bridge on custom domains: flower, octagon, and heart.}\label{vetor_field}
%   \vspace{-1em}
% \end{figure}

\subsection{Generation of Image Data}
% \Wei{the other baselines appear to be also bounded (thresholding tricks), albeit not theoretically guaranteed}

% \Wei{may consider to compute negative log-likelihood}

% \Wei{include some snippet of real images in the main context}

We test our method on large-scale image datasets using CIFAR-10 and ImageNet 64$\times$64. As the RGB value is between $[0, 1]$, we naturally select the domain as $\Omega=[0,1]^d$, where $d=3\times 32 \times 32$ for the CIFAR-10 task and $d=3\times 64 \times 64$ for the ImageNet task. %Similar to the commonly used Variance Exploding SDE (VESDE) in unbounded SGMs and SB-SBSDE \citep{song_likelihood_training, forward_backward_SDE},
It is known that the SB system can be initialized with score-based generative models \citep{forward_backward_SDE} and the warm-up study for reflected SB is presented in Appendix \ref{how_to_init}. We choose RVE-SDE as the prior path measure. % with $\sigma_{\min}=0.01$ and $\sigma_{\max}=5$. % we don't have a formula with $\sigma$ yet.
The prior distribution of $\nu_{\star}$ is the uniform distribution on $\Omega$.
The SDE is discretized into 1000 steps.
In both scenarios, images are generated unconditionally, and the quality of the samples is evaluated using Frechet Inception Distance (FID)
%\citep{heusel2017gans} % try to use less references
over 50,000 samples.
The forward score function is modeled using U-net structure; 
% \citep{ronneberger2015u}; % try to use less references
the backward score function uses NCSN++ \citep{score_sde} for the CIFAR-10 task and ADM \citep{SGMS_beat_GAN} for the ImageNet task.
Details of the experiments are shown in Appendix \ref{appendix:experiment}.

\begin{wraptable}{r}{8.5cm}
\vspace{-0.5cm}
{\scriptsize
\begin{tabular}{lcccc} \\ 
CIFAR-10 & Constrained & OT & NLL & FID \\ \toprule  
MCSN++ \citep{score_sde} & No & No & 2.99 & 2.20 \\ \midrule
% LSGM \citep{vahdat2021score} & No & 2.10 \\ \midrule % comment out this one is mainly to reduce number of references (to make 3 pages of reference)
DDPM \citep{DDPM} & No & No & 3.75 & 3.17 \\ \midrule
SB-FBSDE \citep{forward_backward_SDE} & No & Yes & - & 3.01 \\ \midrule
Reflected SGM \citep{reflected_diffusion_model} & Yes & No & 2.68 & 2.72 \\ \midrule
\textbf{Ours} & Yes & Yes & 3.08 & 2.98 \\ \midrule
 &  \\
%%%%%%%%%%%%%%%%%%%%%%%%%%%%%%%%%%
ImageNet 64$\times$64 &  \\ \toprule  
PGMGAN \citep{armandpour2021partition} & No & No & -- & 21.73 \\ \midrule 
GLIDE \citep{li2022composing} & No & No & -- & 29.18 \\ \midrule
GRB \citep{park2022generative} & No & No & -- & 26.57 \\ \midrule 
\textbf{Ours} & Yes & Yes & 3.20 & 23.95 \\ \bottomrule
\end{tabular}%
\caption{Evaluation of generative models on image data. }
% \vspace{-0.1cm}
}
\label{tab:image_results}
\end{wraptable} 
We have included baselines for both constrained and unconstrained generative models and summarized the experimental results in Table 1. While our model may not surpass the state-of-the-art models, the minor improvement over the unconstrained SB-FBSDE \citep{forward_backward_SDE} underscores the effectiveness of the reflection operation. Moreover, the experiments verify the scalability of the reflected model and the training process is consistent with the findings in \citet{reflected_diffusion_model}, where the reflection in cube domains is easy to implement and the generation becomes more stable. Sample outputs are showcased in Figure \ref{fig_demo} (including MNIST), with additional figures available in Appendix \ref{appendix:experiment}. Notably, our generated samples exhibit diversity and are visually indistinguishable from real data.
\begin{figure}[!ht]
\centering
\subfigure{\includegraphics[width=\textwidth]{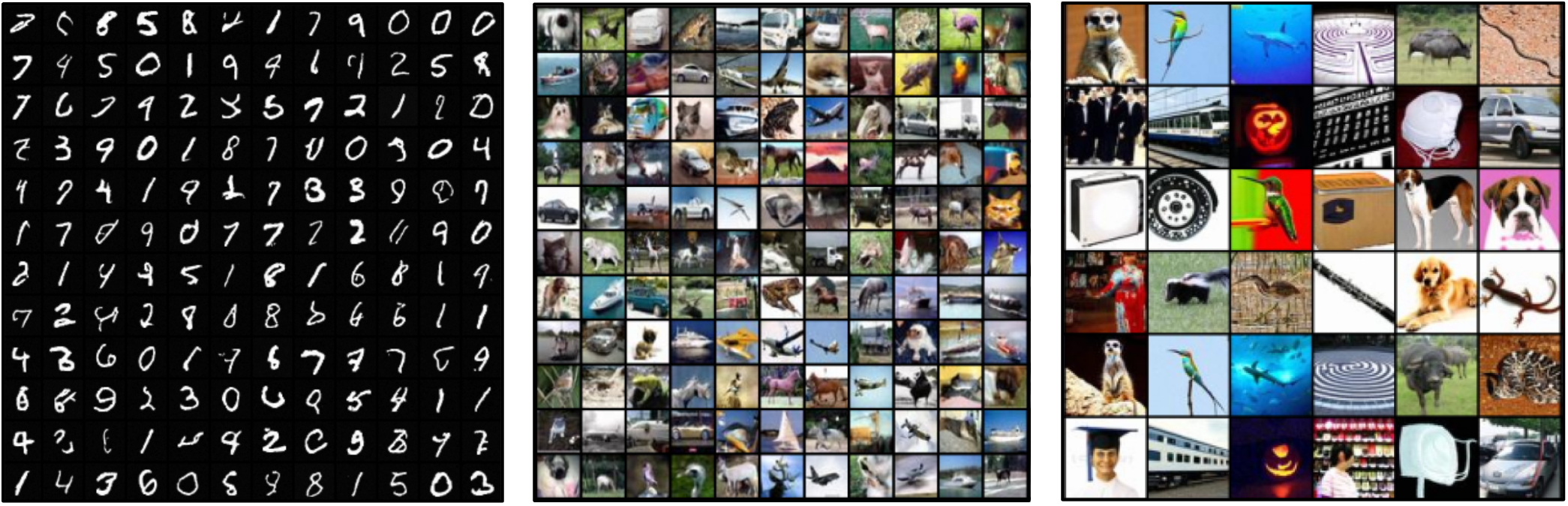}}
\vskip -0.12in
\caption{Samples via reflected SB on MNIST (left), CIFAR10 (middle), and ImageNet 64 (right).}
\label{fig_demo}
\end{figure}

\subsection{Generation in the Simplex Domain}
Alongside the irregular domains illustrated in Figure \ref{reflected_Langevin} and the hypercube for image generation, we implement the method on the high-dimensional \textit{projected simplex}. A $d$-projected simplex is defined as
$\bar\Delta_d := \{\boldsymbol{x} \in \mathbb{R}^d : \sum_i \boldsymbol{x}_i \leq 1, \boldsymbol{x}_i \geq 0 \} $.
Our method relies on reflected diffusion process instead of using diffeomorphic mapping (stick breaking) as in \cite{reflected_diffusion_model}.
As a comparison, we replicate the generative process using diffeomorphic mapping as well.

The data is created by collecting the image classification scores of Inception v3
from the last softmax layer with 1008 dimension. All the data fit into the projected simplex $\bar\Delta_{1008}$.
The Inception model is loaded from a pretrained checkpoint\footnote{
\url{https://github.com/mseitzer/pytorch-fid/releases/download/fid\_weights}}, and the classification task is performed on the $64\times64$ Imagenet validation dataset of 50,000 images.
The neural network of the score function is composed of 6 dense layers with 512 latent nodes.
In every diffusion step, we use the reflection operator described in Algorithm \ref{reflection_alg} to constrain the data within the projected simplex.
The alternative method is using stick breaking method to constrain the diffusion process. The transformation includes the mapping
$[f(\boldsymbol{x})]_i = \boldsymbol{x}_i \prod_{j=i+1}^d (1-\boldsymbol{x}_j)$
and the inverse mapping 
$[f^{-1}(\boldsymbol{y})]_i = \frac{\boldsymbol{y}_i}{1 - \sum_{j=i+1}^d \boldsymbol{y}_j}$. In every diffusion step, it first maps the data into an unit cube domain using reflection, then uses the forward transformation to map it within the projected simplex.

The results are shown in Figure \ref{fig:inception}.
We compare the generated distribution of the most likely classes.
The category index is in the same order of the pre-trained model's output.
The last plot in Figure \ref{fig:inception} compares the cumulative distribution of the ground truth and generated distribution, providing a cleaner view of the comparison.
The curve closely follows the diagonal in the CDF comparison, signifying a strong alignment between the true data distribution and the distribution derived from the generative model.
The result using diffeomorphic mapping is shown in Figure \ref{fig:inception}. 
By comparing the CDF comparison plots of two methods, the reflection based method outperforms the diffeomorphic based method, where the latter suffers from visible bias of the distribution due to the analytic blowups at edges/corners at edges/corners.

\begin{figure}[H]
\centering
\includegraphics[width=0.25\textwidth]{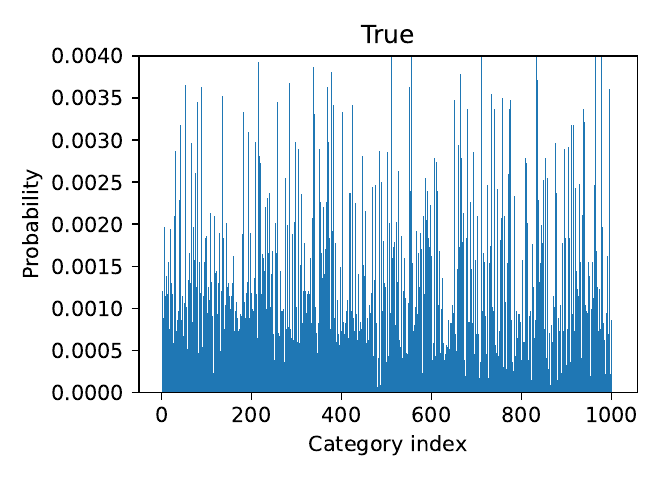}
\includegraphics[width=0.25\textwidth]{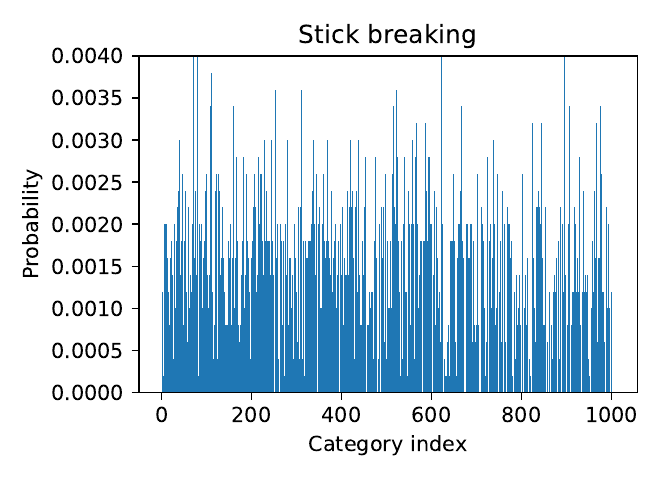}
\includegraphics[width=0.25\textwidth]{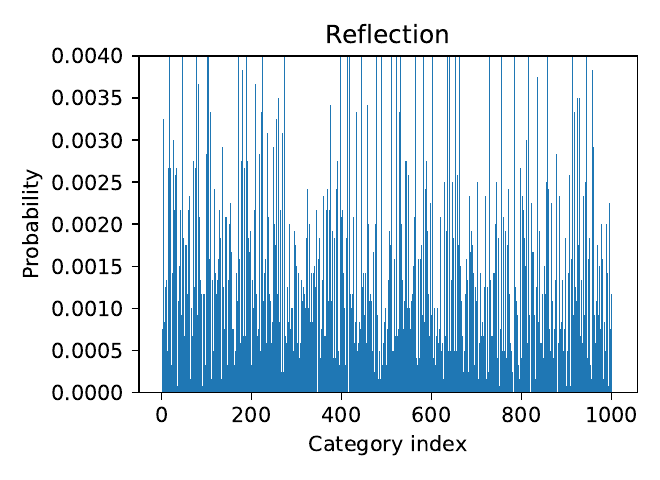}
\includegraphics[width=0.17\textwidth]{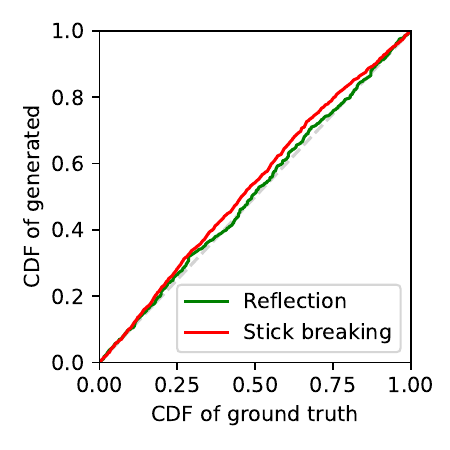}

\vskip -0.15in
\caption{Generations of high-dimensional projected simplex.
The results compare reflection based and stick breaking based methods.
}
\label{fig:inception}
\end{figure}

\vspace{0.1in}
\section{Conclusion}

% Thresholding techniques are widely used to tackle constrained generative modeling in real-world data. 

Reflected diffusion models, which are motivated by thresholding techniques, introduce explicit score-matching loss through reflected Brownian motion. Traditionally, these models are {applied} to hypercube-related domains {and necessitate specific diffeomorphic mappings for extension to other domains}. To enhance generality with optimal transport guarantees, we introduce the Reflected Schr\"odinger Bridge, which employs reflected forward-backward stochastic differential equations with Neumann and Robin boundary conditions. We establish connections between dynamic and static IPF algorithms in both primal and dual formulations. Additionally, we provide an approximate linear convergence analysis of the dual, potential, and couplings to deepen our understanding of the dynamic IPF algorithm. Empirically, our algorithm can be applied to any smooth domains using RVE-SDE and RVP-SDE. We evaluate its performance on 2D synthetic examples and standard image benchmarks, underscoring its competitiveness in constrained generative modeling. In future research, our focus includes integrating nonlinear diffusion based on importance sampling \citep{icsgld, awsgld} to further expedite the generation process.

\newpage

% \paragraph{Boarder impact} Schr\"{o}dinger bridge, as an optimal transport solver, holds the potential to greatly enhance the efficiency of generative models, thereby making a significant impact in this field. 

\bibliography{our, mybib} 
\bibliographystyle{plainnat}

\newpage

\paragraph{Notations} $\Omega$ is the bounded domain of interest, $\partial \Omega$ denotes the boundary, and $D$ is the radius of a ball centered at the origin that covers $\Omega$. $\nabla \cdot$ and $\nabla$ denote the divergence and gradient operator (with respect to $\bx$). $\varepsilon$ is the entropic regularizer; $\epsilon$ controls the perturbation of the marginals. $c_{\varepsilon}$ and $c$ are cost functions in EOT, and $c_{\varepsilon}=c/\varepsilon$.  $\psi_{\star}$ and ${\varphi_{\star}}$ are the Schr\"{o}dinger potentials; $\nabla \log \overrightarrow\psi(\bx_t, t)$ and $\nabla\log \overleftarrow\varphi(\bx_t, t)$ are the forward-backward score functions in reflected FB-SDE. $\mathcal{D}(\mu_{\star}, \nu_{\star})\subset C(\Omega, [0, T])$ is the path space with marginals $\mu_{\star}$ (data distribution) at time $t=0$ and $\nu_{\star}$ (prior distribution) at $t=T$, $\Pi(\mu_{\star}, \nu_{\star})\subset \Omega^2$ is the product space containing couplings with the first marginal $\mu_{\star}$ and second marginal $\nu_{\star}$.

\section{Reflected Forward-Backward SDE}

\subsection{From Reflected Schr\"{o}dinger Bridge to Reflected FB-SDE}

\label{rFB-SDE_derive}
We consider the stochastic control of the reflected SBP 
\begin{align}
    &\inf_{\bu\in \mathcal{U}} \E\bigg\{\int_0^T \frac{1}{2}\|\bu(\bx,t)\|^2_2 \dd t \bigg\} \notag\\
    \text{s.t.} &\ \ \dd  \bx_t=\left[\bbf(\bx, t)+g(t)\bu(\bx,t)\right]\dd t+\sqrt{2\varepsilon} g(t)\dd  \mathbf{w}_t + \bn(\bx)\dd \mathbf{L}_t  \label{reflected_control_diffusion}\\
    &\ \ \bx_0\sim  \mu_{\star} ,\ \  \bx_T\sim  \nu_{\star}, \ \ \bx_t \in\Omega \notag
    ,
\end{align}
where $\Omega$ is the state-space of $\bx$ and $\bu:\Omega\times [0,T]\rightarrow \mathbb{R}^d$ is the control variable in the space of $\mathcal{U}$; $\bbf: \Omega\times [0,T]\rightarrow \mathbb{R}^d$ is the vector field; $\mathbf{w}_t$ denotes the Brownian motion; The expectation is evaluated w.r.t the PDF $\rho(\bx, t)$ of \eqref{reflected_control_diffusion}; $\varepsilon$ is the diffusion term and also the entropic regularizer; $\mathbf{L}$ is the local time supported on $\{t\in [0, T]|\bx_t\in \partial \Omega\}$ and forces the particle to go back to ${\Omega}$. More precisely, $\mathbf{L}$ is a continuous non-decreasing process with $\mathbf{L}_0=0$ and it increases only when $\mathbf{x}_t$ hits the boundary $\partial \Omega$, that is, 
\begin{align}
\mathbf{L}_t=\int_0^t\mathbf{1}_{\{\mathbf{x}_s\in \partial \Omega\}}d\mathbf{L}_s. \label{local-time}
\end{align}

The existence and uniqueness of SDE \eqref{reflected_control_diffusion} can be addressed through the so-called \emph{Skorokhod problem} \citep{Skorokhod61} which amounts to finding the decomposition for any given continuous path $\mathbf{w}_t\in C(\mathbb{R}^d, [0, T])$, there exists a pair $(\mathbf{y}_t,\mathbf{L}_t)$ such that 
\begin{align}    \mathbf{w}_t=\mathbf{y}_t+\mathbf{L}_t,\notag
\end{align}
where $\mathbf{y}_t\in C({\Omega}, [0,T])$ and $\mathbf{L}_t$ satisfies \eqref{local-time} \citep{Lions_Sznitman_1984}.

Rewrite reflected SBP into a variational form \citep{Chen21}
\begin{align}
    &\inf_{\bu\in \mathcal{U}, \rho} \int_0^T \int_{\Omega} \frac{1}{2}\|\bu(\bx,t)\|^2_2 \rho(\bx, t)\dd \bx \dd t \label{variational_form_supp} \\
     \text{s.t.} &\ \ \frac{\partial \rho}{\partial t}+\nabla \cdot  \mathbf{J}|_{\bx\in\Omega}=0, \ \ \big\langle \mathbf{J},  \bn \big\rangle|_{\bx \in\partial \Omega}=0,  \label{FKP_eqn_supp}
\end{align}
where $\mathbf{J}$ is the probability flux of continuity equation \citep{Pavliotis14} given by %, a variant of Fokker Planck equation.
\begin{align}\label{prob_flux}
    \mathbf{J}\equiv \rho (\bbf+g \bu)-\varepsilon g^2 \nabla \rho.
\end{align}

Explore the Lagrangian of \eqref{variational_form_supp} and incorporate a multiplier: $\phi(\bx, t): \Omega\times [0,T]\rightarrow \mathbb{R}$
\begin{align}
\footnotesize
    \mathcal{L}(\rho, \bu, \phi)&=\int_0^T \int_{\Omega} \frac{1}{2}\|\bu\|^2_2\rho + \phi \bigg(\frac{\partial \rho}{\partial t}+\nabla \cdot \mathbf{J} \bigg) \dd \bx \dd t \label{un_contrained_eqs}\\
    &=\int_0^T \int_{\Omega} \bigg(\frac{1}{2}\rho\|\bu\|^2_2\ - \rho\frac{\partial \phi}{\partial t}-\langle \nabla \phi, \mathbf{J}\rangle \bigg)\dd \bx \dd t \notag + \underbrace{\int_{\Omega} \phi \rho |_{t=0}^T \dd \bx}_{\text{constant term}} + \underbrace{\int_0^T \int_{\partial \Omega} \big\langle \mathbf{J}, \bn\big\rangle     \dd \sigma(\bx) \dd t}_{:=0 \text{ by Eq.} \eqref{FKP_eqn_supp}},\notag
\end{align}
where the second equation follows by Stokes’ theorem.

Plugging \eqref{prob_flux} into \eqref{un_contrained_eqs} and ignoring constant terms, we have
\begin{align}
    \overline{\mathcal{L}}(\rho, \bu, \phi)=\int_0^T \int_{\Omega} \bigg(\frac{1}{2}\rho\|\bu\|^2_2\ - \rho\frac{\partial \phi}{\partial t}-\big\langle \nabla \phi, \rho (\bbf+g \bu)-\varepsilon g^2 \nabla \rho\big\rangle \bigg)\dd \bx \dd t.\label{un_contrained_eqs_v2}
\end{align}

The optimal control $\bu^{\star}$ follows by taking gradient with respect to $\bu$ 
\begin{equation}\label{score_primal_optimal}
    \bu^{\star}(\bx, t)=g(t)\nabla\phi(\bx, t).
\end{equation}

Plugging $\bu^{\star}(\bx, t)$ into \eqref{un_contrained_eqs_v2} and setting $\overline{\mathcal{L}}(\rho, \bu^{\star}, \phi)\equiv 0$, we apply integration by parts and derive
\begin{align*}
    0&=-\int_0^T \int_{\Omega} \bigg(\frac{1}{2} \rho g^2 \|\nabla \phi\|^2_2 \ + \rho\frac{\partial \phi}{\partial t}+\rho\big\langle \nabla \phi, \bbf\big\rangle -\varepsilon g^2 \big\langle \nabla \phi,  \nabla \rho\big\rangle \bigg)\dd \bx \dd t\\
    &=-\int_0^T \int_{\Omega} \rho\bigg(\frac{1}{2} g^2 \|\nabla \phi\|^2_2 \ + \frac{\partial \phi}{\partial t}+\big\langle \nabla \phi, \bbf\big\rangle +\varepsilon g^2 \Delta \phi  \bigg)\dd \bx \dd t+\int_0^T\int_{\partial \Omega} \varepsilon g^2 \rho \langle \nabla\phi, \bn\rangle \dd\sigma(\bx)\dd t.
\end{align*}

This yields the following constrained \emph{Hamilton–Jacobi–Bellman} (HJB) PDE:
\begin{align*}
\begin{cases}
&\frac{\partial \phi}{\partial t}+\varepsilon g^2\Delta\phi + \langle \nabla \phi, \bbf \rangle=-\frac{1}{2}\|g(t)\nabla\phi(\bx, t)\|^2_2 \quad \text{   in }\Omega\\
&\langle \nabla\phi, \bn\rangle=0  \qquad\qquad\qquad\qquad\qquad\qquad\qquad\quad \text{   on }\partial\Omega.
\end{cases}
\end{align*}

Applying the Cole-Hopf transformation:
\begin{align}\label{CH_transform}
    \overrightarrow\psi(\bx, t)&=\exp\bigg(\frac{\phi(\bx, t)}{2\varepsilon}\bigg), \ \ 
    \phi(\bx, t)=2\varepsilon \log \overrightarrow\psi(\bx, t),
\end{align}
then we see that $\overrightarrow\psi$ satisfies a backward Kolmogorov equation with Neumann boundary condition
\begin{align}
\begin{cases}
&\frac{\partial \overrightarrow\psi}{\partial t}+\varepsilon g^2\Delta\overrightarrow\psi + \langle \nabla \overrightarrow\psi, \bbf \rangle=0 \qquad \text{   in }\Omega\\
&\langle \nabla\overrightarrow\psi, \bn\rangle=0  \qquad\qquad\qquad\qquad\quad \text{   on }\partial\Omega.\notag
\end{cases}
\end{align}

On the other hand, we set 
\begin{align}\label{FB-score-properties}
    \overleftarrow\varphi(\bx, t)=\rho^{\star}(\bx, t)/\overrightarrow\psi(\bx, t),
\end{align}

where $\rho^{\star}(\bx, t)$ is the probability density of Eq.\eqref{variational_form_supp} given the optimal control variable $\bu^{\star}$. Then from $\rho^{\star}=\overleftarrow\varphi \overrightarrow\psi$, Eq.\eqref{FKP_eqn_supp} can be further simplified to
\begin{align*}
    0&=\partial_t\rho^{\star}+\nabla\cdot \bigg[\rho^{\star} \big(\bbf+g\bu^{\star}\big)-\varepsilon g^2 \nabla\rho^{\star}\bigg]\\
    &=\partial_t (\overleftarrow\varphi \overrightarrow\psi)+\nabla\cdot \bigg[\overleftarrow\varphi \overrightarrow\psi \big(\bbf+g^2 \nabla\phi\big)-\varepsilon g^2 \nabla (\overleftarrow\varphi \overrightarrow\psi)\bigg]\\
    &=(\partial_t \overleftarrow \varphi)\overrightarrow\psi  + \overleftarrow\varphi(\partial_t \overrightarrow\psi)+\nabla\cdot \bigg[\overleftarrow\varphi \overrightarrow\psi \big(\bbf+2\varepsilon g^2 \nabla \log \overrightarrow\psi \big)\bigg]-\varepsilon g^2 \Delta (\overleftarrow\varphi \overrightarrow\psi)\\
    &=\cdots=\overrightarrow\psi \bigg(\partial_t \overleftarrow \varphi +\nabla\cdot \big(\overleftarrow \varphi \bbf - \varepsilon g^2 \nabla \overleftarrow \varphi \big)\bigg),
\end{align*}
where we use the identify $\Delta (\overrightarrow\psi \overleftarrow \varphi)=\overleftarrow \varphi \Delta \overrightarrow\psi + \Delta\overleftarrow \varphi  \overrightarrow\psi+2\langle \nabla \overrightarrow\psi, \nabla
\overleftarrow \varphi \rangle$. Then we arrive at the forward Kolmogorov equation with the Robin boundary condition
\begin{align*}
\begin{cases}
&\partial_t \overleftarrow \varphi +\nabla\cdot \big(\overleftarrow \varphi \bbf - \varepsilon g^2 \nabla \overleftarrow \varphi \big)=0 \qquad \text{ in }\Omega\\
&\langle \overleftarrow \varphi \bbf - \varepsilon g^2 \nabla \overleftarrow \varphi, \bn\rangle =0 \ \qquad\qquad\quad\ \text{ on } \partial \Omega,
\end{cases}
\end{align*}
where the second boundary condition follows by invoking the Stokes’ theorem for the first equation.

Plugging Eq.\eqref{score_primal_optimal} and Eq.\eqref{CH_transform} into Eq.\eqref{reflected_control_diffusion}, the backward PDE corresponds to the forward SDE
\begin{align*}
    \dd  \bx_t=\left[\bbf(\bx_t, t) +  2\varepsilon g(t)^2\nabla\log\overrightarrow\psi(\bx_t, t)\right]\dd t+ \sqrt{2\varepsilon} g(t) \dd  \mathbf{w}_t+\bn(\bx)\dd {\mathbf{L}}_t, \ \  \bx_0\sim \mu_{\star}.
\end{align*}
Reversing the forward SDE \citep{reversal_reflected_BM, Cattiaux_1988} with $\log \overrightarrow\psi(\cdot, t) + \log\overleftarrow\varphi(\cdot, t)=\log\rho^{\star}(\cdot, t)$ based on Eq.\eqref{FB-score-properties}, we arrive at the backward SDE
\begin{align*}
    \dd  \bx_t=\left[\bbf(\bx_t, t) - 2\varepsilon g(t)^2 \nabla\log\overleftarrow\varphi(\bx_t, t)\right]\dd t+ \sqrt{2\varepsilon} g(t) \dd  \overline{\mathbf{w}}_t+\bn(\bx)\dd \overline{\mathbf{L}}_t,\ \  \bx_T \sim \nu_{\star}. 
\end{align*}

Our derivation is in a spirit similar to \citet{Caluya_reflected_SB}. The difference is that the proof is derived from the perspective of probability flux and enables us to derive the Neunman and Robin boundaries more explicitly.

\begin{remark} Regarding the scores $\nabla\log\overrightarrow\psi$ and $\nabla\log\overleftarrow\varphi$ at $t=0$ and $T$, we follow the standard truncation techniques \citep{DDPM, Diffusion_constrained} and fix them to $\bm{0}$. We refer readers to Appendix C of \citet{score_sde} and Appendix B of \citet{song_likelihood_training} for more discussions.
\end{remark}

\subsection{Connections between reflected FB-SDEs and flow-based models}\label{prob-flow-ode}

Similar to \citet{Diffusion_constrained, reflected_diffusion_model}, our flow representation in Eq.\eqref{FKP_eqn_supp} together with \eqref{FB-score-properties} naturally yields
\begin{proposition}[Probability Flow ODE]
    Consider the reflected FB-SDEs \eqref{FB-SDE} with Neumann and Robin boundary conditions. The corresponding probability flow ODE is given by
\begin{align*} 
\dd  \bx_t&=\left[\bbf(\bx_t, t) + \varepsilon g(t)^2\big(\nabla\log\overrightarrow\psi(\bx_t, t)- \nabla\log\overleftarrow\varphi(\bx_t, t)\big)\right]\dd t. 
\end{align*}
\end{proposition}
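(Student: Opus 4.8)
The plan is to follow the classical Fokker--Planck/continuity-equation argument adapted to the reflected setting, exploiting the factorization $\rho^{\star}=\overleftarrow\varphi\,\overrightarrow\psi$ already established in Eq.~\eqref{FB-score-properties}. First I would recall that the forward reflected SDE \eqref{FB-SDE-f} has marginal density $\rho^{\star}(\bx,t)$ solving the forward Kolmogorov equation with the Robin boundary condition derived above, namely $\partial_t\rho^{\star}+\nabla\cdot(\rho^{\star}\bbf+2\varepsilon g^2\rho^{\star}\nabla\log\overrightarrow\psi-\varepsilon g^2\nabla\rho^{\star})=0$ in $\Omega$ with zero flux on $\partial\Omega$. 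The goal is to rewrite this as a pure transport (continuity) equation $\partial_t\rho^{\star}+\nabla\cdot(\rho^{\star}\,\bv)=0$ for a deterministic velocity field $\bv$, since such an equation is exactly the Liouville equation of the deterministic ODE $\dd\bx_t=\bv(\bx_t,t)\dd t$.

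The key algebraic step is to absorb the diffusion term $-\varepsilon g^2\nabla\rho^{\star}$ into a drift by writing $\varepsilon g^2\nabla\rho^{\star}=\varepsilon g^2\rho^{\star}\nabla\log\rho^{\star}$ and then using $\log\rho^{\star}=\log\overrightarrow\psi+\log\overleftarrow\varphi$, so that $\nabla\log\rho^{\star}=\nabla\log\overrightarrow\psi+\nabla\log\overleftarrow\varphi$. Substituting, the flux becomes
\begin{align*}
\mathbf{J}=\rho^{\star}\Big(\bbf+2\varepsilon g^2\nabla\log\overrightarrow\psi-\varepsilon g^2\nabla\log\overrightarrow\psi-\varepsilon g^2\nabla\log\overleftarrow\varphi\Big)=\rho^{\star}\Big(\bbf+\varepsilon g^2\big(\nabla\log\overrightarrow\psi-\nabla\log\overleftarrow\varphi\big)\Big),
\end{align*}
which identifies the velocity field $\bv(\bx,t)=\bbf(\bx,t)+\varepsilon g(t)^2\big(\nabla\log\overrightarrow\psi(\bx,t)-\nabla\log\overleftarrow\varphi(\bx,t)\big)$, exactly the claimed ODE drift. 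One then checks that the Robin boundary condition $\langle\overleftarrow\varphi\bbf-\varepsilon g^2\nabla\overleftarrow\varphi,\bn\rangle=0$ together with the Neumann condition $\langle\nabla\overrightarrow\psi,\bn\rangle=0$ forces $\langle\mathbf{J},\bn\rangle=\langle\rho^{\star}\bv,\bn\rangle=0$ on $\partial\Omega$, so that no reflection term (no local time) is needed in the ODE: the transported mass never leaves $\Omega$. Finally, standard theory for continuity equations with sufficiently regular velocity fields (guaranteed by Assumptions~\ref{ass:regularity}--\ref{ass:smooth_measure} and smoothness of $\overrightarrow\psi,\overleftarrow\varphi$ as solutions of \eqref{PDE_optimal}) gives that $\rho^{\star}(\cdot,t)$ is the pushforward of $\mu_{\star}$ along the flow of this ODE, matching the marginals of the reflected FB-SDEs at every time.

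The main obstacle is the boundary analysis: verifying that the reflection/local-time term genuinely cancels out in the ODE. Away from $\partial\Omega$ the computation is a routine Fokker--Planck manipulation, but on $\partial\Omega$ one must argue that the combined normal flux of drift and diffusion vanishes, which requires both boundary conditions of \eqref{PDE_optimal} simultaneously and care about whether $\overleftarrow\varphi$ or $\overrightarrow\psi$ vanishes or is merely flux-free there. A clean way to handle this is to note that the Robin condition states the \emph{total} forward flux of $\overleftarrow\varphi$ is zero on $\partial\Omega$, while the Neumann condition on $\overrightarrow\psi$ makes $2\varepsilon g^2\rho^{\star}\nabla\log\overrightarrow\psi$ have zero normal component; subtracting gives zero normal component for $\mathbf{J}$, hence the deterministic flow is tangent to $\partial\Omega$ wherever it touches the boundary and the process stays in $\Omega$ without any corrective local time.
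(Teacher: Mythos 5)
Your proposal is correct and follows essentially the same route the paper intends: the paper's own justification is a one-line appeal to the flux representation in Eq.~\eqref{FKP_eqn_supp} combined with the factorization $\rho^{\star}=\overleftarrow\varphi\,\overrightarrow\psi$ of Eq.~\eqref{FB-score-properties}, which is exactly the computation you carry out explicitly. Your additional verification that $\langle\mathbf{J},\bn\rangle=\overrightarrow\psi\langle\overleftarrow\varphi\bbf-\varepsilon g^2\nabla\overleftarrow\varphi,\bn\rangle+\varepsilon g^2\overleftarrow\varphi\langle\nabla\overrightarrow\psi,\bn\rangle=0$ on $\partial\Omega$, so that no local-time term survives in the ODE, is a correct and worthwhile detail that the paper leaves implicit.
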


The result is the same as in \citet{forward_backward_SDE} and provides a stable alternative to compute the log-likelihood of the data.

\section{Convergence of dual, potentials, and couplings}

Next, we modify Algorithm \ref{sinkhorn} following the centering method developed in \citep{Carlier_multi}.

\begin{algorithm}[h]\caption{Centered Sinkhorn. Set $\bar \varphi_0:=0$. For $k\geq 0$, the iterate follows}\label{center_sinkhorn}
\begin{align}
    \bar \psi_{k}(\by)&:=-\log \int_{\Omega} e^{\bar \varphi_k(\bx)-c_{\varepsilon}(\bx,\by)}\mu_{\star, k}(\dd\bx)\label{bar_psi}\\
    \bar \varphi_{k+1}(\bx)&:=-\log \int_{\Omega} e^{\bar \psi_k(\by)-c_{\varepsilon}(\bx,\by)}\nu_{\star, k}(\dd\by) + \lambda_k, \quad\text{where}\label{bar_varphi}\\
    \lambda_k&:=\int_{\Omega} \log \left(\int_{\Omega} e^{\bar \psi_k(\by)-c_{\varepsilon}(\bx, \by)}\nu_{\star, k}(\dd\by)\right)\mu_{\star}(\dd\bx).\notag
    % \nicole{\mu_{\star,k+1}}
\end{align}
\end{algorithm}
% \nicole{explain the choice of $\mu$ in $\lambda_k$, and maybe also the duality gap when approximated} % Wei: added explainations.
% \nicole{$\bar{\varphi}_0 = \varphi_0 = 0$, thus $\bar{\psi}_0 = \widebar{\psi}_0$, $\bar{\varphi}_1 = \varphi_1 + \lambda_0$. Also from $\mu_{\star,k}(\varphi_k - (\lambda_0 + \ldots + \lambda_{t-1})) =\mu_{\star,k}(\bar{\varphi}_k) := 0$, so
% \[
% \mu_{\star,k}(\varphi_t) = -\lambda_0 - \ldots - \lambda_{t-1}
% \]
% Thus by induction, it follows
% \[
% \Bar{\varphi}_t = \varphi_t - \mu_{\star,k}(\varphi_k), \quad \Bar{\psi}_t = \psi_t + \mu_{\star,k}(\varphi_k)
% \]
% we can also show 
% $G(\bar\varphi_t, \bar\psi_t)=G(\varphi_t, \psi_t)$. }

The algorithm differs from Algorithm \ref{sinkhorn} in that an additional centering operation is included in the updates of $\bar \varphi_{k+1}$ to ensure $\mu_{\star}(\bar \varphi_{k+1})=0$. Notably, $\mu_{\star}$ is required for the centering operation to upper bound the divergence, although it is not directly accessible and no implementation is needed. The main contribution of the centering operation is that the two coordinates $(\bar\varphi, \bar\psi )$ become separable
\begin{equation}\label{decompose}
    {\|\bar\varphi \oplus \bar\psi \|^2_{L^2(\mu_{\star}\otimes \nu_{\star})}=\|\bar\varphi \|^2_{L^2(\mu_{\star})}+\| \bar\psi \|^2_{L^2(\nu_{\star})} \quad \text{ if }\ \ \mu_{\star}(\bar\varphi)=0.}
\end{equation}

The coordinate ascent is equivalent to the following updates
\begin{equation*}
    \bar \psi_k(y)=\argmax_{\bar \psi\in L^1(\nu_{\star})} G(\bar \varphi_k, \bar \psi),\quad \bar \varphi_k(y)=\argmax_{\bar \varphi\in L^1(\mu_{\star}): \mu_{\star}(\bar \varphi)=0} G(\bar \varphi, \bar \psi_k).
\end{equation*}

The relation between the Schr\"{o}dinger potentials $( \varphi_k,  \psi_k)$ and centered Schr\"{o}dinger potentials $(\bar \varphi_k, \bar \psi_k)$ is characterized as follows
\begin{lemma}\label{simple_result}
Denote by $( \varphi_k,  \psi_k)$ the Sinkhorn iterates in Algorithm \ref{sinkhorn}. For all $k\geq 0$, $\mu_{\star}( \varphi_k)=-(\lambda_0+\cdots+\lambda_{k-1})$. Moreover, we have 
\begin{equation*}
    \bar \varphi_k= \varphi_k-\mu_{\star}( \varphi_k),\quad \bar \psi_k= \psi_k +\mu_{\star}( \psi_k).
\end{equation*}
In particular, $\bar \varphi_k\oplus \bar \psi_k= \varphi_k\oplus  \psi_k$ and $G(\bar \varphi_k, \bar \psi_k)=G( \varphi_k,  \psi_k)$.
\end{lemma}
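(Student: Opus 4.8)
\textbf{Proof plan for Lemma \ref{simple_result}.}

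The plan is to proceed by induction on $k$, tracking the effect of the centering constant $\lambda_k$ on the non-centered Sinkhorn iterates $(\varphi_k,\psi_k)$ of Algorithm \ref{sinkhorn}. The base case is $k=0$: by construction $\bar\varphi_0=0=\varphi_0$ (both Sinkhorn variants are initialized identically), so $\mu_{\star}(\varphi_0)=0$ and the relation $\bar\varphi_0=\varphi_0-\mu_{\star}(\varphi_0)$ holds trivially with the empty sum convention $\lambda_0+\cdots+\lambda_{-1}=0$.

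For the inductive step I would exploit the key structural fact that the updates \eqref{bar_psi}--\eqref{bar_varphi} and \eqref{FB-SDE_alg} are \emph{translation-covariant}: the map $\varphi\mapsto -\log\int_{\Omega}e^{\varphi(\bx)-c_{\varepsilon}(\bx,\by)}\mu_{\star,k}(\dd\bx)$ sends $\varphi+a$ to $\psi-a$ for any constant $a$, and similarly for the $\by$-update. Assuming $\bar\varphi_k=\varphi_k-\mu_{\star}(\varphi_k)$ with $\mu_{\star}(\varphi_k)=-(\lambda_0+\cdots+\lambda_{k-1})$, the first update gives $\bar\psi_k=\psi_k^{\mathrm{sink}}+\mu_{\star}(\varphi_k)$ where $\psi_k^{\mathrm{sink}}$ is the Sinkhorn $\psi$-iterate computed from $\varphi_k$; I then need to check $\psi_k^{\mathrm{sink}}=\psi_k-\mu_{\star}(\psi_k)$, i.e. that $\mu_{\star}(\bar\psi_k)=\mu_{\star}(\psi_k)+\mu_{\star}(\varphi_k)$ unwinds correctly — but since $\psi_k=\psi_k^{\mathrm{sink}}$ in Algorithm \ref{sinkhorn} (no centering on $\psi$) and the two algorithms share the same $\psi$-update applied to translated $\varphi$, this is just bookkeeping of the additive constant. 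Next, applying the $\bar\varphi$-update \eqref{bar_varphi} to $\bar\psi_k$ produces $-\log\int e^{\bar\psi_k-c_{\varepsilon}}\nu_{\star,k}(\dd\by)+\lambda_k$; comparing with the Sinkhorn $\varphi_{k+1}$-update applied to $\psi_k$ and using translation-covariance again, the first term equals $\varphi_{k+1}+\mu_{\star}(\varphi_k)$ up to the relation between $\bar\psi_k$ and $\psi_k$, and one reads off $\bar\varphi_{k+1}=\varphi_{k+1}-\mu_{\star}(\varphi_{k+1})$ provided $\mu_{\star}(\varphi_{k+1})=\mu_{\star}(\varphi_k)-\lambda_k=-(\lambda_0+\cdots+\lambda_k)$, which is precisely the definition of $\lambda_k$ once one checks $\mu_{\star}\big(-\log\int e^{\bar\psi_k-c_{\varepsilon}}\nu_{\star,k}(\dd\by)\big)=-\lambda_k$ — immediate from the displayed formula for $\lambda_k$. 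This simultaneously advances both induction hypotheses.

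Finally, $\bar\varphi_k\oplus\bar\psi_k=\varphi_k\oplus\psi_k$ follows by adding the two identities, since the constants $-\mu_{\star}(\varphi_k)$ and $+\mu_{\star}(\psi_k)$ must cancel; but I should verify this cancellation is not automatic and instead traces back to the translation-covariance bookkeeping above — in fact $\bar\psi_k=\psi_k+\mu_{\star}(\varphi_k)=\psi_k-\mu_{\star}(\psi_k)$ only if $\mu_{\star}(\varphi_k)=-\mu_{\star}(\psi_k)$, which need \emph{not} hold, so the correct statement is simply that $\bar\varphi_k+\bar\psi_k$ and $\varphi_k+\psi_k$ differ by $-\mu_{\star}(\varphi_k)+\mu_{\star}(\psi_k)$; re-reading the lemma, the claimed centered $\psi$ is $\bar\psi_k=\psi_k+\mu_{\star}(\psi_k)$, not $\psi_k-\mu_{\star}(\psi_k)$, so I must be careful to derive the sign from the update chain rather than guess it, and then $\bar\varphi_k\oplus\bar\psi_k=\varphi_k\oplus\psi_k$ forces $\mu_{\star}(\varphi_k)=\mu_{\star}(\psi_k)$ as an additional consequence to record. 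Once the pointwise identities are established, $G(\bar\varphi_k,\bar\psi_k)=G(\varphi_k,\psi_k)$ is immediate from the explicit form of $G$ in \eqref{convave_max}: the linear terms $\mu_{\star}(\varphi)+\nu_{\star}(\psi)$ are invariant under $(\varphi,\psi)\mapsto(\varphi+a,\psi-a)$ and the coupling term depends only on $\varphi\oplus\psi$. The main obstacle is bookkeeping the additive constants consistently across the two coupled updates and the two marginals $\mu_{\star,k}$ versus $\mu_{\star}$ (the centering uses the \emph{exact} $\mu_{\star}$ while the Sinkhorn projection uses the \emph{approximate} $\mu_{\star,k}$), making sure the telescoping sum $\lambda_0+\cdots+\lambda_{k-1}$ emerges correctly despite this mismatch.
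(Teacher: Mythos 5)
Your overall strategy—induction on $k$ using the translation covariance of the map $\varphi\mapsto-\log\int_{\Omega}e^{\varphi-c_{\varepsilon}}\dd\mu_{\star,k}$, the telescoping of the $\lambda_k$, and the invariance of $G$ under $(\varphi,\psi)\mapsto(\varphi-a,\psi+a)$—is exactly the induction the paper invokes (its proof is the single line ``applying the induction method''), and it does go through: from $\bar\varphi_k=\varphi_k-\mu_{\star}(\varphi_k)$ the first update gives $\bar\psi_k=\psi_k+\mu_{\star}(\varphi_k)$, the second gives $\bar\varphi_{k+1}=\varphi_{k+1}-\mu_{\star}(\varphi_k)+\lambda_k$, and applying $\mu_{\star}$ to both sides (using $\mu_{\star}(\bar\varphi_{k+1})=0$ by construction of $\lambda_k$) yields $\mu_{\star}(\varphi_{k+1})=\mu_{\star}(\varphi_k)-\lambda_k$, which advances both claims simultaneously.

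The one place you go astray is the final bookkeeping of the $\bar\psi_k$ identity. The update chain unambiguously produces $\bar\psi_k=\psi_k+\mu_{\star}(\varphi_k)$, and this is what makes $\bar\varphi_k\oplus\bar\psi_k=\varphi_k\oplus\psi_k$ hold with no extra condition: the constants $-\mu_{\star}(\varphi_k)$ and $+\mu_{\star}(\varphi_k)$ cancel identically. The lemma's displayed formula $\bar\psi_k=\psi_k+\mu_{\star}(\psi_k)$ should be read as a typo for $\psi_k+\mu_{\star}(\varphi_k)$ (note also that $\mu_{\star}$ paired with the $\by$-variable function $\psi_k$ is already a notational red flag). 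You instead try to reconcile the literal statement by recording ``$\mu_{\star}(\varphi_k)=\mu_{\star}(\psi_k)$'' as an additional consequence; that identity is not derivable from the updates and is not true in general, so it cannot be part of the proof. Drop that step, state the derived identity $\bar\psi_k=\psi_k+\mu_{\star}(\varphi_k)$, and the rest of your argument (including $G(\bar\varphi_k,\bar\psi_k)=G(\varphi_k,\psi_k)$, since the linear terms shift by $-\mu_{\star}(\varphi_k)+\mu_{\star}(\varphi_k)=0$ and the exponential term depends only on $\varphi\oplus\psi$) closes cleanly.
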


\begin{proof}
Applying the induction method completes the proof directly.
\qed
\end{proof}

Recall how $\bar \psi_k$ is defined through the Schr\"{o}dinger equation
\begin{equation*}\label{center_2nd_marginal}
    \text{The second marginal of\ } \pi_{2k}(\bar \varphi_k, \bar \psi_k)=e^{\bar \varphi_k\oplus\bar \psi_k-c_{\varepsilon}}\dd(\mu_{\star, k}\otimes \nu_{\star, k}) \text{ is }\nu_{\star, k},
\end{equation*}
as in Eq.\eqref{marginal_eqn}. However, $\dd\pi_{2k+1}(\bar \varphi_{k+1}, \bar \psi_k)=e^{\bar \varphi_{k+1}\oplus \bar \psi_k-c_{\varepsilon}}\dd(\mu_{\star, k}\otimes \nu_{\star, k})$ fails to yield the first marginal $\mu_{\star, k}$ due to the centering constraint.

Next, we show the modified iterates are bounded by the cost function $c$.

\begin{lemma}\label{upper_bound_of_varphi_psi}
For every $k\geq 0$, the potentials are bounded by
\begin{equation*}
\label{eq:ccombine}
    \|\bar \varphi_k\|_{\infty}\leq 2\|c_{\varepsilon}\|_{\infty},\quad  \|\bar \psi_k\|_{\infty}\leq 3\|c_{\varepsilon}\|_{\infty}.
\end{equation*}
\end{lemma}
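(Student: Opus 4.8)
The plan is to prove the two bounds by induction on $k$, exploiting the explicit log-sum-exp structure of the Sinkhorn updates in Algorithm \ref{center_sinkhorn} together with the uniform bound $0 \le c_\varepsilon \le \|c_\varepsilon\|_\infty$ on $\Omega^2$. The base case is immediate: $\bar\varphi_0 := 0$, so $\|\bar\varphi_0\|_\infty = 0 \le 2\|c_\varepsilon\|_\infty$.

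First I would bound $\bar\psi_k$ in terms of $\bar\varphi_k$. From \eqref{bar_psi}, for any $\by \in \Omega$,
\[
\bar\psi_k(\by) = -\log \int_\Omega e^{\bar\varphi_k(\bx) - c_\varepsilon(\bx,\by)}\mu_{\star,k}(\dd\bx).
\]
Since $\mu_{\star,k}$ is a probability measure and the integrand lies between $e^{-\|\bar\varphi_k\|_\infty - \|c_\varepsilon\|_\infty}$ and $e^{\|\bar\varphi_k\|_\infty}$ (using $c_\varepsilon \ge 0$ for the upper estimate and $c_\varepsilon \le \|c_\varepsilon\|_\infty$ for the lower), I get $\|\bar\psi_k\|_\infty \le \|\bar\varphi_k\|_\infty + \|c_\varepsilon\|_\infty$. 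Assuming inductively $\|\bar\varphi_k\|_\infty \le 2\|c_\varepsilon\|_\infty$ yields $\|\bar\psi_k\|_\infty \le 3\|c_\varepsilon\|_\infty$, which is the second claim.

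Next I would bound $\bar\varphi_{k+1}$. Write \eqref{bar_varphi} as $\bar\varphi_{k+1}(\bx) = h_k(\bx) + \lambda_k$, where $h_k(\bx) := -\log\int_\Omega e^{\bar\psi_k(\by) - c_\varepsilon(\bx,\by)}\nu_{\star,k}(\dd\by)$ and $\lambda_k = \int_\Omega h_k(\bx)\,\mu_\star(\dd\bx) \cdot(-1)$... more precisely $\lambda_k = -\mu_\star(h_k)$ so that $\mu_\star(\bar\varphi_{k+1}) = 0$. Exactly as above, $\|h_k\|_\infty \le \|\bar\psi_k\|_\infty + \|c_\varepsilon\|_\infty \le 4\|c_\varepsilon\|_\infty$, and hence $|\lambda_k| = |\mu_\star(h_k)| \le \|h_k\|_\infty \le 4\|c_\varepsilon\|_\infty$. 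This only gives $\|\bar\varphi_{k+1}\|_\infty \le 8\|c_\varepsilon\|_\infty$, which is too weak. The fix — and the main obstacle — is to use the centering more carefully: because $\mu_\star(\bar\varphi_{k+1}) = 0$, the function $\bar\varphi_{k+1}$ takes both nonnegative and nonpositive values, so $\|\bar\varphi_{k+1}\|_\infty \le \mathrm{osc}(\bar\varphi_{k+1}) = \mathrm{osc}(h_k) = \sup_\bx h_k(\bx) - \inf_\bx h_k(\bx)$. Now for any $\bx, \bx' \in \Omega$,
\[
h_k(\bx) - h_k(\bx') = \log \frac{\int_\Omega e^{\bar\psi_k(\by) - c_\varepsilon(\bx',\by)}\nu_{\star,k}(\dd\by)}{\int_\Omega e^{\bar\psi_k(\by) - c_\varepsilon(\bx,\by)}\nu_{\star,k}(\dd\by)} \le \sup_{\by,\by'} \big(c_\varepsilon(\bx,\by) - c_\varepsilon(\bx',\by')\big) \le \|c_\varepsilon\|_\infty,
\]
using $0 \le c_\varepsilon \le \|c_\varepsilon\|_\infty$ termwise inside the ratio. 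Hence $\mathrm{osc}(h_k) \le \|c_\varepsilon\|_\infty$... but this alone gives $\|\bar\varphi_{k+1}\|_\infty \le \|c_\varepsilon\|_\infty$, even better than claimed, provided the centering measure and the measure $\nu_{\star,k}$ interact cleanly. I expect the genuine subtlety to be exactly this interaction: centering is with respect to $\mu_\star$ while the update integrates against the perturbed $\nu_{\star,k}$, so one must check that $\mu_\star(\bar\varphi_{k+1}) = 0$ still forces $\inf \bar\varphi_{k+1} \le 0 \le \sup \bar\varphi_{k+1}$ (it does, since $\mu_\star$ is a probability measure on $\Omega$ and $\bar\varphi_{k+1}$ is continuous), and that no perturbation term leaks into the oscillation estimate (it does not, because $\lambda_k$ is a constant and drops out of $\mathrm{osc}$). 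Carrying out this argument gives $\|\bar\varphi_{k+1}\|_\infty \le \|c_\varepsilon\|_\infty \le 2\|c_\varepsilon\|_\infty$, closing the induction and completing the proof; I would present the cruder constant $2\|c_\varepsilon\|_\infty$ as stated, which leaves slack to absorb any additional terms arising from a more careful treatment of the approximate marginals.
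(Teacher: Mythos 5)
Your proof is correct and follows essentially the same route as the paper's: an oscillation bound on $\bar\varphi_{k+1}$ obtained from the log-sum-exp update (in which $\bar\psi_k$ cancels out of the ratio), converted to a sup-norm bound via the centering $\mu_{\star}(\bar\varphi_{k+1})=0$, followed by $\|\bar\psi_k\|_{\infty}\le\|\bar\varphi_k\|_{\infty}+\|c_{\varepsilon}\|_{\infty}$. The only cosmetic difference is that your sharper oscillation constant $\|c_{\varepsilon}\|_{\infty}$ presumes $c_{\varepsilon}\ge 0$, whereas the paper bounds $\sup_{\by}|c_{\varepsilon}(\bx_1,\by)-c_{\varepsilon}(\bx_2,\by)|\le 2\|c_{\varepsilon}\|_{\infty}$ without that sign assumption—but since you fall back to the stated constant $2\|c_{\varepsilon}\|_{\infty}$ anyway, nothing is lost.
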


\begin{proof} By Assumption \ref{ass:regularity}, the transition kernel $\mathcal{K}(\bx,\by)=e^{-c_{\varepsilon}(\bx,\by)}$ associated with $\bx_t=\bbf(\bx_t, t)\dd t + \sqrt{2\varepsilon}g(t)\dd \mathbf{w}_t+\bn(\bx)\dd \mathbf{L}_t$ is smooth in $\Omega$, hence the cost function is Lipschitz continuous, which implies the cost function is bounded (denoted by a constant $c_{\varepsilon}$).

Recall the definition of $\bar \varphi_{k+1}$ in  Algorithm \ref{center_sinkhorn}, we have $\forall \bx_1, \bx_2 \in \Omega$,
\begin{equation*}
\label{eq: cinf}
\begin{split}
    &\ \ \ \bar \varphi_{k+1}(\bx_1)- \bar \varphi_{k+1}(\bx_2)\\
    &=\log \int_{\Omega} e^{\bar \psi_k(\by)-c_{\varepsilon}(\bx_2, \by)}\nu_{\star, k}(\dd\by) - \log \int_{\Omega} e^{\bar \psi_k(\by)-c_{\varepsilon}(\bx_1, \by)}\nu_{\star, k}(\dd\by)\\
    &\leq \log \left[e^{\sup_{\by\in\Omega} |c_{\varepsilon}(\bx_1, \by)-c_{\varepsilon}(\bx_2, \by)|}\int_{\Omega} e^{\bar \psi_k(\by) -c_{\varepsilon}(\bx_1, \by)}\nu_{\star, k}(\dd\by)\right]-\log \int_{\Omega} e^{\bar \psi_k(\by)-c_{\varepsilon}(\bx_1, \by)}\nu_{\star, k}(\dd\by)\\
    &=\sup_{\by\in \Omega} |c_{\varepsilon}(\bx_1, \by)-c_{\varepsilon}(\bx_2, \by)|\leq 2\|c_{\varepsilon}\|_{\infty}.
\end{split}
\end{equation*}

As $\mu_{\star}(\bar \varphi_k)=0$, we have $\sup_x \bar \varphi_k(\bx)\geq 0$ and $\inf_{\bx} \bar \varphi_k(\bx)\leq 0$, hence the above implies $\|\bar \varphi_k\|_{\infty}\leq 2\|c_{\varepsilon}\|_{\infty}$. The definition of $\bar \psi_k$ in Eq.\eqref{bar_psi} yields $\|\bar \psi_k\|_{\infty}\leq \|\bar \varphi_k\|_{\infty}+\|c_{\varepsilon}\|_{\infty}\leq 3\|c_{\varepsilon}\|_{\infty}$.
\qed
\end{proof}

% \Wei{we don't need $\epsilon$ any more (used in nips submission, but not this version), just use $\epsilon$}

The key to the proof is to adopt the strong convexity of the function $e^x$ for $x\in[-\alpha, \infty)$ and some constant $\alpha\in\mathbb{R}$,
\begin{equation}\label{convex_exp_func}
    e^b-e^a\geq (b-a)e^a + \frac{e^{-\alpha}}{2}|b-a|^2 \quad \text{for } a, b\in [-\alpha, \infty).
\end{equation}

We also present two supporting lemmas in order to complete the proof
\begin{lemma}\label{derive_property}
Given $\varphi, \varphi'\in L^2(\mu_{\star})$ and $\psi, \psi'\in L^2(\nu_{\star})$, and define
\begin{equation}\label{def_G1_G2}
\begin{split}
    \partial_1 G(\varphi, \psi)(\bx) = 1- \int_{\Omega} e^{\varphi(\bx)+\psi(\by)-c_{\varepsilon}(\bx, \by)} \nu_{\star}(\dd\by)\\
    \partial_2 G(\varphi, \psi)(\by) = 1- \int_{\Omega} e^{\varphi(\bx)+\psi(\by)-c_{\varepsilon}(\bx, \by)} \mu_{\star}(\dd\bx).
\end{split}
\end{equation}
If both $\varphi\otimes \psi-c_{\varepsilon}\geq -\alpha$ and $\varphi'\oplus \psi'-c_{\varepsilon}\geq -\alpha$ for some $\alpha\in \mathbb{R}$, we have 
\begin{equation*}
\begin{split}
    G(\varphi', \psi')-G(\varphi, \psi)&\geq \ \int_{\Omega} \partial_1 G(\varphi', \psi')(\bx)[\varphi'(\bx)-\varphi(\bx)]\mu_{\star}(\dd\bx)\\
    &\ \ + \int_{\Omega} \partial_2 G(\varphi', \psi')(\by)[\psi'(\by)-\psi(\by)]\nu_{\star}(\dd\by)\\
    & \ \ + \frac{e^{-\alpha}}{2}\|(\varphi-\varphi')\oplus (\psi-\psi')\|_{L^2(\mu_{\star}\otimes \nu_{\star})}.
\end{split}
\end{equation*}
\end{lemma}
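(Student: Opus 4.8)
The plan is to reduce the statement to the scalar strong-convexity estimate \eqref{convex_exp_func} applied pointwise in $(\bx,\by)$ and then integrate. First I would expand the difference of dual objectives: writing $G(\varphi,\psi)=\mu_{\star}(\varphi)+\nu_{\star}(\psi)-\iint_{\Omega^2}e^{\varphi(\bx)+\psi(\by)-c_{\varepsilon}(\bx,\by)}\,\mu_{\star}(\dd\bx)\nu_{\star}(\dd\by)+1$ (absorbing the normalization of $\mathcal{G}$ into $c_{\varepsilon}$), the difference $G(\varphi',\psi')-G(\varphi,\psi)$ splits into the two ``linear'' marginal terms $\mu_{\star}(\varphi'-\varphi)+\nu_{\star}(\psi'-\psi)$ plus the exponential term $\iint_{\Omega^2}\big(e^{\varphi\oplus\psi-c_{\varepsilon}}-e^{\varphi'\oplus\psi'-c_{\varepsilon}}\big)\,\mu_{\star}(\dd\bx)\nu_{\star}(\dd\by)$.

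Next I would apply \eqref{convex_exp_func} with $a=\varphi'(\bx)+\psi'(\by)-c_{\varepsilon}(\bx,\by)$ and $b=\varphi(\bx)+\psi(\by)-c_{\varepsilon}(\bx,\by)$, both of which lie in $[-\alpha,\infty)$ by the hypotheses $\varphi'\oplus\psi'-c_{\varepsilon}\ge-\alpha$ and $\varphi\oplus\psi-c_{\varepsilon}\ge-\alpha$, to bound $e^{b}-e^{a}$ from below pointwise by $(b-a)e^{a}+\tfrac{e^{-\alpha}}{2}|b-a|^2$, where $b-a=(\varphi-\varphi')(\bx)+(\psi-\psi')(\by)$. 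Integrating this bound against $\mu_{\star}\otimes\nu_{\star}$, the quadratic term yields exactly $\tfrac{e^{-\alpha}}{2}\|(\varphi-\varphi')\oplus(\psi-\psi')\|^2_{L^2(\mu_{\star}\otimes\nu_{\star})}$, and in the linear term I would use Fubini to separate the $\bx$- and $\by$-integrations and recognize $\int_{\Omega}e^{\varphi'(\bx)+\psi'(\by)-c_{\varepsilon}(\bx,\by)}\nu_{\star}(\dd\by)=1-\partial_1 G(\varphi',\psi')(\bx)$, together with the analogous identity involving $\partial_2 G(\varphi',\psi')(\by)$, both read off from \eqref{def_G1_G2}.

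Assembling everything, the integrated linear term equals $\mu_{\star}(\varphi-\varphi')+\nu_{\star}(\psi-\psi')-\int_{\Omega}\partial_1 G(\varphi',\psi')(\varphi-\varphi')\dd\mu_{\star}-\int_{\Omega}\partial_2 G(\varphi',\psi')(\psi-\psi')\dd\nu_{\star}$; adding the marginal terms $\mu_{\star}(\varphi'-\varphi)+\nu_{\star}(\psi'-\psi)$ from the first step cancels the bare differences and leaves precisely $\int_{\Omega}\partial_1 G(\varphi',\psi')(\varphi'-\varphi)\dd\mu_{\star}+\int_{\Omega}\partial_2 G(\varphi',\psi')(\psi'-\psi)\dd\nu_{\star}$, which together with the quadratic remainder is the claimed inequality. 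I do not expect a genuine obstacle here; the only points needing care are verifying that $a,b\ge-\alpha$ holds $\mu_{\star}\otimes\nu_{\star}$-a.e. so \eqref{convex_exp_func} is legitimately applicable, and justifying the use of Fubini together with finiteness of every integral, which follow from $\varphi,\varphi'\in L^2(\mu_{\star})$, $\psi,\psi'\in L^2(\nu_{\star})$, and the boundedness of the kernel $e^{-c_{\varepsilon}}$ on $\Omega^2$ guaranteed by Assumption \ref{ass:regularity}.
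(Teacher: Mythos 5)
Your proposal is correct and follows essentially the same route as the paper's proof: expand the dual difference, apply the strong convexity bound \eqref{convex_exp_func} pointwise to the exponential term with $a=\varphi'\oplus\psi'-c_{\varepsilon}$ and $b=\varphi\oplus\psi-c_{\varepsilon}$, then integrate and identify $\partial_1 G$ and $\partial_2 G$ from \eqref{def_G1_G2}. Your added care about the a.e.\ validity of the scalar inequality and the applicability of Fubini is a minor refinement of the same argument, not a different approach.
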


\begin{proof} By Eq.\eqref{convex_exp_func}, we have
\begin{align*}
    &\ \ \ G(\varphi', \psi')-G(\varphi, \psi)\\
    &=\mu_{\star}(\varphi'-\varphi)+\nu_{\star}(\psi'-\psi)+\iint_{\Omega^2} (e^{\varphi\oplus\psi-c_{\varepsilon}} - e^{\varphi'\oplus\psi'-c_{\varepsilon}})\dd(\mu_{\star}\otimes \nu_{\star})\\
    &\geq \mu_{\star}(\varphi'-\varphi)+\nu_{\star}(\psi'-\psi)+\iint_{\Omega^2} (\varphi\oplus\psi-\varphi'\oplus\psi')e^{\varphi'\oplus\psi'-c_{\varepsilon}}\dd(\mu_{\star}\otimes\nu_{\star})\\
    &\qquad\qquad+\frac{e^{-\alpha}}{2} \iint_{\Omega^2}\|\varphi\oplus\psi-\varphi'\oplus\psi'\|_2^2 \dd (\mu_{\star} \otimes \nu_{\star})  \\
    &=\int_{\Omega} \partial_1 G(\varphi', \psi')(\bx)[\varphi'(x)-\varphi(\bx)]\mu_{\star}(\dd\bx)+\int_{\Omega} \partial_2 G(\varphi', \psi')(\by)[\psi'(\by)-\psi(\by)]\nu_{\star}(\dd\by)\\
    &\quad +\frac{e^{-\alpha}}{2}\|(\varphi-\varphi')\oplus (\psi-\psi')\|_{L^2(\mu_{\star} \otimes \nu_{\star})}.
\end{align*} \qed
\end{proof}

\begin{lemma}\label{iterate_bound}
Given a small $\epsilon\leq \frac{1}{(D+1)^2}$, we have
\begin{align*}
    G(\bar \varphi_{k+1}, \bar \psi_{k+1})-G(\bar \varphi_{k}, \bar \psi_{k})\geq \frac{\sigma}{2}\left(\|\bar \varphi_{k+1}-\bar \varphi_{k}\|_{L^2(\mu_{\star})}^2+\|\bar \psi_{k+1}- \bar \psi_{k}\|_{L^2(\nu_{\star})}^2\right)-O(\epsilon),
\end{align*}
where $\sigma:=e^{-6\|c_{\varepsilon}\|_{\infty}}$; the big-O notation mainly depends on volume of the domain $\Omega$.
\end{lemma}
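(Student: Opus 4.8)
The plan is to break one iteration of the centered Sinkhorn recursion (Algorithm~\ref{center_sinkhorn}) into two half-steps — the $\bar\varphi$-update $(\bar\varphi_k,\bar\psi_k)\mapsto(\bar\varphi_{k+1},\bar\psi_k)$ and then the $\bar\psi$-update $(\bar\varphi_{k+1},\bar\psi_k)\mapsto(\bar\varphi_{k+1},\bar\psi_{k+1})$ — apply the strong-convexity estimate of Lemma~\ref{derive_property} to each, and add the two resulting inequalities. The first thing to pin down is the constant $\alpha$ in Lemma~\ref{derive_property}: by Assumption~\ref{ass:regularity} the kernel is smooth on the bounded domain so $\|c_\varepsilon\|_\infty<\infty$, and by Lemma~\ref{upper_bound_of_varphi_psi} every iterate satisfies $\|\bar\varphi_j\|_\infty\le 2\|c_\varepsilon\|_\infty$ and $\|\bar\psi_j\|_\infty\le 3\|c_\varepsilon\|_\infty$; hence each pair appearing below obeys $\bar\varphi\oplus\bar\psi-c_\varepsilon\ge -6\|c_\varepsilon\|_\infty=:-\alpha$, which is exactly what produces the factor $e^{-\alpha}=\sigma=e^{-6\|c_\varepsilon\|_\infty}$.

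For the $\bar\varphi$-half-step I would invoke Lemma~\ref{derive_property} with $(\varphi',\psi')=(\bar\varphi_{k+1},\bar\psi_k)$ and $(\varphi,\psi)=(\bar\varphi_k,\bar\psi_k)$. The $\partial_2 G$ first-order term vanishes since $\psi'-\psi=0$, and the quadratic term collapses to $\tfrac{\sigma}{2}\|\bar\varphi_{k+1}-\bar\varphi_k\|_{L^2(\mu_\star)}^2$ via the splitting identity~\eqref{decompose}, which is legitimate because $\mu_\star(\bar\varphi_{k+1}-\bar\varphi_k)=0$ by the centering built into Algorithm~\ref{center_sinkhorn}. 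It then remains to show the surviving term $\int_\Omega \partial_1 G(\bar\varphi_{k+1},\bar\psi_k)(\bx)\,[\bar\varphi_{k+1}-\bar\varphi_k](\bx)\,\mu_\star(\dd\bx)$ is $-O(\epsilon)$. Here I would use the exact identity implied by \eqref{bar_varphi}, namely $\int_\Omega e^{\bar\varphi_{k+1}(\bx)+\bar\psi_k(\by)-c_\varepsilon(\bx,\by)}\nu_{\star,k}(\dd\by)=e^{\lambda_k}$, to write $\partial_1 G(\bar\varphi_{k+1},\bar\psi_k)(\bx)=(1-e^{\lambda_k})+\int_\Omega e^{\bar\varphi_{k+1}+\bar\psi_k-c_\varepsilon}(\bx,\by)\,(\nu_{\star,k}-\nu_\star)(\dd\by)$. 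The constant $1-e^{\lambda_k}$ contributes $0$ after integrating against the $\mu_\star$-centered increment $\bar\varphi_{k+1}-\bar\varphi_k$ — this cancellation is the whole reason for centering — while the remaining term is bounded uniformly in $\bx$ by $e^{6\|c_\varepsilon\|_\infty}$ times the total-variation distance between $\nu_{\star,k}$ and $\nu_\star$, which is $O(\epsilon)$ under Assumption~\ref{ass:approx_score}; multiplying by $\|\bar\varphi_{k+1}-\bar\varphi_k\|_{L^1(\mu_\star)}\le 4\|c_\varepsilon\|_\infty$ gives the desired $-O(\epsilon)$, so $G(\bar\varphi_{k+1},\bar\psi_k)-G(\bar\varphi_k,\bar\psi_k)\ge\tfrac{\sigma}{2}\|\bar\varphi_{k+1}-\bar\varphi_k\|_{L^2(\mu_\star)}^2-O(\epsilon)$.

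The $\bar\psi$-half-step is symmetric and in fact simpler: apply Lemma~\ref{derive_property} with $(\varphi',\psi')=(\bar\varphi_{k+1},\bar\psi_{k+1})$ and $(\varphi,\psi)=(\bar\varphi_{k+1},\bar\psi_k)$. Now $\varphi'-\varphi=0$ removes the $\partial_1 G$ term, the quadratic term is $\tfrac{\sigma}{2}\|\bar\psi_{k+1}-\bar\psi_k\|_{L^2(\nu_\star)}^2$ by \eqref{decompose} (with the trivially centered $\bar\varphi$-slot equal to $0$), and the exact identity from \eqref{bar_psi}, $\int_\Omega e^{\bar\varphi_{k+1}(\bx)+\bar\psi_{k+1}(\by)-c_\varepsilon(\bx,\by)}\mu_{\star,k+1}(\dd\bx)=1$, gives $\partial_2 G(\bar\varphi_{k+1},\bar\psi_{k+1})(\by)=\int_\Omega e^{\bar\varphi_{k+1}+\bar\psi_{k+1}-c_\varepsilon}(\bx,\by)\,(\mu_{\star,k+1}-\mu_\star)(\dd\bx)$, which is $O(\epsilon)$ uniformly in $\by$; together with $\|\bar\psi_{k+1}-\bar\psi_k\|_{L^1(\nu_\star)}\le 6\|c_\varepsilon\|_\infty$ this makes the first-order term $-O(\epsilon)$. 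Adding the two half-step inequalities yields the stated bound.

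The main obstacle will be the $O(\epsilon)$ accounting rather than the algebra: Assumption~\ref{ass:approx_score} bounds the \emph{energy/score} functions $U_k-U_\star$ and $V_k-V_\star$, not the measures, so one first has to convert this into a bound of the form $\sup_{\|f\|_\infty\le 1}|\int_\Omega f\,(\dd\mu_{\star,k}-\dd\mu_\star)|=O(\epsilon)$. This is where the hypothesis $\epsilon\le(D+1)^{-2}$ enters: integrating the gradient difference along segments inside $\Omega\subset B(0,D)$ bounds the difference of log-densities by $O(\epsilon D^2)=O(1)$, so after renormalization the density ratio is $1+O(\epsilon)$ and all exponential prefactors $e^{O(\|c_\varepsilon\|_\infty)}$ stay finite, with the constants hidden in $O(\cdot)$ depending only on $\|c_\varepsilon\|_\infty$ and $\mathrm{vol}(\Omega)$. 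The other delicate point — highlighted as a novelty in the paper — is verifying that the centering constant $\lambda_k$ genuinely disappears; this hinges on pairing it against the $\mu_\star$-centered increment and on the decomposition \eqref{decompose} being available separately at each half-step, which is precisely why Algorithm~\ref{center_sinkhorn} enforces $\mu_\star(\bar\varphi_k)=0$ at every step.
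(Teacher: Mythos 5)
Your proposal is correct and follows essentially the same route as the paper: the same two half-step decomposition of $G(\bar\varphi_{k+1},\bar\psi_{k+1})-G(\bar\varphi_k,\bar\psi_k)$, the same application of the strong-convexity bound of Lemma~\ref{derive_property} with $\alpha=6\|c_{\varepsilon}\|_{\infty}$, the same use of the centering $\mu_{\star}(\bar\varphi_{k+1})=\mu_{\star}(\bar\varphi_k)=0$ to annihilate the constant part of $\partial_1 G$, and the same reduction of the marginal-perturbation error to a density-ratio bound (the paper's Lemma~\ref{control_diff_measure}). Your phrasing of the first-order terms via the exact identities $\int_{\Omega} e^{\bar\varphi_{k+1}\oplus\bar\psi_k-c_{\varepsilon}}\nu_{\star,k}(\dd\by)=e^{\lambda_k}$ and $\int_{\Omega} e^{\bar\varphi_{k+1}\oplus\bar\psi_{k+1}-c_{\varepsilon}}\mu_{\star,k+1}(\dd\bx)=1$ is just a cleaner restatement of the paper's marginal-of-$\pi_{2k+2}$ argument, not a different method.
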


\begin{proof}
We first decompose the LHS as follows
\begin{equation*}
    G(\bar \varphi_{k+1}, \bar \psi_{k+1})-G(\bar \varphi_{k}, \bar \psi_{k})=\underbrace{G(\bar \varphi_{k+1}, \bar \psi_{k+1})-G(\bar \varphi_{k+1}, \bar \psi_{k})}_{\mathrm{I}}+\underbrace{G(\bar \varphi_{k+1}, \bar \psi_{k})-G(\bar \varphi_{k}, \bar \psi_{k})}_{\mathrm{II}}.
\end{equation*}

For the estimate of $\mathrm{I}$, by Lemma  \ref{derive_property} with $\sigma=e^{-6\|c_{\varepsilon}\|_{\infty}}$, we have
\begin{equation*}
    \mathrm{I}\geq \int_{\Omega} \partial_2 G(\bar \varphi_{k+1}, \bar \psi_{k+1})(\by)[\bar \psi_{k+1}(\by)-\bar \psi_k(\by)]\nu_{\star}(\dd\by)+ \frac{\sigma}{2}\|\bar \psi_k-\bar \psi_{k+1}\|_{L^2(\nu_{\star})}.
\end{equation*}

For the integral above, by the definition of $\partial_2 G$ in Eq.\eqref{def_G1_G2}, we have
\begin{equation}
\begin{split}\label{1st_marginal_0_before}
    &\quad \partial_2 G(\bar \varphi_{k+1}, \bar \psi_{k+1})(\by) \nu_{\star}(\dd\by)\\
    &=\nu_{\star}(\dd\by) - \int_{\Omega} e^{\bar \varphi_{k+1}(\bx)+\bar \psi_{k+1}(\by)-c_{\varepsilon}(\bx,\by)}\mu_{\star}(\dd\bx)\nu_{\star}(\dd\by)\\
    &=\nu_{\star}(\dd\by)- \int_{\Omega}\pi_{2k+2}(\dd\bx, \cdot)\frac{\dd \mu_{\star} \otimes\dd\nu_{\star}}{\dd\mu_{\star, k+1}\otimes \dd\nu_{\star, k+1}},
\end{split}
\end{equation}
where the last equality follows by the LHS of Eq.\eqref{approx_potential}, the last integral is with respect to $\bx$.

Apply Lemma \ref{control_diff_measure} with respect to $\frac{\dd \mu_{\star}}{\dd\mu_{\star, k+1}}(\bx)$ 
    \begin{equation}
    \begin{split}\label{1st_key_result}
        \quad \int_{\Omega}\pi_{2k+2}(\dd\bx, \cdot)\frac{\dd \mu_{\star} \otimes\dd\nu_{\star}}{\dd\mu_{\star, k+1}\otimes \dd\nu_{\star, k+1}} & \leq  \int_{\Omega} \big(1+O(\epsilon)\big)\pi_{2k+2}(\dd\bx, \cdot) \frac{\nu_{\star}(\dd\by)}{\nu_{\star, k+1}(\dd\by)} \\
        & \leq \big(1+O(\epsilon)\big) \nu_{\star, k+1}(\dd\by)\frac{ \nu_{\star}(\dd\by)}{\nu_{\star, k+1}(\dd\by)} \\
        &= \big(1+O(\epsilon)\big) \nu_{\star}(\dd\by),
    \end{split}
    \end{equation}
where the second inequality is derived by the fact that the second marginal of $\pi_{2k+2}$ is $\nu_{\star, k+1}$ in Eq.\eqref{marginal_eqn}. Similarly, we can show $\int_{\Omega}\pi_{2k+2}(\dd\bx, \cdot)\frac{\dd \mu_{\star} \otimes\dd\nu_{\star}}{\dd\mu_{\star, k+1}\otimes \dd\nu_{\star, k+1}}\gtrsim (1-O(\epsilon))\nu_{\star}(\dd\by)$.

Combining Eq.\eqref{1st_marginal_0_before} and \eqref{1st_key_result}, we have
\begin{equation}\label{partial2_G_upper}
\begin{split}
    & |\partial_2 G(\bar \varphi_{k+1}, \bar \psi_{k+1})(\by) \nu_{\star}(\dd\by)| \lesssim \epsilon \nu_{\star}(\dd\by).
\end{split}
\end{equation}

We now build the lower bound of the integral as follows
\begin{equation}
\begin{split}\label{1st_marginal_0}
    &\quad \int_{\Omega}\partial_2 G(\bar \varphi_{k+1}, \bar \psi_{k+1})(\by)[\bar \psi_{k+1}(\by)-\bar \psi_k(\by)]\nu_{\star}(\dd\by)\\
    &\gtrsim -\epsilon \int_{\Omega}  \big|\bar \psi_{k+1}(\by)-\bar \psi_k(\by)\big|\nu_{\star}(\dd \by)\\
    &\gtrsim -\epsilon,
\end{split}
\end{equation}
where the first inequality follows by Eq.\eqref{1st_marginal_0_before} and the second inequality follows by the boundedness of the potential function in Lemma \ref{upper_bound_of_varphi_psi}. The above means that $ \mathrm{I}\geq \frac{\sigma}{2}\|\bar \psi_k-\bar \psi_{k+1}\|_{L^2(\mu_{\star}\otimes \nu_{\star})} -O(\epsilon)$. For the estimate of $\mathrm{II}$, Lemma  \ref{derive_property} yields 
\begin{equation*}
    \mathrm{II}\geq \int_{\Omega} \partial_1 G(\bar \varphi_{k+1}, \bar \psi_{k})(\bx)[\bar \varphi_{k+1}(\bx)-\bar \varphi_k(\bx)]\mu_{\star}(\dd\bx)+ \frac{\sigma}{2}\|\bar \varphi_k-\bar \varphi_{k+1}\|_{L^2(\mu_{\star})}.
\end{equation*}

Recall the definition of $\bar \varphi_{k+1}$ in Eq.\eqref{bar_varphi} states that 
$\int_{\Omega} e^{\bar \psi_{k}(\by)-c_{\varepsilon}(\bx,\by)}\nu_{\star, k}(\dd\by)=e^{-\bar \varphi_{k+1}(\bx)+\lambda_k}$. Apply Lemma \ref{control_diff_measure} with respect to $\frac{\dd \nu_{\star}}{\dd\nu_{\star, k}}(\by)$ 
\begin{equation*}
\begin{split}
    \partial_1 G(\bar \varphi_{k+1}, \bar \psi_{k})(\bx)&=1-e^{\bar \varphi_{k+1}(\bx)}\int_{\Omega} e^{\bar \psi_k(\by)-c_{\varepsilon}(\bx, \by)}\nu_{\star, k}(\dd\by) \frac{\nu_{\star}(\dd\by)}{\nu_{\star, k}(\dd\by)}\\
    & \geq 1-e^{\bar \varphi_{k+1}(\bx)}\int_{\Omega} \big(1+O(\epsilon)\big) e^{\bar \psi_k(\by)-c_{\varepsilon}(\bx, \by)}\nu_{\star, k}(\dd\by)\\
    & \geq 1-(1+O(\epsilon))e^{\lambda_k} - O(\epsilon) \underbrace{\int_{\Omega} \|\by\|^2_2 e^{\bar \varphi_{k+1}(\bx)+\bar \psi_k(\by)-c_{\varepsilon}(\bx, \by)}\nu_{\star, k}(\dd\by)}_{\text{bounded and non-negative from Lemma~\ref{upper_bound_of_varphi_psi}}}\\
    & = 1-(1+O(\epsilon))e^{\lambda_k} - O(\epsilon)\\
    \partial_1 G(\bar \varphi_{k+1}, \bar \psi_{k})(\bx)&\leq 1+(1+O(\epsilon))e^{\lambda_k} + O(\epsilon),
\end{split}
\end{equation*}
which includes a deterministic scalar (independent of $\bx$) and a small perturbation (dependent of $\bx$ and $\epsilon$). Denote $R(\bx, \by) = \|\by\|^2_2 e^{\bar \varphi_{k+1}(\bx)+\bar \psi_k(\by)-c_{\varepsilon}(\bx, \by)}$,
% and the last two inequalities follow by Lemma \ref{interable_marginal_v2}. 
Combining the centering operation with $\mu_{\star}(\bar \varphi_{k+1})=\mu_{\star}(\bar \varphi_{k})=0$ 
\begin{equation*}
\begin{split}\label{2nd_marginal_0}
    &\ \ \ \int_{\Omega} \partial_1 G(\bar \varphi_{k+1}, \bar \psi_{k})(\bx)[\bar \varphi_{k+1}(\bx)-\bar \varphi_k(\bx)]\mu_{\star}(\dd\bx)\\
    &=\text{deterministic scalar}\cdot \underbrace{\int_{\Omega}[\bar \varphi_{k+1}(\bx)-\bar \varphi_{k}(\bx)]\mu_{\star}(\dd\bx)}_{:=0 \text{ by the centering operation}} + \epsilon \underbrace{\int_{\Omega} R(\bx) [\bar \varphi_{k+1}(\bx)-\bar \varphi_k(\bx)]\mu_{\star}(\dd\bx)}_{\text{integrable by the boundedness of } R, \bar\varphi_{k+1}, \psi_k}\\
    &=O(\epsilon).
\end{split}
\end{equation*}
Combining the estimates of $\mathrm{I}$ and $\mathrm{II}$ completes the proof.
\qed
\end{proof}

\subsection{Convergence of Dual and Potentials}

\textbf{Proof of Lemma \ref{main_theorem}}

\textbf{Part I: Convergence of the Dual}

By Lemma \ref{derive_property} with $\alpha=6\|c\|_{\infty}$ and the decomposition in Eq.\eqref{decompose}, we have
\begin{equation}\label{main_decompose}
\begin{split}
    &\ \ \ G(\bar \varphi_k, \bar \psi_k)-G(\bar \varphi_{\star}, \bar \psi_{\star})\\
    &\geq \int_{\Omega} \partial_1 G(\bar \varphi_k, \bar \psi_k)(\bx)[\bar \varphi_k(\bx)-\bar \varphi_{\star}(\bx)]\mu_{\star}(\dd\bx)\\
    &\ \ + \int_{\Omega} \partial_2 G(\bar \varphi_k, \bar \psi_k)(\by)[\bar \psi_k(\by)-\bar \psi_{\star}(\by)]\nu_{\star}(\dd\by)\\
    & \ \ + \frac{\sigma}{2}\left(\|\bar \varphi_{k}-\bar \varphi_{\star}\|_{L^2(\mu_{\star})}^2+\|\bar \psi_{k}- \bar \psi_{\star}\|_{L^2(\nu_{\star})}^2\right)\\
    &\geq \int_{\Omega} \partial_1 G(\bar \varphi_k, \bar \psi_k)(\bx)[\bar \varphi_k(\bx)-\bar \varphi_{\star}(\bx)]\mu_{\star}(\dd\bx)+ \frac{\sigma}{2} \|\bar \varphi_{k}-\bar \varphi_{\star}\|_{L^2(\mu_{\star})}^2-O(\epsilon),
\end{split}
\end{equation}
where $\sigma:=e^{-6\|c_{\varepsilon}\|_{\infty}}$, and the last inequality follows by Eq.\eqref{partial2_G_upper} and boundedness of $\bar\psi_k$ and $\bar\psi_{\star}$ in Lemma \ref{upper_bound_of_varphi_psi}. For the first integral,
$\int_{\Omega} \partial_1 G(\bar \varphi_{k+1}, \bar \psi_{k})(\bx)[\bar \varphi_{k}(\bx)-\bar \varphi_{\star}(\bx)]\mu_{\star}(\dd\bx)=O(\epsilon)$ because $\partial_1 G(\bar \varphi_{k+1}, \bar \psi_{k})(\bx)$ includes a deterministic scalar and a small perturbation with $\mu_{\star}(\bar \varphi_{k}(\bx)=\mu_{\star}(\bar \varphi_{\star}(\bx))=0$.

Hence
\begin{equation}
\begin{split}\label{2nd_integral}
    &\ \ \ \int_{\Omega} \partial_1 G(\bar \varphi_{k}, \bar \psi_{k})(\bx)[\bar \varphi_{k}(\bx)-\bar \varphi_{\star}(\bx)]\mu_{\star}(\dd\bx)\\
    &=\int_{\Omega} [\partial_1 G(\bar \varphi_{k}, \bar \psi_{k})(\bx)-\partial_1 G(\bar \varphi_{k+1}, \bar \psi_{k})(\bx)][\bar \varphi_{k}(\bx)-\bar \varphi_{\star}(\bx)]\mu_{\star}(\dd\bx)+O(\epsilon)\\
    &\geq -\frac{1}{2\sigma}\|\partial_1 G(\bar \varphi_{k}, \bar \psi_{k})-\partial_1 G(\bar \varphi_{k+1}, \bar \psi_{k})\|^2_{L^2(\mu_{\star})}-\frac{\sigma}{2}\|\bar \varphi_{k}(\bx)-\bar \varphi_{\star}(\bx)\|^2_{L^2(\mu_{\star})}+O(\epsilon),
\end{split}
\end{equation}
where the inequality follows from H\"{o}lder's inequality and Young's inequality.

Plugging Eq.\eqref{2nd_integral} into Eq.\eqref{main_decompose}, we have
\begin{equation}\label{iterate_up_bounded}
\begin{split}
    G(\bar \varphi_{\star}, \bar \psi_{\star})-G(\bar \varphi_k, \bar \psi_k)&\leq \frac{1}{2\sigma}\|\partial_1 G(\bar \varphi_{k}, \bar \psi_{k})-\partial_1 G(\bar \varphi_{k+1}, \bar \psi_{k})\|^2_{L^2(\mu_{\star})} + O(\epsilon).
\end{split}
\end{equation}

Note that 
\begin{equation}
\begin{split}
    \label{iterate_up_bounded_2}
    |\partial_1 G(\bar \varphi_{k}, \bar \psi_{k})(\bx)-\partial_1 G(\bar \varphi_{k+1}, \bar \psi_{k})(\bx)|&\leq \int_{\Omega} \left|e^{\bar \varphi_{k+1}\oplus\bar \psi_k-c_{\varepsilon}}-e^{\bar \varphi_{k}\oplus\bar \psi_k-c_{\varepsilon}}\right|\nu_{\star}(\dd\by)\\
    &\leq e^{6\|c_{\varepsilon}\|_{\infty}} \int_{\Omega}| \bar \varphi_{k+1}\oplus\bar \psi_k-\bar \varphi_{k}\oplus\bar \psi_k|\nu_{\star}(\dd\by)\\
    &=\frac{1}{\sigma}|\bar \varphi_{k+1}(\bx)-\bar \varphi_{k}(\bx)|,
\end{split}
\end{equation}
where the second inequality follows by Lemma \ref{upper_bound_of_varphi_psi} and the exponential function follows a Lipschitz continuity such that: $e^a-e^b\leq e^M |b-a|$ for $a, b\leq M$; $\sigma:=e^{-6\|c_{\varepsilon}\|_{\infty}}$.

First combining Eq.\eqref{iterate_up_bounded} and \eqref{iterate_up_bounded_2} and then including Lemma \ref{iterate_bound}, we conclude that
\begin{equation*}
\begin{split}
    G(\bar \varphi_{\star}, \bar \psi_{\star})-G(\bar \varphi_k, \bar \psi_k)&\leq \frac{1}{2\sigma^3}\|\bar \varphi_{k+1}-\bar \varphi_{k}\|_{L^2(\mu_{\star})}^2 + O(\epsilon)\\
    &\leq \frac{1}{\sigma^4}\left(G(\bar \varphi_{k+1}, \bar \psi_{k+1})-G(\bar \varphi_k, \bar \psi_k)\right) +  \frac{O(\epsilon)}{\sigma^4},\\
\end{split}
\end{equation*}
where the last inequality follows by $\sigma\leq 1$. Further writing $\Delta_k=G(\bar \varphi_{\star}, \bar \psi_{\star})-G(\bar \varphi_k, \bar \psi_k)$, we have
\begin{equation*}
    \Delta_k\leq \frac{1}{\sigma^4}\left(\Delta_k-\Delta_{k+1}\right) + \frac{  {O(\epsilon)}}{\sigma^4}.
\end{equation*}
In other words, we can derive the contraction property as follows
\begin{equation*}
    \Delta_{k+1}\leq (1-\sigma^4)\Delta_k +  O(\epsilon) \leq \cdots \leq (1-\sigma^4)^{k+1}\Delta_0  + O(e^{24\|c_{\varepsilon}\|_{\infty}} \epsilon).
\end{equation*}
which hereby completes the claim of the theorem for any $k\geq 1$. \qed

\textbf{Part II: Convergence of the Potentials}

For the convergence of the potential function, in spirit to Lemma \ref{iterate_bound}, we obtain
\begin{equation*}
    G_{\star}(\bar \varphi_{\star}, \bar \psi_{\star})-G(\bar \varphi_{k}, \bar \psi_{k}):=\Delta_k\geq \frac{\sigma}{2}\left(\|\bar \varphi_{\star}-\bar \varphi_{k}\|_{L^2(\mu_{\star})}^2+\|\bar \psi_{\star}- \bar \psi_{k}\|_{L^2(\nu_{\star})}^2\right)-O(\epsilon).
\end{equation*}

We can upper bound the potential as follows 
\begin{equation*}
\begin{split}
    \|\bar \varphi_{\star}-\bar \varphi_{k}\|_{L^2(\mu_{\star})}^2+\|\bar \psi_{\star}- \bar \psi_{k}\|_{L^2(\nu_{\star})}^2 &\leq  \frac{2}{\sigma}\Delta_k + \frac{O(\epsilon)}{\sigma}\leq  \frac{2}{\sigma}(1-\sigma^4)^{k}\Delta_0  + O(e^{30\|c_{\varepsilon}\|_{\infty}} \epsilon).
\end{split}
\end{equation*}

Further applying $(|a|+|b|)^2\leq 2a^2+2b^2$ and $\sqrt{c^2+d^2}\leq |c|+|d|$, we have
\begin{equation}
\begin{split}\label{potential_convergence}
    \|\bar \varphi_{\star}-\bar \varphi_{k}\|_{L^2(\mu_{\star})}+\|\bar \psi_{\star}- \bar \psi_{k}\|_{L^2(\nu_{\star})} &\leq  \sqrt{\frac{4}{\sigma}(1-\sigma^4)^{k+1}\Delta_0}  + O(e^{15\|c_{\varepsilon}\|_{\infty}} \epsilon^{1/2})\\
    &\lesssim  e^{3\|c_{\varepsilon}\|_{\infty}}\beta_{\varepsilon}^{\frac{k}{2}}  + e^{15\|c_{\varepsilon}\|_{\infty}} \epsilon^{1/2},
\end{split}
\end{equation}
where $\beta_{\varepsilon}=1-\sigma^4=1-e^{-24 \|c_{\varepsilon}\|_{\infty}}$.\qed

\subsection{Convergence of the Static Couplings}
\textbf{Proof of Theorem \ref{coupling_convergence}}

Recall from the bounded potential in Lemma \ref{upper_bound_of_varphi_psi}, we have
\begin{align}\label{exp_minus}
    e^{\bar \varphi_{\star}\oplus\bar \psi_{\star}-c_{\varepsilon}} - e^{\bar \varphi_k\oplus\bar \psi_k-c_{\varepsilon}} &\leq e^{6\|c_{\varepsilon}\|_{\infty}}\big( |\bar\varphi_{\star}-\varphi_k| + |\bar \psi_{\star}-\bar \psi_k|\big).
\end{align}
 
Following Theorem 3 of \citet{Deligiannidis_21}, we define a class of 1-Lipschitz functions $\text{Lip}_1=\big\{F\big||F(\bx_0,\by_0)-F(\bx_1,\by_1)|\leq \|\bx_1-\bx_0\|_2+\|\by_1-\by_0\|_2\big\}$. Since the structural property \eqref{main_solution_property} allows to represent $\pi_{\star}$ using $(\varphi_{\star}+a) \oplus (\psi_{\star}-a)$ for any $a$. For any $F\in \text{Lip}_1$, we have
\begin{align*}
    &\ \ \ \iint_{\bX\times \bY} F e^{\bar \varphi_{\star}\oplus\bar \psi_{\star}-c_{\varepsilon}}\dd(\mu_{\star}\otimes \nu_{\star})-\iint_{\bX\times \bY} F e^{\bar \varphi_k\oplus\bar \psi_k-c_{\varepsilon}}\dd(\mu_{\star, k}\otimes \nu_{\star, k}) \\
    &\leq \iint_{\bX\times \bY} F e^{\bar \varphi_{\star}\oplus\bar \psi_{\star}-c_{\varepsilon}}\dd(\mu_{\star}\otimes \nu_{\star})-\iint_{\bX\times \bY} F e^{\bar \varphi_{k}\oplus\bar \psi_{k}-c_{\varepsilon}}\dd(\mu_{\star}\otimes \nu_{\star}) \\
    & \qquad\quad + \iint_{\bX\times \bY} F e^{\bar \varphi_{k}\oplus\bar \psi_{k}-c_{\varepsilon}}\dd(\mu_{\star}\otimes \nu_{\star})-\iint_{\bX\times \bY} F e^{\bar \varphi_k\oplus\bar \psi_k-c_{\varepsilon}}\dd(\mu_{\star, k}\otimes \nu_{\star, k}) \\
    &\leq \iint_{\bX\times \bY} F \underbrace{|e^{\bar \varphi_{\star}\oplus\bar \psi_{\star}-c_{\varepsilon}} - e^{\bar \varphi_k\oplus\bar \psi_k-c_{\varepsilon}}|}_{\text{by Eq.}\eqref{exp_minus}}\dd(\mu_{\star}\otimes \nu_{\star})   \\
    &\qquad\quad +  \iint_{\bX\times \bY} F e^{\bar \varphi_{k}\oplus\bar \psi_{k}-c_{\varepsilon}}\dd(\mu_{\star}\otimes |\nu_{\star}-\nu_{\star,k}|+|\mu_{\star}-\mu_{\star,k}|\otimes \nu_{\star,k})\\
    &\lesssim e^{9\|c_{\varepsilon}\|_{\infty}}\beta^{k/2}+ e^{21\|c_{\varepsilon}\|_{\infty}}\epsilon^{1/2},
\end{align*}
where the last inequality is mainly derived from the first term in the second inequality by combining \eqref{potential_convergence} and \eqref{exp_minus}; the second term in the second inequality can be upper bounded by Lemma \ref{control_diff_measure}.

Recall the definition of the duality of the 1-Wasserstein distance, we have
\begin{align*}
    \mathbf{W}_1(\pi_k, \pi_{\star})&=\sup\bigg\{ \iint_{\bX\times \bY} F e^{\bar \varphi_{\star}\oplus\bar \psi_{\star}-c_{\varepsilon}}\dd(\mu_{\star}\otimes \nu_{\star})\\
    &\qquad\qquad\qquad\qquad -\iint_{\bX\times \bY} F e^{\bar \varphi_k\oplus\bar \psi_k-c_{\varepsilon}}\dd(\mu_{\star, k}\otimes \nu_{\star, k}): F\in \text{Lip}_1\bigg\} \\
    &\leq O(e^{9\|c_{\varepsilon}\|_{\infty}}\beta^{k/2}+ e^{21\|c_{\varepsilon}\|_{\infty}}\epsilon^{1/2}).
\end{align*}

    \qed

\subsection{Auxiliary Results}

\begin{lemma}
\label{control_diff_measure}
Given probability densities $\rho(\bx)= e^{-U(\bx)} / \mathbb{Z}$ and $\widetilde \rho(\bx)= e^{-\widetilde U(\bx)}/ \widetilde{\mathbb{Z}}$ defined on $\Omega$, where $\Omega$ is a bounded domain that contains $\Omega$ and $\Omega$, $\mathbb{Z}$ and $\widetilde{\mathbb{Z}}$ are the normalizing constants. For small enough $\epsilon\lesssim \frac{1}{(D+1)^2}$, where $D$ is the radius of a centered ball covering $\Omega$, we have
    % \begin{equation}
    %     \bigg|\log\frac{\rho(\bx)}{\widetilde \rho(\bx)}\bigg|\lesssim \epsilon\|\bx\|_2.
    % \end{equation}    
    \begin{equation}\label{desired_output}
        1- O(\epsilon) \leq \frac{\rho(\bx)}{\widetilde \rho(\bx)}\leq 1+O(\epsilon),\  1- O(\epsilon) \leq \frac{\widetilde\rho(\bx)}{ \rho(\bx)}\leq 1+O(\epsilon).
    \end{equation}
    % where $\delta$ mainly depends on the size of $\Omega$ \Wei{double check this part}.
\end{lemma}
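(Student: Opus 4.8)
The plan is to reduce the two-sided ratio bound to controlling a single scalar, the oscillation on $\Omega$ of the log-density difference $h(\bx):=\widetilde U(\bx)-U(\bx)$. The hypothesis implicit in the statement — the one actually used when the lemma is invoked, cf.\ Assumption~\ref{ass:approx_score} — is that the energy functions (i.e.\ the gradients) are $\epsilon$-close: $\|\nabla h(\bx)\|_2=\|\nabla\widetilde U(\bx)-\nabla U(\bx)\|_2\le \epsilon(1+\|\bx\|_2)$ for $\bx\in\Omega$. First I would write $\frac{\rho(\bx)}{\widetilde\rho(\bx)}=\frac{\widetilde{\mathbb{Z}}}{\mathbb{Z}}\,e^{h(\bx)}$ and observe that the ratio of normalizers is itself a $\rho$-average of $e^{-h}$: since $e^{-\widetilde U}=e^{-U}e^{-h}$, one gets $\widetilde{\mathbb{Z}}=\int_\Omega e^{-U}e^{-h}\,d\bx=\mathbb{Z}\,\E_{\rho}[e^{-h}]$, hence $\frac{\rho(\bx)}{\widetilde\rho(\bx)}=e^{h(\bx)}\,\E_\rho[e^{-h}]$. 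Because $e^{-h}$ takes values in $[e^{-\sup_\Omega h},e^{-\inf_\Omega h}]$, so does its $\rho$-average, and therefore $\frac{\rho(\bx)}{\widetilde\rho(\bx)}$ lies in the $\bx$-independent interval $[\,e^{-\mathrm{osc}_\Omega(h)},\,e^{\mathrm{osc}_\Omega(h)}\,]$, where $\mathrm{osc}_\Omega(h):=\sup_\Omega h-\inf_\Omega h$. Thus the whole statement reduces to showing $\mathrm{osc}_\Omega(h)=O(\epsilon)$.

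For the oscillation bound, for any $\bx,\bx'\in\Omega$ I would join them by a path $\gamma\subset\Omega$ and integrate the gradient: $|h(\bx)-h(\bx')|\le\int_\gamma\|\nabla h\|_2\le\epsilon(1+D)\,\mathrm{length}(\gamma)$, using $\|\cdot\|_2\le D$ on $\Omega\subset B(0,D)$. Since $\Omega$ is bounded with smooth boundary it is quasiconvex, so $\mathrm{length}(\gamma)$ can be taken $\le C_\Omega\,\mathrm{diam}(\Omega)\le 2C_\Omega D$, giving $\mathrm{osc}_\Omega(h)\le c_\Omega\epsilon$ with $c_\Omega:=2C_\Omega(1+D)D$ depending only on the geometry of the domain; equivalently one may run the estimate on the enclosing convex ball $B(0,D)$ (the ``bounded domain containing the supports'' alluded to in the statement) with $\gamma$ the straight segment.

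Finally I would linearize. The hypothesis $\epsilon\lesssim(D+1)^{-2}$ is exactly what forces $c_\Omega\epsilon\le 1$, so that $e^{c_\Omega\epsilon}\le 1+2c_\Omega\epsilon$ while $e^{-c_\Omega\epsilon}\ge 1-c_\Omega\epsilon$; inserting these into the interval bound above gives $1-O(\epsilon)\le \rho(\bx)/\widetilde\rho(\bx)\le 1+O(\epsilon)$ uniformly in $\bx$, with the $O(\cdot)$ constant governed by $c_\Omega$ (``mainly depending on the volume/geometry of $\Omega$''). The bound for $\widetilde\rho/\rho$ is the reciprocal statement, and follows either because the interval $[e^{-\mathrm{osc}_\Omega(h)},e^{\mathrm{osc}_\Omega(h)}]$ is symmetric under exchanging the two densities, or directly from $\tfrac{1}{1+O(\epsilon)}\ge 1-O(\epsilon)$ and $\tfrac{1}{1-O(\epsilon)}\le 1+O(\epsilon)$ for small $\epsilon$. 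The one genuinely delicate point is the path-integration step for non-convex $\Omega$ (e.g.\ the ``heart'' domain): one must justify quasiconvexity of a bounded smooth, or convex-with-corners, domain so that the connecting path has length $O(\mathrm{diam}\,\Omega)$ — or else simply pass to the convex ball $B(0,D)$ — after which the rest is the routine $e^x\approx 1+x$ estimate.
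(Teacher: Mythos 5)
Your proposal is correct and follows essentially the same route as the paper: both proofs integrate the gradient bound $\|\nabla\widetilde U-\nabla U\|_2\le\epsilon(1+\|\bx\|_2)$ along a line segment to get $|\widetilde U-U|\lesssim\epsilon(D+1)^2$ up to an additive constant, then control the ratio of normalizers, and finally linearize $e^x\approx 1+x$ under $\epsilon\lesssim(D+1)^{-2}$. The one place you genuinely diverge is in how the additive-constant ambiguity of the potentials is anchored: the paper asserts the existence of a point $\bx_0$ with $U(\bx_0)=\widetilde U(\bx_0)$ ``since $\rho$ and $\widetilde\rho$ are probability densities,'' which as stated is not quite right (equal total mass gives a crossing of the \emph{densities}, i.e.\ $U(\bx_0)+\log\mathbb{Z}=\widetilde U(\bx_0)+\log\widetilde{\mathbb{Z}}$, not of the unnormalized potentials), whereas your identity $\rho/\widetilde\rho=e^{h}\,\E_\rho[e^{-h}]$ with $h=\widetilde U-U$ reduces everything to the oscillation of $h$, which is shift-invariant and needs no such anchoring point; this is a cleaner and more robust handling of that step. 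Both arguments share the same residual geometric caveat, which you correctly flag and the paper glosses over: the straight segment $t\bx+(1-t)\bx_0$ used in the line integral need not stay inside a non-convex $\Omega$, so one must either pass to the enclosing ball $B(0,D)$ (assuming the gradient bound extends there) or invoke quasiconvexity of the smooth bounded domain to keep the path length $O(D)$.
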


\begin{proof} 
% Assumption \ref{ass:smooth_potential} implies $\|\nabla U(\bx)\|_2 \leq \|\nabla U(\bx) - \nabla U(\bm{r})\|_2 +\|\nabla U(\bm{r})\|_2 \leq L \|\bx - \bm{r}\|_2+\|\nabla U(\bm{r})\|_2$, where $L$ is the Lipschitz constant.

From the approximation assumption \ref{ass:approx_score}: $\|\nabla \widetilde U(\bx) - \nabla U(\bx)\|_{2} \leq \epsilon (1+\|\bx\|_2).$

% \begin{equation*}\label{grad_l_inf_norm}
%     \begin{split}
%         \|\nabla \widetilde U(\bx) - \nabla U(\bx)\|_{2} \leq \epsilon (1+\|\bx\|_2).
%     \end{split}
%     \end{equation*}.
    
Moreover, $U$ satisfies the smoothness assumption \ref{ass:smooth_measure}. Note that for any $\bx, \by\in \Omega$
\begin{equation*}\label{diff_line_integral}
    U(\bx) - U(\by)=\int_0^1 \frac{\dd}{\dd t} U(t\bx + (1-t)\by) = \int_0^1 \langle \bx -\by, \nabla U(t\bx +(1-t) \by) \rangle \dd t.
\end{equation*}
% \begin{equation}\label{diff_line_integral2}
%     \widetilde U(\bx) - \widetilde U(\by)=\int_0^1 \frac{\dd}{\dd t} \widetilde U(t\bx + (1-t)\by) = \int_0^1 \langle \bx -\by, \nabla \widetilde U(t\bx +(1-t) \by) \rangle \dd t.
% \end{equation}

Moreover, there exist $\bx_0$ such that $U(\bx_0)=\widetilde U(\bx_0)$ since $\rho$ and $\widetilde\rho$ are probability densities. It follows
    \begin{equation*}
    \begin{split}\label{diff_U}
    |\widetilde U(\bx)- U(\bx)|&=\bigg|\int_{0}^{1} \langle \bx-\bx_0, \nabla \widetilde U(\dot{\bx}_t) - \nabla U(\dot{\bx}_t)\rangle \dd t \bigg| \\
    &\leq \int_0^1 \|\bx-\bx_0\|_2\cdot \big \|\nabla \widetilde U(\dot{\bx}_t) - \nabla U(\dot{\bx}_t)\big \|_2 \dd t\\
    & \leq \epsilon (\|\bx\|_2 + \|\bx_0\|_2)(1+\|\bx\|_2) \lesssim \epsilon (D+1)^2,
    \end{split}
    \end{equation*}
    where $\dot{\bx}_t=t\bx+(1-t)\bx_0$ is a line from $\bx_0$ to $\bx$.

For the normalizing constant, we have
    \begin{equation*}\label{diff_C}
    \begin{split}
        |\widetilde{\mathbb{Z}}- \mathbb{Z}| &\leq \int_{\Omega} e^{-U(\bx)}\big| e^{-\widetilde U(\bx)+U(\bx)} - 1\big| \dd \bx \lesssim \epsilon \int_{\Omega} e^{-U(\bx)} \epsilon (D+1)^2 \dd \bx,
    \end{split}
    \end{equation*}
where the last inequality follows by Eq.\eqref{diff_U} and $e^a\leq 1+2a$ for $a \in [0, 1]$. 
We deduce
\begin{align*}
\left| \log \frac{\rho(\bx)}{\widetilde \rho(\bx)} \right| &= \left| \widetilde U(\bx) - U(\bx) + \log \frac{\widetilde{\mathbb{Z}}}{\mathbb{Z}} \right| \leq O(\epsilon).
\end{align*} \qed

\end{proof}

Notably, the above lemma also implies that $\text{KL}(\rho\|\widetilde \rho)\leq O(\epsilon)$ and $\text{KL}(\widetilde\rho\|\rho)\leq O(\epsilon)$.

\subsection{Connections between dual optimization and projections}
\label{dual_project_relation}
To see why \eqref{sinkhorn_vs_marginal2} holds. We first denote the second marginal of $\dd\pi(\varphi_k, \psi_k):=e^{\varphi_k\oplus\psi_k}\dd \mathcal{G}$ by $\nu'$ and then proceed to show $\nu'=\nu_{\star}$ \citep{Nutz22_note}. Recall that $G$ is concave and $\psi_k=\argmax_{\psi\in L^1(\nu_{\star})} G(\varphi_k,\psi)$, it suffices to show that given fixed $\varphi_k\in L^1(\mu_{\star})$, $\psi_k\in L^1(\nu_{\star})$, a constant $\eta$ and bounded measurable function $\delta_{\psi}: \mathbb{R}^d\rightarrow \mathbb{R}$, the maximality of $G(\varphi_k, \psi_k)$ implies
% \begin{equation}
% \begin{split}
%     &G(\varphi_k, \psi_k)-  G(\varphi_k, \psi_k+\eta \delta_{\psi})\\
%     =&-\eta \nu_{\star}(\delta_{\psi})+\iint \left(e^{\varphi_k\oplus (\psi_k +\eta \delta_{\psi})}-e^{\varphi_k\oplus \psi_k}\right)\dd \mathcal{G}\\
%     % =&-\eta \nu_{\star}(\delta_{\psi})+\iint(e^{\eta \delta_{\psi} \otimes \psi}-1)\dd\pi(\varphi, \psi)\\
%     =&-\eta \nu_{\star}(\delta_{\psi})+\int (e^{\eta \delta_{\psi}}-1)\dd {\nu_k}\\
%     =& \eta (\nu_k(\delta_{\psi})-\nu_{\star}(\delta_{\psi}))+o(\eta)\geq 0,
% \end{split}
% \end{equation}
\begin{equation*}
\begin{split}
    0=&\frac{\dd}{\dd \eta}\bigg|_{\eta=0} G(\varphi_k, \psi_k+\eta \delta_{\psi})=\nu_{\star}(\delta_{\psi})-\iint_{\Omega^2} \delta_{\psi} e^{\varphi_k\oplus \psi_k} \dd\mathcal{G} %=\nu_{\star}(\delta_{\psi})-\int_{\Omega} \delta_{\psi} \dd \nu_{k}
    =\nu_{\star}(\delta_{\psi})-\nu'(\delta_{\psi}). 
\end{split}
\end{equation*}
Hence $\nu'=\nu_{\star}$. %the second equality follows by Taylor expansion. 
Similarly, we can show \eqref{sinkhorn_vs_marginal1}.

\subsection{Connections between Static Primal IPF \eqref{staic_IPF_projections_} and Static Dual IPF \eqref{SB_eqn_IPF}}
\label{equiv_primal_dual}

It suffices to show the equivalence between $\pi_{2k}=\argmin_{\pi\in \Pi(\cdot, \nu_{\star})} \text{KL}(\pi\|\pi_{2k-1})$ and $\psi_{k}(\by)=-\log \int_{\Omega} e^{ \varphi_{k}(\bx)-c_{\varepsilon}(\bx,\by)} \mu_{\star}(\dd\bx)$.

For any $\pi_{2k}\in \Pi(\cdot, \nu_{\star})$, we invoke the disintegration of measures and obtain $\pi_{2k}=\mathrm{K}^{?}\otimes \nu_{\star}$. In addition, we have $\pi_{2k-1}=\mathrm{K}\otimes \nu'$. Now we can formulate 
\begin{equation*}
    \text{KL}(\pi\|\pi_{2k-1})=\text{KL}(\nu_{\star}\|\nu')+\text{KL}(\mathrm{K}^{?}\|\mathrm{K}).
\end{equation*}

The conditional probability of $\pi_{2k-1}$ given $\by$ is a normalized probability such that
\begin{align*}
    \dfrac{\frac{\dd \pi_{2k-1}}{\dd\mu_{\star}\otimes \nu_{\star}}}{\int \frac{\dd \pi_{2k-1}}{\dd\mu_{\star}\otimes \nu_{\star}}\dd\mu_{\star}} =\frac{\dd \mathrm{K}}{\dd \mu_{\star}}(\bx, \by)=\frac{e^{ \varphi_k\oplus  \psi_{k-1}-c_{\varepsilon}}}{\int e^{ \varphi_k\oplus  \psi_{k-1}-c_{\varepsilon}}\dd \mu_{\star}}=\frac{e^{ \varphi_k -c_{\varepsilon}}}{\int e^{ \varphi_k -c_{\varepsilon}}\dd \mu_{\star}}.
\end{align*}

The minimizer is achieved by setting $\mathrm{K}^{?}=\mathrm{K}$, namely $\pi_{2k}=\mathrm{K}\otimes \nu$. It follows that
\begin{align*}
    \dfrac{\dd\pi_{2k}}{\dd (\mu_{\star}\otimes \nu_{\star})}=\frac{\dd (\mathrm{K}\otimes \nu_{\star})}{\dd (\mu_{\star}\otimes \nu_{\star})}=\frac{e^{ \varphi_k -c_{\varepsilon}}}{\int e^{ \varphi_k -c_{\varepsilon}}\dd \mu_{\star}}:=e^{ \varphi_k\oplus  \psi_k-c_{\varepsilon}}.
\end{align*}

In other words, we have $\psi_{k}(\by)=-\log \int_{\Omega} e^{ \varphi_{k}(\bx)-c_{\varepsilon}(\bx,\by)} \mu_{\star}(\dd\bx)$, which verifies the connections.

\section{Experimental Details} \label{appendix:experiment}

\paragraph{Reflection Implementations} \citet{reflected_diffusion_model} already gave a nice tutorial on the implementation of hypercube-related domains with image applications. For extensions to general domains, we provide our solutions in Algorithm \ref{reflection_alg}. The \emph{crucial component} is a \emph{Domain Checker} to verify if a point is inside or outside of a domain, and it appears to be quite computationally expensive. To solve this problem, we propose two solutions :

1) If there exists a computationally efficient conformal map that transforms a manifold into simple shapes, such as a sphere or square, we can apply simple rules to conclude if a proposal is inside a domain.

% If we can efficiently transform Cartesian coordinates into polar coordinates with a radius $\mathrm{R}^{\theta}$ for a certain angle $\theta$, where the boundary radius is denoted as $\mathrm{R}_0^{\theta}$, we can easily verify whether $\mathrm{R}^{\theta} \leq \mathrm{R}_0^{\theta}$. A classical example is the $d$-dimensional ball in $\mathbb{R}^d$.

2) If the first solution is expensive, we can \textbf{cache} the domain through a fine-grid mesh $\{\bX_{i,j,\cdots}\}_{i,j,\cdots}$. Then, we approximate the condition via $$\min_{i,j,\cdots}\ \text{Distance}(\tilde \bx_{k-1}, \{\bX_{i,j,\cdots}\}_{i,j,\cdots})\leq \text{threshold}.$$ 
With the parallelism in Torch or JAX, the above calculation can be quite efficient. Nevertheless, a finer grid leads to a higher accuracy but also induces more computations. We can also expect the curse of dimensionality in ultra-high-dimensional problems and simpler domains are more preferred in such cases. Moreover, if $\bx_{k+1}\notin \Omega$ in extreme cases, one may consider ad-hoc rules with an error that decreases as we anneal the learning rate. Other elegant solutions include slowing down the process near the boundary or %\citep{reflected_BM} or 
warping the geometry with a Riemannian metric \citep{Diffusion_constrained}.

\begin{algorithm}[ht]
\caption{Practical Reflection Operator}\label{reflection_alg}
\begin{algorithmic}
 
\STATE{Simulate a proposal $\tilde \bx_{k+1}$ via an SDE given $\bx_k\in\Omega$.}

\IF{\textbf{Domain Checker:} $\tilde \bx_{k+1}\in \Omega$}
\STATE{Set $\bx_{k+1}=\tilde \bx_{k+1}$}
\ELSE
\STATE{Search (binary) the boundary $\dot{\bx}_{k+1}\in\partial \Omega$, where $\dot{\bx}_{k+1}=\eta\bx_k +(1-\eta) \tilde \bx_{k+1}$ for $\eta\in(0,1)$.}
\STATE{Compute $\bm{\nu}=\tilde \bx_{k+1}-\dot{\bx}_{k+1}$ and the unit normal vector $\bn$ associated with $\dot{\bx}_{k+1}$.}
\STATE{Set $\bx_{k+1}=\dot{\bx}_{k+1}+\bm{\nu}-2\langle \bm{\nu}, \bn \rangle \bn$.}
\ENDIF
\end{algorithmic}
 
\end{algorithm}

\subsection{Domains of 2D Synthetic Data}

We consider input $t\in[0,1]$ and output $(x, y)\in\mathbb{R}^2$. The normal vector can be derived accordingly. 

\emph{Flower} (petals $p=5$ and move out length $m=3$)
\begin{align*}
   r &= \sin(2 \pi p t) + m, \quad x= r \cos(2\pi t), \quad y= r \sin(2\pi t).
\end{align*}

\emph{Heart}
\begin{align*}
    x = 16 \sin(2\pi t)^3, \quad y = 13 \cos(2\pi t) - 5 \cos(4 \pi t) - 2 \cos(6 \pi t) - \cos(8 \pi t).
\end{align*}
\emph{Octagon} ($c+1$ edges $(X_i, Y_i)_{i=0}^{c}$ with $(X_{c}, Y_{c})=(X_0, Y_0)$)
\begin{align*}
    r=ct -\floor{ct}; \quad x = (1 - r) \cdot X_{\floor{ct}} +  r \cdot X_{\floor{ct}+1}; \quad y = (1 - r) \cdot Y_{\floor{ct}} +  r \cdot Y_{\floor{ct}+1}.
\end{align*}

\subsection{How to initialize: Warm-up Study}
\label{how_to_init}

% Schr\"odinger bridge (SB) is commonly initialized from standard diffusion models \citep{DSB, forward_backward_SDE} in large-scale experiments, however,

% For the initialization for reflected SB (rSB) remains unclear. In this section, we show rSB with VP-SDE benefits from a warm-up model using standard Brownian motion, while rSB with VE-SDE may not.

Consider a Langevin diffusion (LD) and a reflected Langevin (RLD) with score functions $S_1$ and $S_2$:
\begin{align*}
    \text{LD}:\quad\dd\bx_t &= S_1(\bx_t)\dd t + \dd  \mathbf{w}_t,\quad\ \  \ \qquad \qquad\bx_t\in\mathbb{R}^d\\
    \text{RLD}:\quad\dd\bx_t &= S_2(\bx_t)\dd t + \dd  \mathbf{w}_t+ \bn(\bx)\dd \mathbf{L}_t,\quad \bx_t\in\Omega.
\end{align*}
LD converges to $\mu_1\propto e^{S_1(\bx)}$ and RLD converges to $\mu_2\propto e^{S_2(\bx)}\mathbf{1}_{\bx\in\Omega}$ \citep{sebastien_bubeck, Andrew_Lamperski_21_COLT} as $t\rightarrow \infty$. In other words, inheriting the score function $S_1$ from LD to RLD (by setting $S_1=S_2$) yields the desired invariant distribution $\mu_1\mathbf{1}_{\bx\in\Omega}$.

Denote by RVP (or RVE) the VP-SDE (or VE-SDE) in reflected SB. One can easily show in Table \ref{sample-table} that VP (or RVP) converges approximately to a (or truncated) Gaussian prior within a practical training time, which implies an \emph{unconstrained VP-SDE diffusion model is a good warm-up candidate} for reflected SB. Empirically, we are able to verify this fact through 2D synthetic data.

However, this may not be the case for VE because it converges to a uniform measure in $\mathbb{R}^d$ but only obtains an approximate Gaussian in a short time. By contrast, RVE converges to the invariant uniform distribution much faster because it doesn't need to fully explore $\mathbb{R}^d$. This implies \emph{initializing the score function from VE-based diffusion models for reflected SB may not be a good choice}. Instead, we use RVE-based diffusion models  \citep{reflected_diffusion_model} as the warm-up. 

% the same given that the reflected process has converged approximately. %and the distribution can be viewed as simulating from an unreflected Langevin diffusion and then truncating it to $\Omega$.

% Empirically, VP-SDE is often chosen to converge approximately to the invariant Gaussian distribution (prior) within a given period. Invoking the convergence of reflected Langevin to the truncated Gibbs measure shows that inheriting the scores from unconstrained diffusion models provides the same score function within the domain.

% However, one can easily show that the invariant distribution of the unconstrained VE-SDE is a uniform distribution in $\mathbb{R}^d$ given $t\rightarrow\infty$ and 

% , while $\mu_{\star}$ (data distribution) always evolves approximately to a Gaussian distribution in a short time. 

% converges to a Gaussian invariant distribution with infinite variance given long enough time, implying a \emph{uniform distribution} in the bounded domain $\Omega$, as studied in \citep{reflected_diffusion_model}. 

% One can easily show that reflected Brownian motion converges close to a uniform distribution in $\Omega$ in a short time, while the unconstrained Brownian motion only approaches a small-variance Gaussian in a short time and is far away from the invariant measure (uniform in $\mathbb{R}^d$). This implies that the score function inherited from unconstrained diffusion models may not be a good enough warm-up for reflected SB. Similar findings are studied in \citet{reflected_diffusion_model}. 

\begin{table}[t]
\caption{Densities using (relected) Langevin diffusion with a practical running time and infinite time.}
\footnotesize
\label{sample-table}
\begin{center}
\begin{tabular}{lll|lll}
\multicolumn{1}{c}{\bf SDE}  &\multicolumn{1}{c}{\bf Practical Time} &\multicolumn{1}{c}{\bf Infinite Time} & \multicolumn{1}{|c}{\bf SDE}  &\multicolumn{1}{c}{\bf Practical Time} &\multicolumn{1}{c}{\bf Infinite Time}
\\ \hline 

VP & Approx. \textcolor{teal}{Gaussian} & \textcolor{teal}{Gaussian} & RVP & Approx. \textcolor{teal}{Truncated Gaussian} & \textcolor{teal}{Truncated Gaussian} \\
VE & Approx. \textcolor{red}{Gaussian} & \textcolor{red}{Uniform} & RVE & Approx. 
 \textcolor{teal}{Truncated Uniform} & \textcolor{teal}{Truncated Uniform} \\
\end{tabular}
\end{center}
\end{table}

\subsection{Generation of Image Data} 
\textbf{Datasets}.
Both CIFAR-10 and ImageNet 64$\times$64 are obtained from public resources. %\footnote{
% CIFAR-10: \url{https://www.cs.toronto.edu/~kriz/cifar.html}. \\
% Resized ImageNet 64$\times$64: \url{https://image-net.org/download-images.php}.
% }
All RGB values are between $[0, 1]$. The domain is $\Omega=[0,1]^d$, where $d=3\times 32 \times 32$ for the CIFAR-10 task, $d=3\times 64 \times 64$ for the ImageNet task, $d=1\times 32 \times 32$ for the MNIST task.

\textbf{SDE}.
We use reflected VESDE for the reference process due to its simplicity \citep{reflected_diffusion_model}, and it helps with facilitating the warmup training.
The SDE is discretized into 1000 steps.
The initial and the terminal scale of the diffusion are
 $\sigma_{\min}=0.01$ and $\sigma_{\max}=5$ respectively.
The prior reference is set as the uniform distribution on $\Omega$.

\textbf{Training}.
The alternate training Algorithm \ref{primal_dyanmic_IPF} can be accelerated with proper initialization, and the pre-training of the backward score model is critical for successfully training the model.
At the warmup phase, the forward score is set as zero and only the backward score model is trained by inheriting the setup in the reflected SGM \citep{reflected_diffusion_model}.
The learning rate is $10^{-5}$.
To improve the training efficiency and stabilize the full path-based training target in Algorithm \ref{primal_dyanmic_IPF}, 
We use Exponential Moving Average (EMA) in the training with the decay rate of 0.99 \citep{score_matching, vahdat2021score, reflected_diffusion_model}.

\textbf{Neural networks}.
As the high accuracy inference task relies more on the backward score model than the forward score model, the backward process is equipped with more advanced and larger structure. 
The backward score function uses NCSN++ \citep{score_sde} for the CIFAR-10 task and ADM \citep{SGMS_beat_GAN} for the ImageNet task.
The NCSN++ network has 107M parameters.
The ADM network has 295M parameters. For MNIST, a smaller U-Net structure with 1.3M parameters (2 attention heads per attention layer, 1 residual block per downsample, 32 base channels) is used for both forward and backward processes.
Many previous studies have verified the success of these neural networks in the diffusion based generative tasks \citep{score_sde, reflected_diffusion_model, forward_backward_SDE}.
The forward score function is modeled using a simpler U-Net %\citep{ronneberger2015u} % try less hyperparameters
with 62M parameters.

\textbf{Inference}.
In both CIFAR-10 and ImageNet 64$\times$64 tasks, images are generated unconditionally, and the quality of the samples is evaluated using Frechet Inception Distance (FID) over 50,000 samples \citep{heusel2017gans,score_sde}. In MNIST, the the quality of the samples is evaluated using Negative Log-Likelihood (NLL).
Predictor-Corrector using reflected Langevin dynamics is used to further improve the result which does not require any change of the model structure \citep{score_sde, forward_backward_SDE, reflected_diffusion_model, sebastien_bubeck}.
\begin{gather*}
\textbf{x}_t' 
= \mathrm{reflection} \Big( \textbf{x}_t 
+ \sigma_t \textbf{s}(t, \textbf{x}_t) 
+ \sqrt{2 \sigma_t} \varepsilon \Big), \quad \varepsilon \sim N(\mathbf{0}, I) \\
\textbf{s}(t, \textbf{x}_t) 
= \frac{1}{g} [ \overrightarrow\bz^{\theta}_t (t, \textbf{x}_t) 
+ \overleftarrow\bz^{\omega}_t (t, \textbf{x}_t)], \quad
\sigma_t = \frac{2 r_{\mathrm{SNR}}^2 g^2 \|\varepsilon\|^2 }
    {\| \textbf{s} (t, \textbf{x}_t) \|^2 }
\end{gather*}
where $\overrightarrow\bz^{\theta}_t, \overleftarrow\bz^{\omega}_t$
are the backward and forward score functions as in Algorithm \ref{primal_dyanmic_IPF}.

The likelihood of the diffusion model follows the probabilistic flow neural ODE \citep{score_sde, forward_backward_SDE, reflected_diffusion_model}
\begin{gather*}
d \textbf{x}_t = \big[ f(\textbf{x}_t, t) - \frac{1}{2} g(t)^2 \textbf{s}(t, \textbf{x}_t) \big] dt
:= \tilde{f}(\textbf{x}_t, t)  dt \\
\log p_0(\textbf{x}_0)
= \log p_T(\textbf{x}_T) 
    + \int_0^T \nabla \cdot \tilde{f}(\textbf{x}_t, t) dt
\end{gather*}

\subsection{Optimal Transport May Help Reduce NFEs}

% Diffusion models suffer from slow generations and require a large number of function evaluations (NFEs) to generate high-fidelity images. In addition to the successful distillation techniques \citep{Progressive_distillation, Consistency_models} in one- or few-step generations, optimal transport also obtains wide attention in both theory and practice.

To demonstrate the effectiveness of OT in reducing the number of function evaluations (NFEs), we study generations based on different NFEs on a simulation example and the MNIST dataset and compare the reflected Schr\"odinger bridge with reflected diffusion (implicit) \citep{Diffusion_constrained}. We use the same setup (e.g. implicit training loss with the same training budget) to be consistent except that reflected SB uses a well-optimized forward network $\textcolor{red}{\overrightarrow\bz_t^{\theta}}$ to train the backward network $\textcolor{teal}{\overleftarrow \bz_t^{\omega}}$ while reflected diffusion can be viewed as the first stage of SB training by fixing $\textcolor{red}{\overrightarrow\bz_t^{\theta}}\equiv \textbf{0}$.

We observe in Figure \ref{NFE_spiral_flower} that in the regime of NFE=20, both reflected diffusion (implicit) and reflected SB demonstrate remarkable performance in the simulation example. Despite the inherent compromise in sample quality with smaller NFEs, our investigation revealed that a well-optimized $\textcolor{red}{\overrightarrow\bz_t^{\theta}}$ significantly contributes to training $\textcolor{teal}{\overleftarrow \bz_t^{\omega}}$ compared to the baseline with $\textcolor{red}{\overrightarrow\bz_t^{\theta}}\equiv \textbf{0}$, leading to an improved sample quality even in cases where NFE is set to 10 and 12. Similar findings are observed in the MNIST dataset in Figure \ref{NFE_MNIST}. We use the same setup as before, finding NFE=100 delivers reasonable performance in both reflected diffusion (implicit) and reflected SB. Decreasing NFE to 50 shows slightly stronger performance retention in reflected SB.

% \begin{figure}[!ht]
%   \centering
%     \subfigure{\includegraphics[scale=0.27]{}} 
%   \subfigure{\includegraphics[scale=0.27]{}}
%   \vskip -0.19in
%   \subfigure{\includegraphics[scale=0.27]
%   {}} 
%   \subfigure{\includegraphics[scale=0.27]{}}
%   \vskip -0.19in
%   \subfigure{\includegraphics[scale=0.27]
%   {}} 
%   \subfigure{\includegraphics[scale=0.27]{}}
%   % \vskip -0.19in
%   % \subfigure{\includegraphics[scale=0.27]{}} 
%   % \subfigure{\includegraphics[scale=0.27]{}} 
%   \vskip -0.19in
%   \subfigure{\includegraphics[scale=0.27]{}}
%   \subfigure{\includegraphics[scale=0.27]{}}
%     \vskip -0.15in
%   \caption{xxx}\label{NFE_simulation}
%   \vspace{-1em}
% \end{figure}

\begin{figure}[!ht]
  \centering
  \vskip -0.1in
    \subfigure{\includegraphics[scale=0.4]{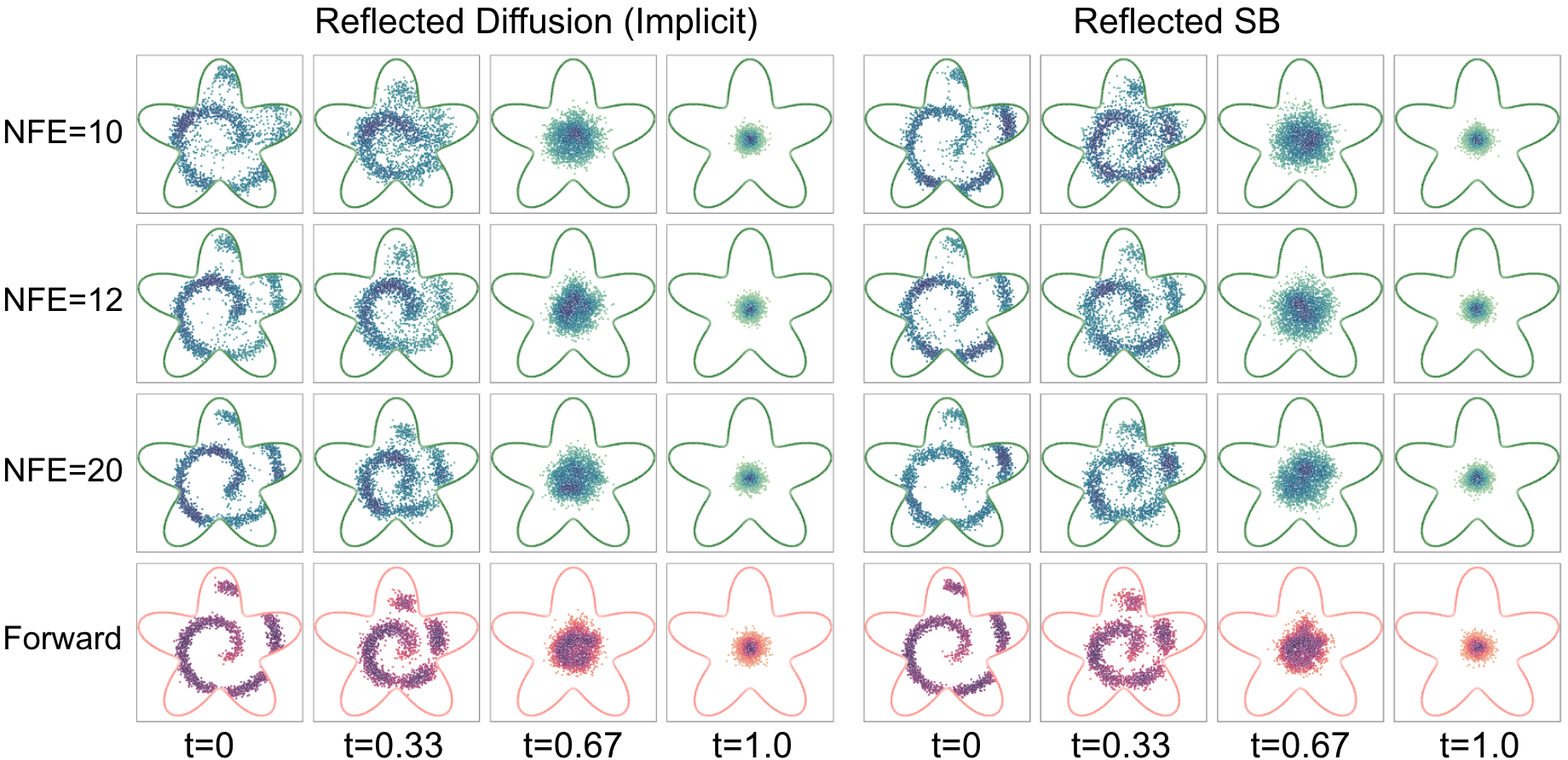}} 
    \vskip -0.1in
  % \caption{Reflected Schr\"odinger bridge on three custom domains: flower, octagon, and heart.}\label{rsb_3_domains}
  \caption{Reflected Schr\"odinger bridge v.s. reflected diffusion (implicit) based on different NFEs.}\label{NFE_spiral_flower}
  \vspace{-1em}
\end{figure}

\begin{figure}[!ht]
  \centering
    \subfigure{\includegraphics[scale=0.4]{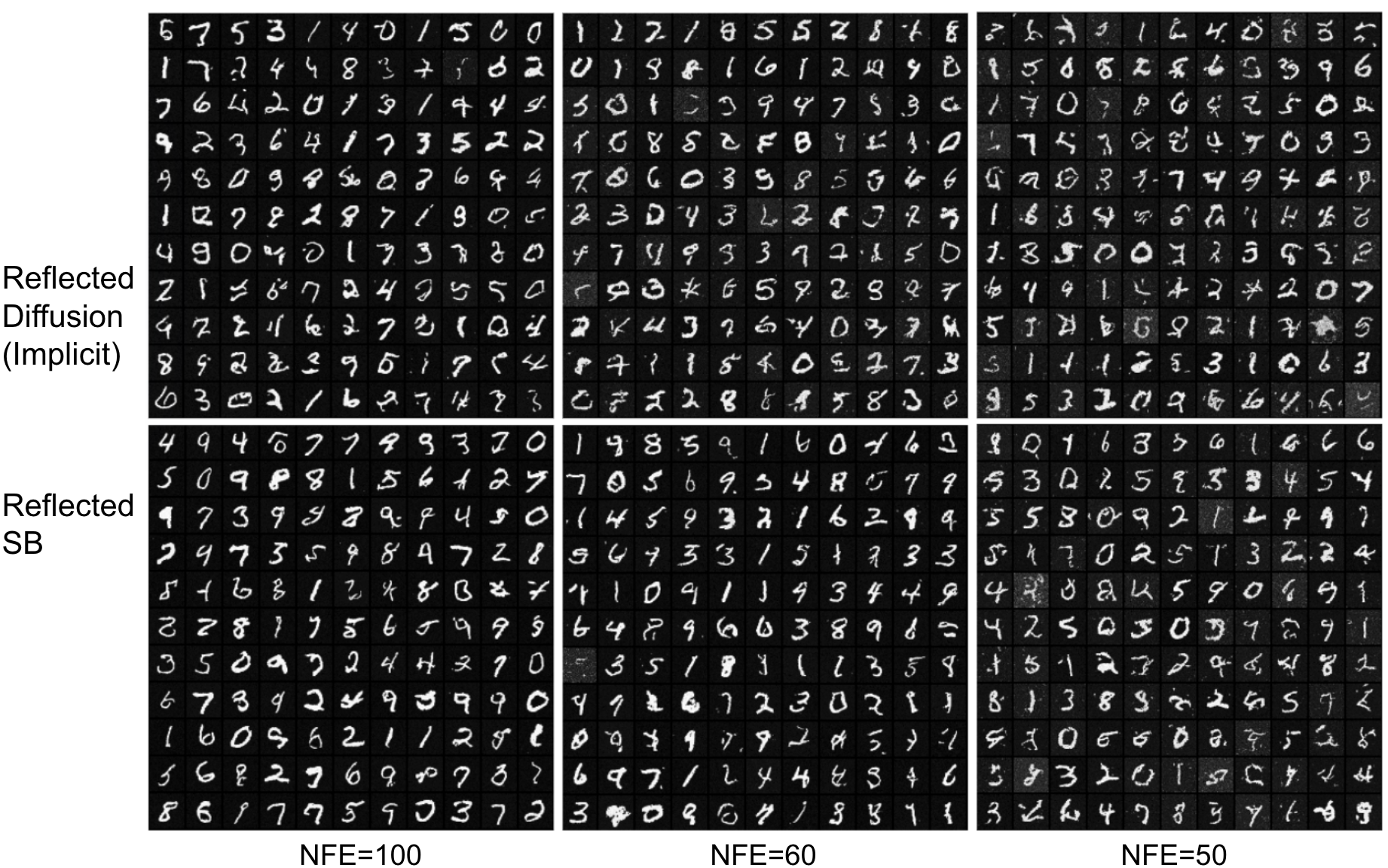}} 
  % \caption{Reflected Schr\"odinger bridge on three custom domains: flower, octagon, and heart.}\label{rsb_3_domains}
  \caption{Reflected Schr\"odinger bridge v.s. reflected diffusion (implicit) based on MNIST}\label{NFE_MNIST}
  \vspace{-1em}
\end{figure}

$\newline$
$\newline$
$\newline$
$\newline$
$\newline$

\newpage

% \begin{figure}[!ht]
%   \centering
%   \subfigure{\includegraphics[scale=0.3]{}} 
%   \subfigure{\includegraphics[scale=0.3]{}}
%   \subfigure{\includegraphics[scale=0.3]{}} \\
%   \vskip -0.13in
%   \subfigure{\includegraphics[scale=0.3]{}} 
%   \subfigure{\includegraphics[scale=0.3]{}} 
%   \subfigure{\includegraphics[scale=0.3]{}}
%     \vskip -0.15in
%   \caption{xxx}\label{NFE_simulation}
%   \vspace{-1em}
% \end{figure}

\begin{figure}[H]
\centering
\vskip -0.1in
\subfigure{\includegraphics[width=\textwidth]{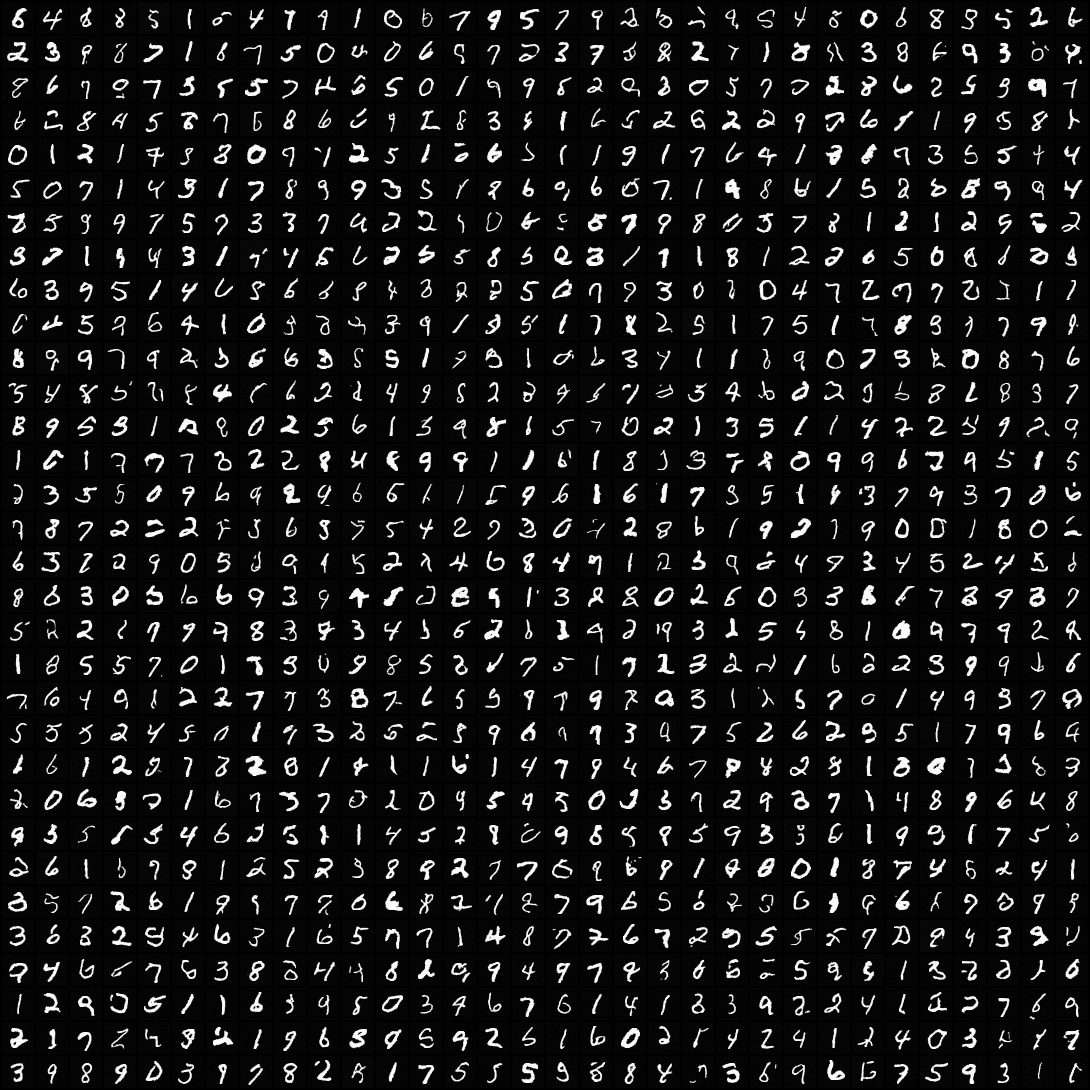}} 
\vskip -0.1in
\caption{Generated samples via reflected SB on MNIST.
}
\end{figure}

\begin{figure}[H]
\centering
\subfigure{\includegraphics[width=\textwidth]{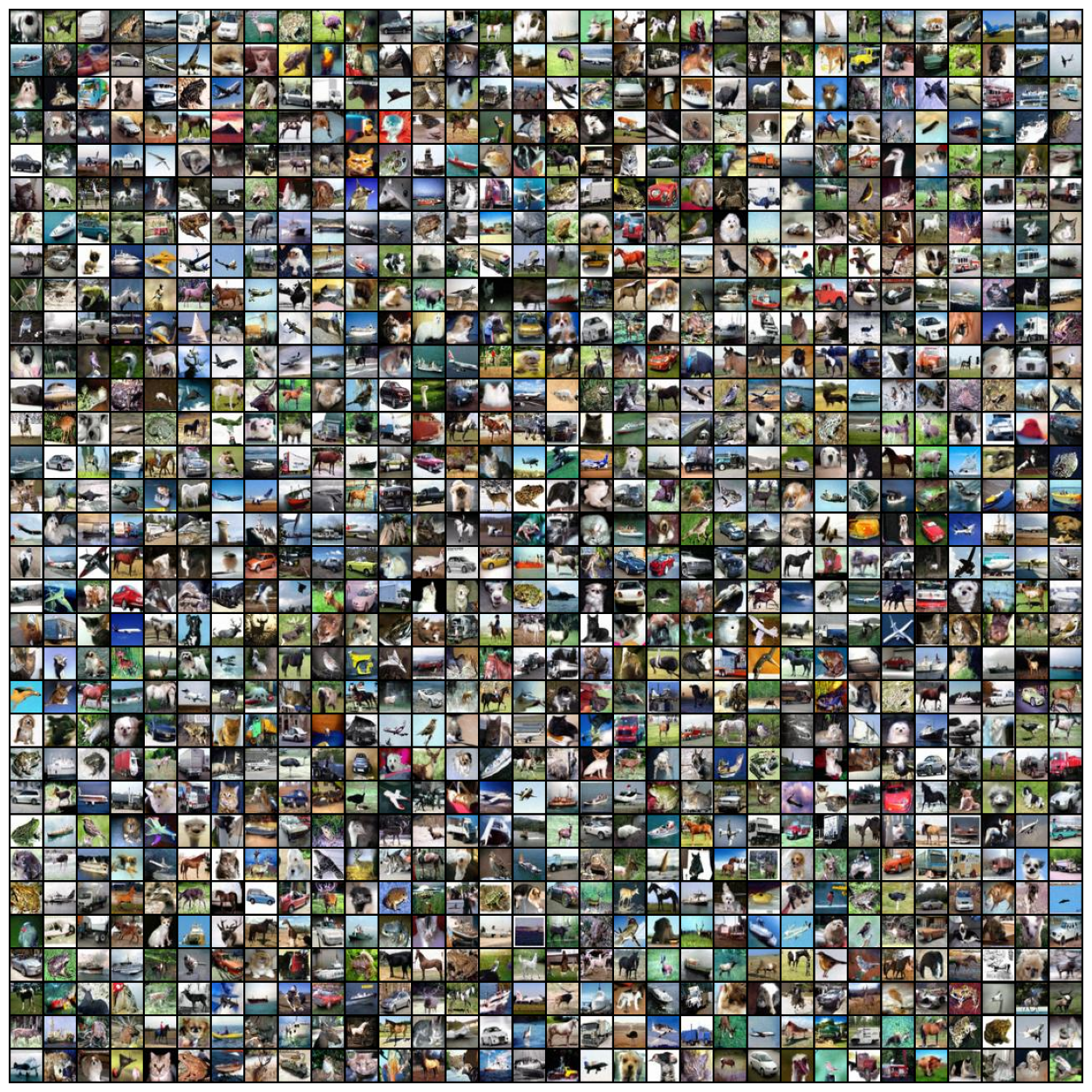}} 
\caption{Generated samples via reflected SB on CIFAR-10.
}
\label{fig:cifar_demo}
\end{figure}

\begin{figure}[H]
\centering
\subfigure{\includegraphics[width=\textwidth]{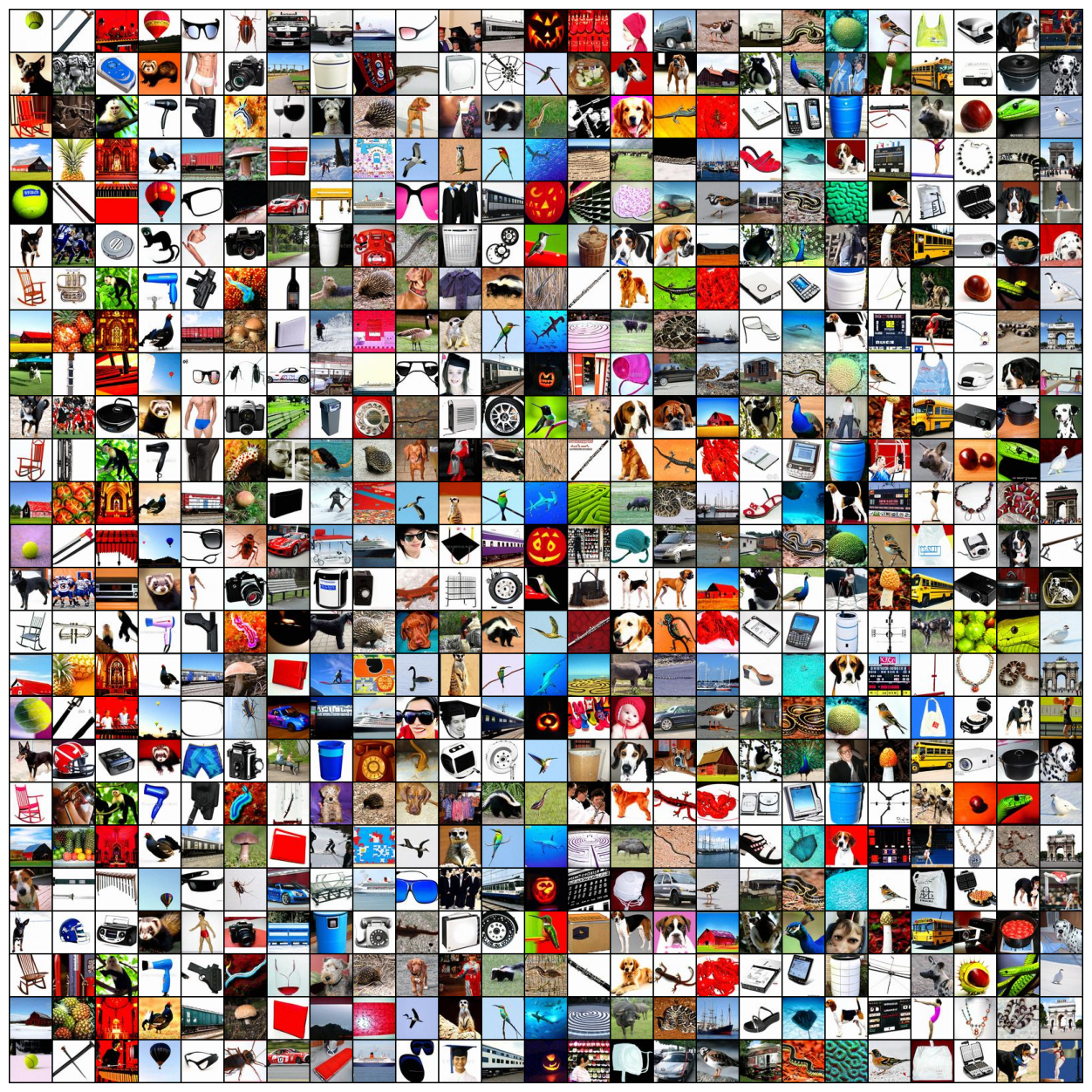}} 
\caption{Generated samples via reflected SB on ImageNet-64.
}
\label{fig:imagenet_demo}
\end{figure}

\end{document}